\def\red{\textcolor{red}}
\theoremstyle{theorem}
\newtheorem{theorem}{Theorem}
\numberwithin{theorem}{section}
\theoremstyle{lemma}
\newtheorem{lemma}[theorem]{Lemma}
\theoremstyle{remark}
\newtheorem{remark}[theorem]{Remark}
\theoremstyle{corollary}
\theoremstyle{proposition}
\newtheorem{proposition}[theorem]{Proposition}
\theoremstyle{definition}
\newtheorem{definition}[theorem]{Definition}
\theoremstyle{claim}
\DeclareMathOperator{\Tr}{Tr}
\newcommand{\GNMR}{{\texttt{GNMR}}\xspace}
\newcommand{\GNIMC}{{\texttt{GNIMC}}\xspace}
\DeclareMathOperator*{\argmax}{arg\,max}
\DeclareMathOperator*{\argmin}{arg\,min}
\newenvironment{psmallmatrix}
  {\bigl(\begin{smallmatrix}}
  {\end{smallmatrix}\bigr)}
\begin{document}

\title{Inductive Matrix Completion: \\No Bad Local Minima and a Fast Algorithm}	

%\title{Provable one-line Gauss-Newton algorithm for low rank matrix recovery problems with scanty observations}
\author{Pini Zilber\footnotemark[1]\thanks{Faculty of Mathematics and Computer Science, Weizmann Institute of Science \newline (\href{mailto:pini.zilber@weizmann.ac.il}{pini.zilber@weizmann.ac.il}, \href{mailto:boaz.nadler@weizmann.ac.il}{boaz.nadler@weizmann.ac.il})}
\and Boaz Nadler\footnotemark[1]}
\date{}

\maketitle
\nopagebreak

\begin{abstract}
The inductive matrix completion (IMC) problem is to recover a low rank matrix from few observed entries while incorporating prior knowledge about its row and column subspaces. 
In this work, we make three contributions to the IMC problem: (i) we prove that under suitable conditions, the IMC optimization landscape has no bad local minima;
(ii) we derive a simple scheme with theoretical guarantees to estimate the rank of the unknown matrix;
and (iii) we propose \GNIMC, a simple Gauss-Newton based method to solve the IMC problem, analyze its runtime and derive recovery guarantees for it.
The guarantees for \GNIMC are sharper in several aspects than those available for other methods, including a quadratic convergence rate, fewer required observed entries and stability to errors or deviations from low-rank. Empirically, given entries observed uniformly at random, \GNIMC recovers the underlying matrix substantially faster than several competing methods.
\end{abstract}

%%%%%%%%%%%%%%%%%%%%%%%%%%%%%%%%%%%%%%%%%%%%%%%%%
%%%%%%%%%%%%%%%%%%%%%%%%%%%%%%%%%%%%%%%%%%%%%%%%%
\section{Introduction}
In low rank matrix completion, a well-known problem that appears in various applications, the task is to recover a rank-$r$ matrix $X^*\in \mathbb R^{n_1\times n_2}$ given few of its entries, where $r \ll \min\{n_1,n_2\}$.
In the problem of inductive matrix completion (IMC), beyond being low rank, $X^*$ is assumed to have additional structure as follows: its columns belong to the range of a known matrix $A\in \mathbb R^{n_1\times d_1}$ and its rows belong to the range of a known matrix $B\in \mathbb R^{n_2\times d_2}$, where $r \leq d_1 \leq n_1$ and $r \leq d_2 \leq n_2$. Hence, $X^*$ may be written as $X^* = AM^*B^\top$, and the task reduces to finding the smaller matrix $M^*\in \mathbb R^{d_1\times d_2}$.
In practice, the low rank and/or the additional structure assumptions may hold only approximately, and in addition, the observed entries may be corrupted by noise.

The side information matrices $A,B$ may be viewed as feature representations. For example, in movies recommender systems, the task is to complete a matrix $X^*$ of the ratings given by $n_1$ users to $n_2$ movies.
The columns of $A, B$ may correspond to viewers' demographic details (age, gender) and movies' properties (length, genre), respectively \cite{abernethy2009new,menon2011response,chen2012svdfeature,yao2019collaborative}.
The underlying assumption in IMC is that uncovering the relations between the viewers and the movies in the feature space, as encoded in $M^*$, suffices to deduce the ratings $X^* = AM^*B^\top$.
Other examples of IMC include multi-label learning \cite{xu2013speedup,si2016goal,zhang2018fast}, disease prediction from gene/miRNA/lncRNA data \cite{natarajan2014inductive,chen2018predicting,lu2018prediction} and link prediction in networks \cite{menon2011link,chiang2018using}.

If the side information matrices allow for a significant dimensionality reduction, namely $d \ll n$ where $d = \max\{d_1,d_2\}$ and $n = \max\{n_1,n_2\}$, recovering $X^*$ is easier from both theoretical and computational perspectives.
From the information limit aspect, the minimal number of observed entries required to complete a matrix of rank $r$ with side information scales as $\mathcal O(d r)$, compared to $\mathcal O(n r)$ without side information. Similarly, the number of variables scale as $d$ rather than as $n$, enabling more efficient computation and less memory.
Finally, features also allow completion of rows and columns of $X^*$ that do not have even a single observed entry.
Unlike standard matrix completion which requires at least $r$ observed entries in each row and column of $X^*$, in IMC the feature vector is sufficient to inductively predict the full corresponding row/column; hence the name 'Inductive Matrix Completion'.

%%% the following figure should be on the second page
\renewcommand{\arraystretch}{1.5}
\begin{table}[t]
\caption{Recovery guarantees for the algorithms: \texttt{Maxide} \cite{xu2013speedup}, \texttt{AltMin} \cite{jain2013provable}, \texttt{MPPF} \cite{zhang2018fast} and \GNIMC (this work), for an $n\times n$ matrix $X^*$ of rank $r$ and condition number $\kappa$, and $d\times d$ side information matrices of incoherence $\mu$, given a fixed target accuracy. Here $f(\kappa, \mu)$ is some function of $\kappa$ and $\mu$. For a more detailed comparison, see \cref{sec:theory_comparison}.}
\centering
 \begin{tabular}{c  c  c  c  c}
 \hline
 Algorithm & \begin{tabular}{@{}c@{}}Sample complexity \\ $|\Omega| \gtrsim ...$\end{tabular} &   \begin{tabular}{@{}c@{}}Requires \\ incoherent $X^*$?\end{tabular} & \begin{tabular}{@{}c@{}}Error \\ decay rate\end{tabular} & \begin{tabular}{@{}c@{}}Time complexity \\ $\sim \mathcal O(...)$ \end{tabular} \\
 \hline
 \texttt{Maxide} & $\mu^2 d r [1+\log(d/r)] \log n$ & yes & unspecified & unspecified \\
 \hline
 \texttt{AltMin} & $\kappa^2 \mu^4 d^2 r^3 \log n$ & no & unspecified & unspecified \\
 \hline
 \texttt{MPPF} & $(\kappa r + d) \kappa^2 \mu^2 r^2 \log d \log n$ & yes & linear & $f(\kappa, \mu)\cdot n^{3/2} d^2 r^3 \log d \log n$ \\
 \hline
 \texttt{GNIMC} (ours) & $\mu^2 d^2 \log n$ & no & quadratic & $ \mu^2 d^3 r \log n$ \\
 \hline
 \end{tabular}
 \label{table:theory_summary}
\end{table}

Several IMC methods were devised in the past years. Perhaps the most popular ones are nuclear norm minimization \cite{xu2013speedup,lu2018prediction} and alternating minimization \cite{jain2013provable,natarajan2014inductive,zhong2015efficient,chen2018predicting}. Another recent method is multi-phase Procrustes flow \cite{zhang2018fast}. While nuclear norm minimization enjoys strong recovery guarantees, it is computationally slow. Other methods are faster, but the number of observed entries for their recovery guarantees to hold depends on the condition number of $X^*$.
%A key question remains open: is there a method that is both fast and does not suffer from such theoretical limitations?
% also, the condition number issue forms a gap between theory and practice...

In this work, we make three contributions to the IMC problem.
First, by deriving an RIP (Restricted Isometry Property) guarantee for IMC, we prove that under certain conditions the optimization landscape of IMC is benign (\cref{thm:IMC_landscape}).
Compared to a similar result derived in \cite{ghassemi2018global}, our guarantee requires significantly milder conditions, and in addition, addresses the vanilla IMC problem rather than a suitably regularized one.

Second, we propose a simple scheme to estimate the rank of $X^*$ from its observed entries and the side information matrices $A,B$. We also provide a theoretical guarantee for the accuracy of the estimated rank (\cref{thm:rankEstimate}), which holds for either exactly or approximately low rank $X^*$ and with noisy measurements.

Third, we propose a simple Gauss-Newton based method to solve the IMC problem, that is both fast and enjoys strong recovery guarantees. Our algorithm, named \GNIMC (Gauss-Newton IMC), is an adaptation of the \GNMR algorithm \cite{zilber2022gnmr} to IMC.
At each iteration, \GNIMC solves a least squares problem; yet, its per-iteration complexity is of \textit{the same order as gradient descent}. As a result, empirically, our tuning-free \GNIMC implementation is 2 to 17 times faster than competing algorithms in various settings, including ill-conditioned matrices and very few observations, close to the information limit.

On the theoretical front, we prove that given a standard incoherence assumption on $A,B$ and sufficiently many observed entries sampled uniformly at random, \GNIMC recovers $X^*$ at a \textit{quadratic} convergence rate (\cref{thm:GNIMC_guarantee}). As far as we know, this is the only available quadratic convergence rate guarantee for any IMC algorithm.
In addition, we prove that \GNIMC is stable against small arbitrary additive error (\cref{thm:GNIMC_guarantee_noisy}), which may originate from (i) inaccurate measurements of $X^*$, (ii) inaccurate side information, and/or (iii) $X^*$ being only approximately low rank.

Remarkably, our guarantees do not require $X^*$ to be incoherent, and the required number of observations depends only on properties of $A,B$ and not on those of $X^*$.
%(up to an unavoidable factor of $\log(n)$).
Other guarantees have similar dependence on $A,B$, but in addition either depend on the condition number of $X^*$ and/or require incoherence of $X^*$, see \cref{table:theory_summary}.
Relaxing the incoherence assumption on $X^*$ is important, since $X^*$ is only partially observed and such an assumption cannot be verified. 
%As a result, the reconstructed matrix is not always reliable.
In contrast, the matrices $A,B$ are known and their incoherence can be verified.
%Second, in practice, if $X^*$ is not incoherent, one cannot fix it. In contrast, it is often possible to increase the incoherence of $A,B$, e.g.~by ignoring problematic rows or by extracting new features.

%Finally, in \cref{sec:discussion} we discuss how our findings correspond with recent results on the role of imbalance regularization. Factorization-based methods in various matrix recovery problems seek a pair of factor matrices $U, V$ such that $UV^\top = X^*$ rather than a direct estimate $X$. Regularly, these methods penalize the imbalance of $U,V$ throughout the iterations \cite{sun2016guaranteed,tu2016low,zheng2016convergence,yi2016fast,park2018finding,zhang2018fast,li2020non,chen2020nonconvex,chen2020noisy,chen2020bridging}. Our conjecture is that if the problem in hand satisfies an RIP, there is no need to keep the factor matrices $U, V$ balanced. In a related note, the condition number of $X^*$ does not affect the difficulty of the problem. 
%one does not have to worry about the imbalance of the factor matrices $U, V$ throughout the iterations of factorization-based algorithms, as well as on the condition number of $X^*$.

%\vspace{1em}
%\noindent
{\bf Notation.}
The $i$-th largest singular value of a matrix $X$ is denoted by $\sigma_i = \sigma_i(X)$.
The condition number of a rank-$r$ matrix is denoted by $\kappa = \sigma_1/\sigma_r$.
The $i$-th standard basis vector is denoted by $e_i$, and the Euclidean norm of a vector $x$ by $\|x\|$.
The spectral norm of a matrix $X$ is denoted by $\|X\|_2$, its Frobenius norm by $\|X\|_F$,
%its $i$'th row by $X^{(i)}$, its largest row norm by $\|X\|_{2,\infty} \equiv \max_i \|X^{(i)}\|$,
its largest row norm by $\|X\|_{2,\infty} \equiv \max_i \|X^\top e_i\|$,
its largest entry magnitude by $\|X\|_\infty \equiv \max_{i,j} |X_{ij}|$,
and the set of its column vectors by $\text{col}(X)$.
A matrix $X$ is an isometry if $X^\top X = I$, where $I$ is the identity matrix.
%The transpose of the inverse of $A$ is denoted by $A^{-\top} \equiv (A^{-1})^\top$.
Denote by $\mathcal P_{AB}: \mathbb R^{n_1\times n_2} \to \mathbb R^{n_1\times n_2}$ the projection operator into the row and column spaces of $A,B$, respectively, such that $\mathcal P_{AB}(X) = AA^\top X BB^\top$ if $A,B$ are isometries.
Denote by $\mathcal P_\Omega: \mathbb R^{n_1\times n_2} \to \mathbb R^{n_1\times n_2}$ the sampling operator that projects a matrix in $\mathbb R^{n_1\times n_2}$ onto an observation set $\Omega \subseteq [n_1]\times [n_2]$, such that $[\mathcal P_\Omega(X)]_{ij} = X_{ij}$ if $(i,j)\in \Omega$ and $0$ otherwise. Denote by $\text{Vec}_\Omega(X) \in \mathbb R^{|\Omega|}$ the vector with the entries $X_{ij}$ for all $(i,j)\in \Omega$. Finally, denote by $p = |\Omega|/(n_1n_2)$ the sampling rate of $\Omega$.

%%%%%%%%%%%%%%%%%%%%%%%%%%%%%%%%%%%%%%%%%%%%%%%%%
%%%%%%%%%%%%%%%%%%%%%%%%%%%%%%%%%%%%%%%%%%%%%%%%%
\section{Problem Formulation} \label{sec:problem}
Let $X^* \in \mathbb R^{n_1\times n_2}$ be a matrix of rank $r$.
For now we assume $r$ is known; in \cref{sec:rankEstimate} we present a scheme to estimate $r$, and prove its accuracy.
Assume $\Omega \subseteq [n_1]\times [n_2]$ is uniformly sampled and known, and let $Y = \mathcal P_\Omega(X^* + \mathcal E)$ be the observed matrix where $\mathcal E$ is additive error.
In the standard matrix completion problem, the goal is to solve
\begin{align}\label{eq:MC}
\tag{MC}
\min_X \|\mathcal P_\Omega(X) - Y\|_F^2 \quad \text{s.t. } \text{rank}(X) \leq r .
\end{align}
%A large enough sample complexity $|\Omega|$ is insufficient to guarantee that $X^*$ is the unique solution to these problems: consider, for example, $X^* = e_ie_j^\top$, which requires observing the $(i,j)$-th entry for exact recovery. To overcome this problem, a common additional assumption is incoherence of $X^*$ \cite{candes2009exact,keshavan2010matrix}:

In IMC, in addition to the observations $Y$ we are given two side information matrices $A \in \mathbb R^{n_1\times d_1}$ and $B \in \mathbb R^{n_2\times d_2}$ with $r \leq d_i \leq n_i$ for $i=1,2$, such that
\begin{align}\label{eq:sideInformation}
\text{col}(X^*) \subseteq \text{span col}(A),\;\; \text{col}(X^{*\top}) \subseteq \text{span col}(B) .
\end{align}
%where $\text{col}(X^*)$ are the columns of $X^*$.
Note that w.l.o.g., we may assume that $A$ and $B$ are isometries, $A^\top A = I_{d_1}$ and $B^\top B = I_{d_2}$, as property \eqref{eq:sideInformation} is invariant to orthonormalization of the columns of $A$ and $B$.
Standard matrix completion corresponds to $d_i = n_i$ with the trivial side information $A = I_{n_1}$, $B = I_{n_2}$.
A common assumption in IMC is $d_i \ll n_i$, so that the side information is valuable.
Note that beyond allowing for (potentially adversarial) inaccurate measurements, $\mathcal E$ may also capture violations of the low rank and the side information assumption \eqref{eq:sideInformation}, as we can view $X^* + \mathcal E$ as the true underlying matrix whose only first component, $X^*$, has exact low rank and satisfies \eqref{eq:sideInformation}.

Assumption \eqref{eq:sideInformation} implies that $X^* = AM^*B^\top$ for some rank-$r$ matrix $M^*\in \mathbb R^{d_1\times d_2}$. The IMC problem thus reads
\begin{align}
\tag{IMC} \label{eq:IMC}
&\min_M \|\mathcal P_\Omega(AMB^\top) - Y\|_F^2 \quad \text{s.t. } \text{rank}(M) \leq r .
\end{align}
Some works on IMC \cite{xu2013speedup,zhang2018fast} assume that both $X^*$ and $A,B$ are incoherent, namely have small incoherence, defined as follows \cite{candes2009exact,keshavan2010matrix}.
\begin{definition}[$\mu$-incoherence]\label{def:incoherence}
A matrix $X \in \mathbb R^{n_1\times n_2}$ of rank $r$ is $\mu$-incoherent if its Singular Value Decomposition (SVD), $U \Sigma V^\top$ with $U \in \mathbb R^{n_1\times r}$ and $V \in \mathbb R^{n_2\times r}$, satisfies
\begin{align*}
\|U\|_{2,\infty} \leq \sqrt{\mu r/n_1} \,\,\mbox{ and }\, 
\|V\|_{2,\infty} \leq \sqrt{\mu r/n_2}.
\end{align*}
\end{definition}
However, for IMC to be well-posed, $X^*$ does not have to be incoherent, and it suffices for $A,B$ to be incoherent \cite{jain2013provable}.
In case $A$ and $B$ are isometries, their incoherence assumption corresponds to bounded row norms, $\|A\|_{2,\infty} \leq \sqrt{\mu d_1/n_1}$ and $\|B\|_{2,\infty} \leq \sqrt{\mu d_2/n_2}$. 

%%%%%%%%%%%%%%%%%%%%%%%%%%%%%%%%%%%%%%%%%%%%%%%%%
%%%%%%%%%%%%%%%%%%%%%%%%%%%%%%%%%%%%%%%%%%%%%%%%%
\section{No Bad Local Minima Guarantee} \label{sec:landscape}
In this section we present a novel characterization of the optimization landscape of IMC.
Following the factorization approach to matrix recovery problems, we first incorporate the rank constraint into the objective by writing the unknown matrix as $M = UV^\top$ where $U \in \mathbb R^{d_1\times r}$ and $V \in \mathbb R^{d_2\times r}$.
Then, problem \eqref{eq:IMC} is %reformulated as
\begin{align}\label{eq:IMC_factorized}
\min_{U,V} \|\mathcal P_\Omega(AUV^\top B^\top) -Y\|_F^2 .
\end{align}
Clearly, any pair of matrices $(U, V)$ whose product is $UV^\top = M^*$ is a global minimizer of \eqref{eq:IMC_factorized} with an objective value of zero.
%In general, \eqref{eq:IMC_factorized} is NP-hard, and in particular non-convex. As a result,
However, as \eqref{eq:IMC_factorized} is non-convex, some of its first-order critical points, namely points at which the gradient vanishes, may be bad local minima.
The next result, proven in \cref{sec:proof_RIP_consequences}, states that if sufficiently many entries are observed, all critical points are either global minima or strict saddle points. At a strict saddle point the Hessian has at least one strictly negative eigenvalue, so that gradient descent will not reach it.
Hence, under the conditions of \cref{thm:IMC_landscape}, gradient descent will recover $M^*$ from a random initialization.

\begin{theorem}\label{thm:IMC_landscape}
Let $X^* \in \mathbb R^{n_1\times n_2}$ be a rank-$r$ matrix which satisfies \eqref{eq:sideInformation} with $\mu$-incoherent matrices $A \in \mathbb R^{n_1\times d_1}$ and $B \in \mathbb R^{n_2\times d_2}$.
Assume $\Omega \subseteq [n_1]\times [n_2]$ is uniformly sampled with $|\Omega| \gtrsim \mu^2 d_1 d_2 \log n$.
Then w.p.~at least $1-2n^{-2}$, any critical point $(U,V)$ of problem \eqref{eq:IMC_factorized} is either a global minimum with $UV^\top = M^*$, or a strict saddle point.
\end{theorem}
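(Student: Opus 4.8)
The argument splits into two essentially independent pieces: (i) a restricted isometry property (RIP) for the IMC sampling operator, and (ii) the passage from RIP to a benign landscape for the factorized objective. Throughout, take $A,B$ to be isometries (w.l.o.g.) and define the rescaled linear operator $\mathcal A:\mathbb R^{d_1\times d_2}\to\mathbb R^{|\Omega|}$ by $\mathcal A(M)=\tfrac1{\sqrt p}\,\mathrm{Vec}_\Omega(AMB^\top)$, so that the objective in \eqref{eq:IMC_factorized} equals $\tfrac12\,p\,\|\mathcal A(UV^\top)-\mathcal A(M^*)\|^2$ and critical points of \eqref{eq:IMC_factorized} are exactly critical points of $f(U,V):=\tfrac12\|\mathcal A(UV^\top-M^*)\|^2$.

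For part (i), the claim I would prove is that when $|\Omega|\gtrsim\mu^2 d_1 d_2\log n$, with probability at least $1-2n^{-2}$ the self-adjoint operator $\mathcal A^*\mathcal A$ on $\mathbb R^{d_1\times d_2}$ satisfies $\|\mathcal A^*\mathcal A-\mathcal I\|_{\mathrm{op}}\le\delta$ for a small absolute constant $\delta$. Unlike plain matrix completion — where $\mathcal P_\Omega$ never has RIP because of spiky matrices — here a full-space, $r$-free RIP is attainable because the incoherence of $A,B$ forces $AMB^\top$ to be flat for \emph{every} $M$: $\|AMB^\top\|_\infty\le\|A\|_{2,\infty}\|B\|_{2,\infty}\|M\|_2\le\mu\sqrt{d_1 d_2/(n_1 n_2)}\,\|M\|_F$, while $\|AMB^\top\|_F=\|M\|_F$. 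Writing $\mathcal A^*\mathcal A$ in the Kronecker basis as $\tfrac1p\sum_{(i,j)\in\Omega}v_{ij}v_{ij}^\top$ with $v_{ij}=(B^\top e_j)\otimes(A^\top e_i)$, one has $\mathbb E[\mathcal A^*\mathcal A]=(B^\top B)\otimes(A^\top A)=\mathcal I$ and $\max_{ij}\|v_{ij}\|^2\le\mu^2 d_1 d_2/(n_1 n_2)$, so the matrix Bernstein inequality yields the operator-norm bound with exactly the advertised sample size (the $\log(d_1 d_2)$ dimension factor being absorbed into $\log n$). The only care needed here is bookkeeping the incoherence constants so that no spurious $\kappa$ or extra logarithmic factors creep in.

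For part (ii), I would run the now-standard ``RIP $\Rightarrow$ strict saddle'' argument on $f$. Let $(U,V)$ be a critical point and set $G:=\mathcal A^*\mathcal A(UV^\top-M^*)\in\mathbb R^{d_1\times d_2}$; the first-order conditions are $GV=0$ and $G^\top U=0$. If $UV^\top=M^*$ we are at a global minimum. Otherwise RIP gives $\langle UV^\top-M^*,G\rangle=\|\mathcal A(UV^\top-M^*)\|^2\ge(1-\delta)\|UV^\top-M^*\|_F^2>0$, so $G\ne0$; I would then exhibit a perturbation $(\Delta_U,\Delta_V)$ along which the Hessian quadratic form
\[
\|\mathcal A(\Delta_U V^\top+U\Delta_V^\top)\|^2+2\langle G,\Delta_U\Delta_V^\top\rangle
\]
is strictly negative, certifying a strict saddle. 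The direction would be built from the factorization of $M^*$ best aligned with $(U,V)$ — i.e.\ $\Delta=W-W^\star$ with $W=\binom{U}{V}$ and $W^\star$ a suitable factorization of $M^*$ — together with, in the rank-deficient case, a rank-increasing direction. Because $\|\mathcal A^*\mathcal A-\mathcal I\|\le\delta$, the sign of this form is dictated by the $\delta=0$ computation, where it reduces to the classical fact that the only local minima of $\|UV^\top-M^*\|_F^2$ with $\mathrm{rank}(M^*)=r$ and $U,V$ having exactly $r$ columns occur at $UV^\top=M^*$; the $O(\delta)$ corrections do not flip the sign once $\delta$ is a small enough constant.

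I expect the main obstacle to be part (ii) in the \emph{unregularized, asymmetric} setting. In contrast to the symmetric PSD case, and to the regularized formulations of \cite{ghassemi2018global} and others which add a balancing penalty $\|U^\top U-V^\top V\|_F^2$, here one must handle arbitrarily unbalanced critical points (the scaling symmetry $U\mapsto cU,\ V\mapsto c^{-1}V$ produces flat rather than negative directions, so the certificate must be chosen orthogonal to them) as well as rank-deficient critical points, and one must extract a \emph{quantitative} negative Hessian eigenvalue uniformly over all bad critical points in order to absorb the $O(\delta)$ RIP error. Choosing the perturbation direction and carrying out the accompanying norm estimates so that this works for the vanilla objective — possibly by invoking a general unregularized-asymmetric landscape theorem and checking its hypotheses against the RIP from part (i) — is the delicate step; the RIP itself is, by comparison, a routine matrix-Bernstein computation.
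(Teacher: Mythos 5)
Your proposal matches the paper's proof in both halves: the RIP over the full space of $d_1\times d_2$ matrices is obtained exactly as you describe, by applying the matrix Bernstein inequality to the rank-one operators $v_{ij}v_{ij}^\top$ built from the rows of $A$ and $B$ (the paper phrases this as bounding $\|\tfrac1p\mathcal P_{AB}\mathcal P_\Omega\mathcal P_{AB}-\mathcal P_{AB}\|$, which is the same quantity, and handles the without-replacement sampling of $\Omega$ by first passing to sampling with replacement \`a la Recht), and the landscape step is carried out precisely via the fallback you name at the end — invoking a general unregularized, asymmetric landscape theorem, namely Theorem~III.1 of \cite{li2020global}, whose hypothesis $(1+\delta)/(1-\delta)\le 3/2$ is met because the stated sample size yields $\delta\le 1/5$. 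Your hands-on strict-saddle construction in part (ii) is left as a sketch and would indeed be the delicate step, but it is not needed once that general theorem is cited, so the argument is sound and essentially identical to the paper's.
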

To the best of our knowledge, \cref{thm:IMC_landscape} is the first guarantee for the geometry of vanilla IMC. A previous result by \cite{ghassemi2018global} only addressed a suitably balance-regularized version of \eqref{eq:IMC_factorized}. In addition, their guarantee requires $\mathcal O(\mu^2 r \max\{d_1,d_2\} \max\{d_1d_2, \log^2 n\})$ observed entries with cubic scaling in $d_1,d_2$,\footnote{Note the notation in \cite{ghassemi2018global} is slightly different than ours; see \cref{sec:comparisonToGhassemi2018} for more details.} which is significantly larger than the quadratic scaling in our \cref{thm:IMC_landscape}.

\Cref{thm:IMC_landscape} guarantees exact recovery for a family of algorithms beyond vanilla gradient descent. However, as illustrated in \cref{sec:experiments}, solving the IMC problem can be done much faster than by gradient descent or variants thereof, e.g.~by our proposed \GNIMC method described in \cref{sec:GNIMC}.

%%%%%%%%%%%%%%%%%%%%%%%%%%%%%%%%%%%%%%%%%%%%%%%%%
\subsection{IMC as a special case of matrix sensing} \label{sec:theory_connection}
Similar to \cite{ghassemi2018global}, our proof of \cref{thm:IMC_landscape} is based on an RIP result we derive for IMC. The RIP result forms a connection between IMC and the matrix sensing (MS) problem, as follows.
Recall that in IMC, the goal is to recover $M^* \in \mathbb R^{d_1\times d_2}$ from the observations $Y = \mathcal P_\Omega(AM^*B^\top + \mathcal E)$.
In MS, we observe a set of linear measurements $b \equiv \mathcal A(M^*) + \xi$ where $\mathcal A: \mathbb R^{d_1\times d_2}\to \mathbb R^m$ is a sensing operator and $\xi\in \mathbb R^m$ is additive error. Assuming a known or estimated rank $r$ of $M^*$, the goal is to solve
\begin{align}
\tag{MS}\label{eq:MS}
\min_M \|\mathcal A(M) - b\|^2 \quad &\text{s.t. } \text{rank}(M) \leq r .
\end{align}
Problem \eqref{eq:IMC} is in the form of \eqref{eq:MS} with the operator
\begin{align}\label{eq:sensingOperator_IMC}
\mathcal A(M) = \text{Vec}_\Omega (AMB^\top) / \sqrt p
\end{align}
and the error vector $\xi = \text{Vec}_\Omega(\mathcal E)/\sqrt p$.
However, unlike IMC, in MS the operator $\mathcal A$ is assumed to satisfy a suitable RIP (Restricted Isometry Property), defined as follows \cite{candes2008restricted,recht2010guaranteed}.
\begin{definition}\label{def:RIP}
A linear map $\mathcal A: \mathbb R^{d_1\times d_2}\to \mathbb R^m$ satisfies a $k$-RIP with a constant $\delta \in [0,1)$, if for all matrices $M \in \mathbb R^{d_1\times d_2}$ of rank at most $k$,
\begin{align}\label{eq:RIP}
(1-\delta) \|M\|_F^2 \leq \|\mathcal A(M)\|^2 \leq (1+\delta) \|M\|_F^2 .
\end{align}
\end{definition}
The following theorem, proven in \cref{sec:proof_IMC_RIP}, states that if $A,B$ are incoherent and $|\Omega|$ is sufficiently large, w.h.p.~the IMC sensing operator \eqref{eq:sensingOperator_IMC} satisfies the RIP. This observation creates a bridge between IMC and MS:
for a given MS method, its RIP-based theoretical guarantees can be directly transferred to IMC.
%RIP-based guarantees for an MS method can be directly transferred to IMC.
\begin{theorem}\label{thm:IMC_RIP}
Let $A \in \mathbb R^{n_1\times d_1}$, $B \in \mathbb R^{n_2\times d_2}$ be two isometry matrices such that $\|A\|_{2,\infty} \leq \sqrt{\mu d_1/n_1}$ and $\|B\|_{2,\infty} \leq \sqrt{\mu d_2/n_2}$.
Let $\delta \in [0,1)$, and assume $\Omega \subseteq [n_1]\times [n_2]$ is uniformly sampled with $|\Omega| \equiv m \geq (8/\delta^2) \mu^2 d_1 d_2 \log n$.
Then, w.p.~at least $1-2n^{-2}$, the sensing operator $\mathcal A$ defined in \eqref{eq:sensingOperator_IMC} satisfies an RIP \eqref{eq:RIP} with $k = \min\{d_1,d_2\}$ and with constant $\delta$.
\end{theorem}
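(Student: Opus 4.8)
The plan is to reduce the RIP to a single operator-norm estimate and then apply a matrix concentration inequality. Since $k=\min\{d_1,d_2\}$, \emph{every} matrix $M\in\mathbb R^{d_1\times d_2}$ has rank at most $k$, so the two bounds in \eqref{eq:RIP} are required to hold for all $M$ simultaneously; writing $\|\mathcal A(M)\|^2=\langle \mathcal A^*\mathcal A(M),M\rangle_F$ and using that $\mathcal A^*\mathcal A-\mathcal I$ is self-adjoint, this is exactly the statement $\|\mathcal A^*\mathcal A-\mathcal I\|_{\mathrm{op}}\le\delta$, where $\mathcal A^*$ is the adjoint of $\mathcal A$, $\mathcal I$ is the identity on the Euclidean space $(\mathbb R^{d_1\times d_2},\langle\cdot,\cdot\rangle_F)$, and $\|\cdot\|_{\mathrm{op}}$ is the corresponding operator norm. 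The point is that no $\varepsilon$-net over the low-rank manifold is needed here --- this is the simplification afforded by the choice $k=\min\{d_1,d_2\}$, and it is why the sample complexity scales as $d_1d_2$ with no additional rank or logarithmic factor.

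Next I would write $\mathcal A^*\mathcal A$ as a sum of independent random rank-one operators. Let $a_i^\top$ and $b_j^\top$ denote the $i$-th row of $A$ and the $j$-th row of $B$, so $(AMB^\top)_{ij}=\langle M,a_ib_j^\top\rangle_F$ and, for the operator in \eqref{eq:sensingOperator_IMC}, $\|\mathcal A(M)\|^2=p^{-1}\sum_{(i,j)\in\Omega}\langle M,a_ib_j^\top\rangle_F^2$. Hence $\mathcal A^*\mathcal A=p^{-1}\sum_{(i,j)}\chi_{ij}\,\mathcal Z_{ij}$, where $\chi_{ij}=\mathbf 1\{(i,j)\in\Omega\}$ and $\mathcal Z_{ij}$ is the self-adjoint rank-one operator $\mathcal Z_{ij}(M)=\langle M,a_ib_j^\top\rangle_F\,a_ib_j^\top$. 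Taking expectation over the uniformly sampled $\Omega$ (so $\mathbb E\chi_{ij}=p$) and using the isometry identities $\sum_i a_ia_i^\top=A^\top A=I_{d_1}$, $\sum_j b_jb_j^\top=B^\top B=I_{d_2}$, I get $\mathbb E[\mathcal A^*\mathcal A](M)=\big(\sum_i a_ia_i^\top\big)M\big(\sum_j b_jb_j^\top\big)=M$, i.e.\ $\mathbb E[\mathcal A^*\mathcal A]=\mathcal I$; the same identities give $\|AMB^\top\|_F=\|M\|_F$, so the $1/\sqrt p$ normalization in \eqref{eq:sensingOperator_IMC} is the correct one.

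It then remains to control the fluctuation $\mathcal A^*\mathcal A-\mathcal I=\sum_{(i,j)}\big(\tfrac{\chi_{ij}}{p}-1\big)\mathcal Z_{ij}$, a sum of independent centered self-adjoint random operators on a space of dimension $d_1d_2$, for which I would invoke the matrix Bernstein inequality. The incoherence assumption $\|A\|_{2,\infty}\le\sqrt{\mu d_1/n_1}$, $\|B\|_{2,\infty}\le\sqrt{\mu d_2/n_2}$ supplies the two inputs. For boundedness, $\|\mathcal Z_{ij}\|_{\mathrm{op}}=\|a_ib_j^\top\|_F^2=\|a_i\|^2\|b_j\|^2\le(\mu d_1/n_1)(\mu d_2/n_2)=\mu^2d_1d_2/(n_1n_2)$, so after the $1/p$ reweighting each centered summand has operator norm at most $\tfrac1p\cdot\tfrac{\mu^2d_1d_2}{n_1n_2}=\tfrac{\mu^2d_1d_2}{m}$. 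For the variance, the rank-one identity $\mathcal Z_{ij}^2=\|a_i\|^2\|b_j\|^2\,\mathcal Z_{ij}$ together with $\mathbb E[(\chi_{ij}/p-1)^2]=(1-p)/p\le 1/p$ and $\sum_{(i,j)}\mathcal Z_{ij}=\mathcal I$ gives a matrix-variance proxy of norm at most $\tfrac1p\cdot\tfrac{\mu^2d_1d_2}{n_1n_2}=\tfrac{\mu^2d_1d_2}{m}$ as well. Plugging these into matrix Bernstein, and using $d_1,d_2\le n$ to absorb the ambient-dimension factor $d_1d_2$ into the exponent, shows that $m\gtrsim\delta^{-2}\mu^2d_1d_2\log n$ forces $\|\mathcal A^*\mathcal A-\mathcal I\|_{\mathrm{op}}\le\delta$ with probability at least $1-2n^{-2}$.

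The step I expect to be the main obstacle is this final balancing: choosing the absolute constants so that the boundedness term, the variance term, and the $d_1d_2$ dimension factor combine into exactly the stated threshold $m\ge(8/\delta^2)\mu^2d_1d_2\log n$ and failure probability $2n^{-2}$; one also has to fix the precise sampling model (a Bernoulli($p$) mask versus $m$ i.i.d.\ uniform draws, which changes the centered summand slightly), though this affects only constants and not the structure of the argument.
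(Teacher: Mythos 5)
Your proposal is correct and follows essentially the same route as the paper: the paper likewise exploits $k=\min\{d_1,d_2\}$ to avoid any net argument, reduces the RIP to an operator-norm deviation bound (phrased there as $\|\tfrac1p\mathcal P_{AB}\mathcal P_\Omega\mathcal P_{AB}-\mathcal P_{AB}\|\le\delta$, which is unitarily equivalent to your $\|\mathcal A^*\mathcal A-\mathcal I\|_{\mathrm{op}}\le\delta$ since $M\mapsto AMB^\top$ is an isometry onto the range of $\mathcal P_{AB}$), and applies matrix Bernstein to the same rank-one self-adjoint summands with the same bounds $R=\sigma^2/p=\mu^2d_1d_2/(n_1n_2)$. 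The only bookkeeping difference is the sampling model: the paper passes from the fixed-size uniform $\Omega$ to $m$ i.i.d.\ draws with replacement via Recht's argument (your Bernoulli indicators $\chi_{ij}$ are not independent under the fixed-size model), which is exactly the constants-level issue you already flag.
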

A similar result was derived in \cite{ghassemi2018global}. \Cref{thm:IMC_RIP} improves upon it both in terms of the required conditions and in terms of the RIP guarantee. First, as in their landscape guarantee, \cite{ghassemi2018global} require  cubic scaling with $d_1,d_2$ rather than quadratic as in our result. Moreover, their sample complexity includes an additional factor of $r\log(1/\delta)$ (see \cref{sec:comparisonToGhassemi2018}). Second, they proved only a $\min\{2r,d_1,d_2\}$-RIP, whereas \cref{thm:IMC_RIP} guarantees that $\mathcal A$ satisfies the RIP with the \textit{maximal} possible rank $\min\{d_1,d_2\}$.
In particular, this allows us to employ a recent result due to \cite{li2020global} to prove \cref{thm:IMC_landscape} for vanilla IMC.
The technical reason behind our sharper results is that instead of applying the Bernstein matrix inequality to a fixed matrix and then proving a union bound for all matrices, we apply it to a cleverly designed operator, which directly guarantees the result for all matrices (see \cref{lem:AB_RIP}).

%%%%%%%%%%%%%%%%%%%%%%%%%%%%%%%%%%%%%%%%%%%%%%%%%
%%%%%%%%%%%%%%%%%%%%%%%%%%%%%%%%%%%%%%%%%%%%%%%%%
\section{Rank Estimation Scheme} \label{sec:rankEstimate}
The factorization approach \eqref{eq:IMC_factorized} requires knowing $r$ in advance, although in practice it is often unknown. In this section we propose a simple scheme to estimate the underlying rank, and provide a theoretical guarantee for it. Importantly, our scheme does not assume $X^*$ is exactly low rank, but rather the existence of a sufficiently large spectral gap between its $r$-th and $(r+1)$-th singular values.

Let $\hat X = \mathcal P_{AB}(Y)/p = AA^\top Y BB^\top /p$ where $Y$ is the observed matrix and $p \equiv |\Omega|/(n_1n_2)$, and denote its singular values by $\hat \sigma_i$. Our estimator for the rank of $X^*$ is
\begin{align}\label{eq:rankEstimate_scheme}
\hat{r} &= \argmax_i \, g_i(\hat X), \quad
g_i(\hat X) = \frac{\hat \sigma_i}{\hat \sigma_{i+1} + D\cdot \hat \sigma_1 \sqrt{i}},
\end{align}
for some constant $D < 1$. In our simulations we set $D = (\sqrt{d_1d_2}/|\Omega|)^{1/2}$. The function $g_i$ measures the $i$-th spectral gap, with the second term in the denominator added for robustness of the estimate. For $D=0$, $g_i$ is simply the ratio between two consecutive singular values.
A similar estimator was proposed in \cite{keshavan2009low} for standard matrix completion, though they did not provide guarantees for it. The difference in our estimator is the incorporation of the side information matrices $A,B$. In addition, we present the following theoretical guarantee for our estimator, proven in \cref{sec:proof_rankEstimate}. Note that using the side information matrices $A,B$ allows us to reduce the sample complexity from $\mathcal O(n)$, as necessary in standard matrix completion, to only $\mathcal O(\log(n))$.

\begin{theorem}\label{thm:rankEstimate}
There exists a sufficiently small constant $c$ such that the following holds w.p.~at least $1-2n^{-2}$.
Let $X^* \in \mathbb R^{n_1\times n_2}$ be a matrix which satisfies \eqref{eq:sideInformation} with $\mu$-incoherent $A,B$. Assume $X^*$ is approximately rank $r$, in the sense that for all $i\neq r$, $g_r(X^*) > \min\{(11/10) g_i(X^*), 1/10\}$.
Denote $\delta = \min_i \{\sigma_{i+1}(X^*) + D\sigma_1(X^*) \sqrt{i}\}$, and assume $\Omega \subseteq [n_1]\times [n_2]$ is uniformly sampled with $|\Omega| \geq 8\mu^2 d_1 d_2 \log(n) \|X\|_F^2 / (c\delta)^2$. Further assume bounded error $\epsilon \equiv \|\mathcal P_{AB}\mathcal P_\Omega(\mathcal E)\|_F/p \leq c\delta$.
Then $\hat{r} = r$.
\end{theorem}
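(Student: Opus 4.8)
The plan is to control the perturbation $\hat X - X^*$ in spectral norm using the RIP of \cref{thm:IMC_RIP}, pass to a bound on each singular value of $\hat X$ via Weyl's inequality, and then carry out a perturbation analysis of the ratio estimator \eqref{eq:rankEstimate_scheme}, exploiting the gap hypothesis on $X^*$. Throughout I will write $\delta_0$ for the RIP constant to distinguish it from the quantity $\delta$ in the statement, and $D_i(X)\equiv\sigma_{i+1}(X)+D\,\sigma_1(X)\sqrt i$ for the denominator of $g_i$.

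\emph{Step 1: a perturbation bound on $\hat X$.} Since $Y=\mathcal P_\Omega(X^*+\mathcal E)$ and $X^*=AM^*B^\top$ with $A,B$ isometries, we have $\mathcal P_{AB}(X^*)=X^*$, and a short computation identifies $\mathcal P_{AB}\mathcal P_\Omega(X^*)/p = A\big(A^\top\mathcal P_\Omega(AM^*B^\top)B/p\big)B^\top$, where the inner factor is exactly $\mathcal A^*\mathcal A(M^*)$ for the sensing operator $\mathcal A$ of \eqref{eq:sensingOperator_IMC}. Hence
\begin{align*}
\hat X - X^* = A\big(\mathcal A^*\mathcal A(M^*)-M^*\big)B^\top + \mathcal P_{AB}\mathcal P_\Omega(\mathcal E)/p .
\end{align*}
I would then apply \cref{thm:IMC_RIP} with RIP constant $\delta_0\equiv c\delta/\|X^*\|_F$: the hypothesis $|\Omega|\ge 8\mu^2 d_1d_2\log(n)\|X^*\|_F^2/(c\delta)^2 = (8/\delta_0^2)\mu^2 d_1d_2\log n$ yields, on an event of probability $\ge 1-2n^{-2}$, a $k$-RIP with the maximal rank $k=\min\{d_1,d_2\}$, i.e.\ \eqref{eq:RIP} holds for \emph{all} $d_1\times d_2$ matrices. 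Because $\mathcal A^*\mathcal A-\mathcal I$ is self-adjoint for the Frobenius inner product, a full-rank RIP is equivalent to $\|\mathcal A^*\mathcal A-\mathcal I\|_{\mathrm{op}}\le\delta_0$, so $\|\mathcal A^*\mathcal A(M^*)-M^*\|_F\le\delta_0\|M^*\|_F=c\delta$. Dropping the isometries $A,B$ inside the Frobenius norm and using the error bound $\epsilon\le c\delta$ gives $\|\hat X-X^*\|_2\le\|\hat X-X^*\|_F\le 2c\delta$.

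\emph{Step 2: singular values and denominators.} By Weyl's inequality, $|\hat\sigma_i-\sigma_i(X^*)|\le\|\hat X-X^*\|_2\le 2c\delta$ for every $i$. The definition of $\delta$ gives $D_i(X^*)\ge\delta$ for all $i$, while $|D_i(\hat X)-D_i(X^*)|\le 2c\delta + 2c\delta\,D\sqrt i$. The second term is absorbed by the corresponding part of $D_i(X^*)$: since $\delta\le\sigma_2(X^*)+D\sigma_1(X^*)\le 2\sigma_1(X^*)$, we get $2c\delta\,D\sqrt i\le 4c\,(D\sigma_1(X^*)\sqrt i)\le 4c\,D_i(X^*)$, so $|D_i(\hat X)-D_i(X^*)|\le 6c\,D_i(X^*)$; in particular each perturbed denominator stays within a $(1\pm O(c))$ factor of the true one and never collapses. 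For the numerator, $|\hat\sigma_i-\sigma_i(X^*)|\le 2c\delta$, and whenever $g_i(X^*)$ is bounded away from $0$ this is automatically small relative to $\sigma_i(X^*)=g_i(X^*)\,D_i(X^*)\ge g_i(X^*)\,\delta$, so in that regime $g_i(\hat X)\in(1\pm O(c))g_i(X^*)$; in general one still has the additive estimates $g_i(\hat X)\le (1+O(c))(g_i(X^*)+O(c))$ and $g_r(\hat X)\ge (1-O(c))(g_r(X^*)-O(c))$.

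\emph{Step 3: the argmax, and the main obstacle.} The gap hypothesis on $X^*$ — that $g_r(X^*)$ dominates every other $g_i(X^*)$ robustly, either by the multiplicative factor $11/10$ or with an absolute margin (this is the role of the $1/10$ term) — is exactly what is needed so that, for $c$ a small enough constant, the estimates of Step~2 still force $g_r(\hat X)>g_i(\hat X)$ for every $i\neq r$: when $g_i(X^*)$ is small one combines the additive bounds with the absolute-margin part, and when $g_i(X^*)$ is large one combines the multiplicative bounds with the $11/10$ part. This yields $\hat r=\argmax_i g_i(\hat X)=r$ on the event of Step~1. The conceptual step is recognising that \cref{thm:IMC_RIP} delivers a full-rank RIP, hence an operator-norm bound on $\mathcal A^*\mathcal A-\mathcal I$, which is precisely what makes $\|\hat X-X^*\|$ come out with the $\|X^*\|_F^2/\delta^2$ dependence in the sample complexity. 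The step demanding the most care is the perturbation analysis of the quotient $g_i$: with a possibly tiny denominator, one must ensure the perturbation neither drives a denominator to zero nor flips the ordering. The saving facts are that $\delta=\min_i D_i(X^*)$ keeps all denominators bounded below, that the $D\sqrt i$ part of any denominator perturbation is already matched by the $D\sigma_1\sqrt i$ term in $D_i(X^*)$, and that a large $g_i(X^*)$ forces $\sigma_i(X^*)$ to be large so the numerator perturbation becomes multiplicative there — with the remaining small-$g_i$ case handled by the absolute-margin part of the hypothesis.
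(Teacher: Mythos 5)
Your proposal is correct and follows essentially the same route as the paper: a Frobenius-norm perturbation bound $\|\hat X-X^*\|_F\le 2c\delta$ obtained from the concentration of $\tfrac1p\mathcal P_{AB}\mathcal P_\Omega\mathcal P_{AB}$ around $\mathcal P_{AB}$ (the paper invokes \cref{lem:AB_RIP} directly, while you equivalently pass through the full-rank RIP of \cref{thm:IMC_RIP} and the self-adjointness of $\mathcal A^*\mathcal A-\mathcal I$), followed by Weyl's inequality and the same $(1\pm 6c)$-type perturbation bounds on the numerators and denominators of $g_i$. Your Step 3 is left at roughly the same level of detail as the paper's own final step (which likewise only asserts that the gap hypothesis forces $\hat g_r>\hat g_i$ for small enough $c$), so there is no substantive gap relative to the paper's argument.
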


To the best of our knowledge, \cref{thm:rankEstimate} is the first guarantee in the literature for rank estimation in IMC.
%Note that we only condition on the projection of the observed error into the range of $A,B$, $\mathcal P_{AB} \mathcal P_\Omega (\mathcal E)/p$, whose magnitude is in general much smaller than that of the full error $\mathcal E$ (since $p > d_1d_2/(n_1n_2)$).
We remark that with a suitably modified $\delta$, our guarantee holds for other choices of $g_i$ as well (including $g_i = \sigma_i/\sigma_{i+1}$, corresponding to $D=0$). An empirical demonstration of our scheme appears in \cref{sec:experiments_rankEstimate}.

%%%%%%%%%%%%%%%%%%%%%%%%%%%%%%%%%%%%%%%%%%%%%%%%%
%%%%%%%%%%%%%%%%%%%%%%%%%%%%%%%%%%%%%%%%%%%%%%%%%
\section{GNIMC Algorithm} \label{sec:GNIMC}
In this section, we describe an adaptation of the \GNMR algorithm \cite{zilber2022gnmr} to IMC, and present recovery guarantees for it.
Consider the factorized objective \eqref{eq:IMC_factorized}.
Given an estimate $(U, V)$, the goal is to find an update $(\Delta U, \Delta V)$ such that $(U', V') = (U + \Delta U, V + \Delta V)$ minimizes \eqref{eq:IMC_factorized}.
In terms of $(\Delta U, \Delta V)$, problem \eqref{eq:IMC_factorized} reads
%In terms of $(\Delta U, \Delta V)$, \eqref{eq:IMC_factorized} reads
%The corresponding equivalent form of \eqref{eq:IMC_factorized} reads
\begin{align*}
\min_{\Delta U, \Delta V} \|&\mathcal P_\Omega(AUV^\top B^\top + AU \Delta V^\top B^\top + A\Delta U V^\top B^\top + A\Delta U \Delta V^\top B^\top) - Y \|_F^2 ,
\end{align*}
%followed by the update $(U_1, V_1) = (U_0+\Delta U, V_0+\Delta V)$.
which is nonconvex due to the mixed term $\Delta U \Delta V^\top$.
The Gauss-Newton approach is to neglect this term. This yields the key iterative step of \GNIMC, which is solving the following sub-problem:
\begin{align}\label{eq:GNIMC_LSQR}
\min_{\Delta U, \Delta V} \|& \mathcal P_\Omega(AUV^\top B^\top + AU \Delta V^\top B^\top + A\Delta U V^\top B^\top) - Y \|_F^2.
\end{align}
Problem \eqref{eq:GNIMC_LSQR} is a linear least squares problem.
Note, however, that it has an infinite number of solutions: for example, if $(\Delta U, \Delta V)$ is a solution, so is $(\Delta U + U R, \Delta V - VR^\top)$ for any $R \in \mathbb R^{r\times r}$.
We choose the solution with minimal norm $\|\Delta U\|_F^2 + \|\Delta V\|_F^2$, see \cref{alg:GNIMC}. In practice, this solution can be computed using the standard LSQR algorithm \cite{paige1982lsqr}.

\begin{algorithm}[t]
\caption{\GNIMC} \label{alg:GNIMC}
\SetKwInOut{Return}{return}
\SetKwInOut{Input}{input}
\SetKwInOut{Output}{output}
\Input{
sampling operator $\mathcal P_\Omega$, observed matrix $Y$, side information matrices $(A,B)$, maximal number of iterations $T$, initialization $(U_0, V_0)$
}
\Output{rank-$r$ (approximate) solution to $\mathcal P_\Omega(\hat X) = Y$}
\For{$t=0,\ldots,T-1$}{
set $\begin{psmallmatrix} U_{t+1} \\ V_{t+1} \end{psmallmatrix} = \begin{psmallmatrix} U_{t} \\ V_{t} \end{psmallmatrix} + \begin{psmallmatrix} \Delta U_{t+1} \\ \Delta V_{t+1} \end{psmallmatrix}$, where $\begin{psmallmatrix} \Delta U_{t+1} \\ \Delta V_{t+1} \end{psmallmatrix}$ is the minimal norm solution of
	$ \argmin_{\Delta U, \Delta V} \| \mathcal P_\Omega[A(U_tV_t^\top + U_t \Delta V^\top + \Delta U V_t^\top)B^\top] - Y \|_F^2 $
}
\Return{$\hat X = A U_T V_T^\top B^\top$}
\end{algorithm}

In general, the computational complexity of solving problem \eqref{eq:GNIMC_LSQR} scales with the condition number $\kappa$ of $X^*$. To decouple the runtime of \GNIMC from $\kappa$, we use the QR decompositions of $U_t$ and $V_t$ as was similarly done for alternating minimization by \cite{jain2013low}. In \cref{sec:time_complexity} we describe the full procedure, and prove it is analytically equivalent to \eqref{eq:GNIMC_LSQR}. Remarkably, despite the fact that \GNIMC performs a non-local update at each iteration, its resulting per-iteration complexity is as low as a single gradient descent step.

%A recovery guarantee for \GNIMC with quadratic convergence rate is presented in \cref{thm:GNIMC_guarantee} (\cref{sec:GNIMC_guarantee}).
\GNIMC requires an initial guess $(U_0,V_0)$. A suitable initialization procedure for our theoretical guarantees is discussed in \cref{proposition:initialization}. In practice, \GNIMC works well also from a random initialization.
%In our simulations, we used a simplified version of this procedure, as described below.

The proposed \GNIMC algorithm is extremely simple, as it merely solves a least squares problem in each iteration.
In contrast to several previous methods, it requires no parameter estimation such as the minimal and maximal singular values of $X^*$, or tuning of hyperparameters such as regularization coefficients. Altogether, this makes \GNIMC easy to implement and use. Furthermore, \GNIMC enjoys strong recovery guarantees and fast runtimes, as described below.

%%%%%%%%%%%%%%%%%%%%%%%%%%%%%%%%%%%%%%%%%%%%%%%%%
\subsection{Recovery guarantees for GNIMC} \label{sec:GNIMC_guarantee}
We first analyze the noiseless case, $\mathcal E = 0$.
The following theorem, proven in \cref{sec:proof_RIP_consequences}, states that starting from a sufficiently accurate initialization with small imbalance $\|U^\top U-V^\top V\|_F$, \GNIMC exactly recovers the matrix at a quadratic rate.
In fact, the balance condition can be eliminated by adding a single SVD step as discussed below.
%A comparison to recovery guarantees of other methods appears in \cref{sec:theory_comparison}.

\begin{theorem}\label{thm:GNIMC_guarantee}
There exists a constant $c > 1$ such that the following holds w.p.~at least $1-2n^{-2}$.
Let $X^* \in \mathbb R^{n_1\times n_2}$ be a rank-$r$ matrix which satisfies \eqref{eq:sideInformation} with $\mu$-incoherent side matrices $A \in \mathbb R^{n_1\times d_1}$ and $B \in \mathbb R^{n_2\times d_2}$.
Denote $\gamma = c/(2\sigma_r^*)$ where $\sigma_r^* = \sigma_r(X^*)$.
Assume $\Omega \subseteq [n_1]\times [n_2]$ is uniformly sampled with
\begin{align}\label{eq:GNIMC_guarantee_sampleComplexity}
|\Omega| \geq 32 \mu^2 d_1 d_2 \log n.
\end{align}
Then, for any initial iterate $(U_0, V_0)$ that satisfies
\begin{subequations}
\begin{align}
\|AU_0V_0^\top B^\top - X^*\|_F &\leq \frac{\sigma_r^*}{c}, \label{eq:initialization_accuracy} \\
\|U_0^\top U_0 - V_0^\top V_0\|_F &\leq \frac{\sigma_r^*}{2c}, \label{eq:initialization_balance}
\end{align}
\end{subequations}
the estimates $X_t = AU_tV_t^\top B^\top$ of \cref{alg:GNIMC} satisfy
\begin{align}\label{eq:GNIMC_guarantee}
\|X_{t+1} - X^*\|_F \leq \gamma\cdot \|X_t - X^*\|_F^2, \hspace{0.12in} \forall t=0, 1, ... .
% hspace and not \quad, and \quad makes the equation two-lines
\end{align}
\end{theorem}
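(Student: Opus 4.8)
The plan is to leverage \cref{thm:IMC_RIP}, which guarantees that the sensing operator $\mathcal A(M) = \text{Vec}_\Omega(AMB^\top)/\sqrt p$ satisfies a $\min\{d_1,d_2\}$-RIP with a small constant $\delta$ once $|\Omega| \gtrsim \mu^2 d_1 d_2 \log n$; the constant $32$ in \eqref{eq:GNIMC_guarantee_sampleComplexity} is presumably chosen so that $\delta$ is at most some fixed small value like $1/2$. This reduces the IMC problem to a matrix sensing problem with RIP, so that the analysis amounts to transferring (and slightly adapting) the convergence analysis of \GNMR from \cite{zilber2022gnmr} for RIP-equipped matrix sensing. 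Concretely, since $X_t - X^* = A(U_tV_t^\top - M^*)B^\top$ and $A,B$ are isometries, we have $\|X_t - X^*\|_F = \|U_tV_t^\top - M^*\|_F$, so it suffices to prove the quadratic contraction in the $M$-space, i.e.\ for $M_t = U_tV_t^\top$ viewed as an iterate of \GNMR applied to $\mathcal A$.

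The key steps, in order, would be: (i) Restate the per-iteration least-squares step \eqref{eq:GNIMC_LSQR} in the sensing-operator language: the update $(\Delta U_{t+1}, \Delta V_{t+1})$ is the minimal-norm minimizer of $\|\mathcal A(U_t\Delta V^\top + \Delta U V_t^\top + U_tV_t^\top) - b\|$ where $b = \mathcal A(M^*)$ in the noiseless case. (ii) Write the residual $M_{t+1} - M^* = (U_t + \Delta U_{t+1})(V_t + \Delta V_{t+1})^\top - M^* = \Delta U_{t+1}\Delta V_{t+1}^\top + (\text{linearized residual})$; the Gauss–Newton normal equations force the linearized part to be small, so the dominant term is the quadratic cross-term $\Delta U_{t+1}\Delta V_{t+1}^\top$. (iii) Bound $\|\Delta U_{t+1}\|_F$ and $\|\Delta V_{t+1}\|_F$ in terms of $\|X_t - X^*\|_F$ using: the minimal-norm property of the solution, the RIP lower bound on $\mathcal A$ restricted to the relevant low-rank subspace (rank at most $2r \le \min\{d_1,d_2\}$, which is why the full-rank RIP of \cref{thm:IMC_RIP} matters), the initialization accuracy \eqref{eq:initialization_accuracy} which keeps $\sigma_r(U_t), \sigma_r(V_t)$ bounded away from zero, and the balance condition \eqref{eq:initialization_balance} which controls the conditioning of the factorization. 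This yields $\|\Delta U_{t+1}\|_F \|\Delta V_{t+1}\|_F \lesssim \|X_t - X^*\|_F^2 / \sigma_r^*$, hence the quadratic bound with $\gamma = c/(2\sigma_r^*)$. (iv) Verify the inductive invariant: show that the new iterate $(U_{t+1}, V_{t+1})$ still satisfies the accuracy and (approximate) balance conditions, so the argument can be iterated for all $t$; since $\|X_{t+1}-X^*\|_F \le \gamma \|X_t - X^*\|_F^2 \le \frac12 \|X_t - X^*\|_F$ whenever $\|X_t - X^*\|_F \le \sigma_r^*/c$, the accuracy is preserved, and one checks the imbalance contracts similarly (or is controlled by the minimal-norm choice).

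The main obstacle I expect is step (iii)–(iv): carefully controlling the minimal-norm least-squares solution and propagating the balance/conditioning invariant. The subtlety is that the Gauss–Newton step is a \emph{non-local} update — it is not a small gradient step — so one cannot argue by simple perturbation; instead one must exploit the specific algebraic structure of the normal equations for the bilinear map $(\Delta U, \Delta V) \mapsto U_t\Delta V^\top + \Delta U V_t^\top$ together with the RIP, essentially showing this linear operator is well-conditioned on the minimal-norm complement of its kernel (the kernel being the gauge directions $(U_tR, -V_tR^\top)$). This is exactly where the balance condition \eqref{eq:initialization_balance} enters, and where the adaptation from \cite{zilber2022gnmr} requires the RIP to hold at the maximal rank $\min\{d_1,d_2\}$ rather than merely $2r$, since intermediate iterates $U_tV_t^\top$ need not be exactly rank $r$ and the linearization spans a subspace of dimension up to $\min\{d_1,d_2\}$. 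The noiseless quadratic rate is the cleanest regime, so once the RIP bridge is in place, the remaining work is bookkeeping on constants to match \eqref{eq:GNIMC_guarantee_sampleComplexity}; I would also remark that the balance condition in \eqref{eq:initialization_balance} can be removed by a single SVD step, consistent with the statement preceding the theorem.
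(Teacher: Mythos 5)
Your proposal follows essentially the same route as the paper: \cref{thm:IMC_RIP} with $|\Omega|\ge 32\mu^2 d_1d_2\log n$ gives the $\min\{d_1,d_2\}$-RIP with constant $\delta\le 1/2$ (indeed $8/\delta^2=32$), and the theorem then follows by invoking the matrix-sensing recovery guarantee for \GNMR from \cite[Theorem~3.3]{zilber2022gnmr}, exactly as the paper does. Your steps (i)--(iv) correctly sketch the internals of that cited \GNMR analysis, but the paper treats it as a black box, so no further work is needed once the RIP bridge and the isometry identity $\|X_t-X^*\|_F=\|U_tV_t^\top-M^*\|_F$ are in place.
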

Note that by assumption \eqref{eq:initialization_accuracy}, $\gamma\cdot \|X_0 - X^*\|_F \leq 1/2$. Hence, \eqref{eq:GNIMC_guarantee} implies that \GNIMC achieves exact recovery, since $X_t \to X^*$ as $t\to\infty$.
%The initialization balance requirement \eqref{eq:initialization_balance} can be eliminated for a variant of \cref{alg:GNIMC}, in which before step~\ref{alg:test} we compute the SVD $U\Sigma V^\top$ of $U_0 V_0^\top$ and replace $\begin{psmallmatrix} U_0 \\ V_0 \end{psmallmatrix}$ by $\begin{psmallmatrix} U \Sigma^\frac{1}{2} \\ V \Sigma^\frac{1}{2} \end{psmallmatrix}$.
The computational complexity of \GNIMC is provided in the following proposition, proven in \cref{sec:time_complexity}.
\begin{proposition}\label{proposition:time_complexity}
Under the conditions of \cref{thm:GNIMC_guarantee}, the time complexity of \GNIMC (\cref{alg:GNIMC}) until recovery with a fixed accuracy (w.h.p.) is $\mathcal O(\mu^2 (d_1+d_2) d_1 d_2 r \log n)$.
\end{proposition}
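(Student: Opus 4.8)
The plan is to combine two facts established earlier in the paper: the quadratic convergence rate of \cref{thm:GNIMC_guarantee}, which controls the number of iterations needed to reach a fixed accuracy, and a per-iteration cost bound for solving the least squares sub-problem \eqref{eq:GNIMC_LSQR}, which will be established in \cref{sec:time_complexity}. Multiplying these two gives the total runtime.

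First I would bound the number of iterations. By \eqref{eq:initialization_accuracy} we have $\gamma\|X_0-X^*\|_F \le 1/2$, and the quadratic recursion \eqref{eq:GNIMC_guarantee} gives $\gamma\|X_t-X^*\|_F \le (\gamma\|X_0-X^*\|_F)^{2^t} \le 2^{-2^t}$. Hence to reach a target relative accuracy $\varepsilon$, i.e.\ $\|X_t-X^*\|_F \le \varepsilon \sigma_r^*$, it suffices to take $t = \mathcal O(\log\log(1/\varepsilon))$ iterations. For a fixed accuracy this is $\mathcal O(1)$ iterations, so the iteration count does not contribute to the asymptotic order — the entire runtime is dominated by the cost of a single iteration.

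Next I would bound the cost of one iteration. The key point, to be proven in \cref{sec:time_complexity}, is that via the QR decompositions of $U_t$ and $V_t$ the least squares problem \eqref{eq:GNIMC_LSQR} can be solved by an iterative solver (LSQR) whose per-iteration work is dominated by applying the sensing operator $\mathcal A$ and its adjoint. Applying $\mathcal A$ amounts to forming $\mathcal P_\Omega(AMB^\top)$-type products: with $M$ of the low-rank-plus-low-rank structure appearing in \eqref{eq:GNIMC_LSQR}, this costs $\mathcal O(|\Omega|(d_1+d_2)r)$ arithmetic operations (one factor $d_i$ from multiplying by $A$ or $B$, one factor $r$ from the thin inner dimension, summed over the $|\Omega|$ observed entries), and the number of LSQR iterations needed for the sub-problem to reach sufficient accuracy is $\mathcal O(1)$ because, under the RIP of \cref{thm:IMC_RIP}, the relevant normal-equations operator is well-conditioned (condition number bounded by a constant independent of $\kappa$, which is precisely the reason the QR trick is used). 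Substituting $|\Omega| = \mathcal O(\mu^2 d_1 d_2 \log n)$ from \eqref{eq:GNIMC_guarantee_sampleComplexity} yields a per-iteration cost of $\mathcal O(\mu^2 (d_1+d_2) d_1 d_2 r \log n)$, and since the iteration count is $\mathcal O(1)$ this is also the total complexity, matching the claim.

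The main obstacle is the per-iteration analysis, specifically two points that \cref{sec:time_complexity} must nail down: (i) that the QR-preconditioned sub-problem really is analytically equivalent to \eqref{eq:GNIMC_LSQR} \emph{and} well-conditioned with a $\kappa$-free bound, so that LSQR converges in $\mathcal O(1)$ inner iterations — this is where the RIP from \cref{thm:IMC_RIP} enters and where a naive formulation would incur a factor of $\kappa$; and (ii) the bookkeeping that each matrix-vector product with the sensing operator and its adjoint exploits the low-rank structure to cost only $\mathcal O(|\Omega|(d_1+d_2)r)$ rather than $\mathcal O(n_1 n_2 d_1 d_2)$, including the one-time $\mathcal O((n_1 d_1^2 + n_2 d_2^2))$ cost of the QR decompositions and the $\mathcal O(d_1 d_2 r)$ cost of recombining factors, all of which are lower order. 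Once these per-iteration facts are in hand, the proof is just the two-line multiplication above.
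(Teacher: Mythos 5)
Your proposal follows essentially the same route as the paper's proof: a constant number of outer iterations from the quadratic convergence of \cref{thm:GNIMC_guarantee}, a per-inner-iteration cost of $\mathcal O((d_1+d_2)r|\Omega|)$ from evaluating the structured products only on $\Omega$, a constant number of LSQR/CG iterations because the QR-preconditioned operator has condition number bounded by an absolute constant (via the RIP, cf.\ \cref{lem:LA_cond}), and finally substituting $|\Omega| = \mathcal O(\mu^2 d_1 d_2 \log n)$. The only small slip is peripheral: the QR decompositions in the algorithm are of $U_t \in \mathbb R^{d_1\times r}$ and $V_t \in \mathbb R^{d_2\times r}$ (cost $\mathcal O((d_1+d_2)r^2)$, genuinely lower order), not of the $n_i\times d_i$ side matrices, so the $\mathcal O(n_1 d_1^2 + n_2 d_2^2)$ figure does not arise.
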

To meet the initialization conditions of \cref{thm:GNIMC_guarantee}, we need to find a rank-$r$ matrix $M$ which satisfies $\|AMB^\top - X^*\| \leq \sigma_r^*/c$. By taking its SVD $M = U\Sigma V^\top$, we obtain that $(U \Sigma^\frac{1}{2}, V \Sigma^\frac{1}{2})$ satisfies conditions (\ref{eq:initialization_accuracy}-\ref{eq:initialization_balance}).
Such a matrix $M$ can be computed in polynomial time
%A rank-$r$ estimate $M = U_0 V_0^\top$ whose balanced factor matrices $\begin{psmallmatrix} U \Sigma^\frac{1}{2} \\ V \Sigma^\frac{1}{2} \end{psmallmatrix}$ satisfy also condition \eqref{eq:initialization_accuracy} can be computed in polynomial time,
using the initialization procedure suggested in \cite{tu2016low} for matrix sensing. Starting from $M_0 = 0$, it iteratively performs a gradient descent step and projects the result into the rank-$r$ manifold. Its adaptation to IMC reads
\begin{align}
M_{\tau+1}
&= \mathcal P_r \left[ M_\tau - A^\top (\mathcal P_\Omega(AM_\tau B^\top)/p - Y) B \right] \label{eq:initialization_proceudre}
\end{align}
where $\mathcal P_r(M)$ is the rank-$r$ truncated SVD of $M$.
%A single iteration corresponds to the standard spectral initialization, $M_1 = \mathcal P_r(A^\top Y B /p)$.
The following proposition, proven in \cref{sec:proof_initialization}, states that $\mathcal O \left(\log (r \kappa)\right)$ iterations suffice to meet the initialization conditions of \cref{thm:GNIMC_guarantee} under a slightly larger sample size requirement.

\begin{proposition}[Initialization guarantee] \label{proposition:initialization}
Let $X^*, A, B$ be as in \cref{thm:GNIMC_guarantee}. Assume $\Omega$ is uniformly sampled with
%Assume $\min\{d_1,d_2\}\geq 4r$ and
$|\Omega| \geq 50\mu^2 d_1d_2\log n$.
Let $M_\tau$ be the result after $\tau \geq 5\log(c\sqrt r\kappa)$ iterations of \eqref{eq:initialization_proceudre}, and denote its SVD by $U \Sigma V$. Then w.p.~$1-2n^{-2}$, $\begin{psmallmatrix} U_0 \\ V_0 \end{psmallmatrix} = \begin{psmallmatrix} U \Sigma^\frac{1}{2} \\ V \Sigma^\frac{1}{2} \end{psmallmatrix}$ satisfies the initialization conditions \eqref{eq:initialization_accuracy}-\eqref{eq:initialization_balance} of \cref{thm:GNIMC_guarantee}.
\end{proposition}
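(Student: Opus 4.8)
The plan is to reduce the analysis to the known convergence guarantee for projected gradient descent on the matrix sensing problem, as established in \cite{tu2016low}, by invoking the RIP bridge of \cref{thm:IMC_RIP}. First I would note that under the sample size $|\Omega| \geq 50\mu^2 d_1 d_2 \log n$, choosing $\delta$ in \cref{thm:IMC_RIP} to be a small absolute constant (say $\delta \leq 1/10$, since $50 \geq 8/\delta^2$ requires $\delta^2 \geq 8/50$, i.e.\ $\delta \geq 2/5$ — so actually one takes $\delta$ around $2/5$ or slightly larger; the precise constant only affects the absolute constant in the iteration count), the sensing operator $\mathcal A(M) = \mathrm{Vec}_\Omega(AMB^\top)/\sqrt p$ satisfies $2r$-RIP with constant $\delta$ w.p.\ at least $1-2n^{-2}$. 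Then the iteration \eqref{eq:initialization_proceudre} is exactly projected gradient descent for \eqref{eq:MS}: the gradient of $\tfrac12\|\mathcal A(M)-b\|^2$ at $M_\tau$ is $\mathcal A^*(\mathcal A(M_\tau)-b)$, which unravels to $A^\top(\mathcal P_\Omega(AM_\tau B^\top)/p - Y)B$ because $\mathcal A^*(\mathrm{Vec}_\Omega(Z)/\sqrt p) = A^\top \mathcal P_\Omega(Z) B/p$ and $b = \mathrm{Vec}_\Omega(Y)/\sqrt p$; here the step size is implicitly $1$, matching the choice in \cite{tu2016low} up to the normalization they use.

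Next I would quote the deterministic convergence statement from \cite{tu2016low} for projected gradient descent under RIP: starting from $M_0 = 0$, after $\tau$ iterations one has $\|M_\tau - M^*\|_F \leq (\text{contraction factor})^\tau \cdot \|M^*\|_F$, where the contraction factor is a constant strictly below $1$ once $\delta$ is small enough, and $\|M^*\|_F \leq \sqrt r\, \sigma_1(M^*) = \sqrt r\,\sigma_1^*$ (using that $A,B$ are isometries, so $\sigma_i(X^*) = \sigma_i(M^*)$). To reach $\|M_\tau - M^*\|_F \leq \sigma_r^*/(2c)$ it therefore suffices to take $\tau \gtrsim \log(\sqrt r \,\sigma_1^*/\sigma_r^*) = \log(\sqrt r\,\kappa)$ iterations, which is subsumed by the hypothesis $\tau \geq 5\log(c\sqrt r \kappa)$ for an appropriate absolute constant (the factor $5$ absorbs the contraction rate and the constant $c$).

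Finally I would translate the bound on $M_\tau$ into the two initialization conditions. Writing $M := M_\tau$ with SVD $U\Sigma V^\top$ and setting $U_0 = U\Sigma^{1/2}$, $V_0 = V\Sigma^{1/2}$: condition \eqref{eq:initialization_balance} holds \emph{exactly}, since $U_0^\top U_0 = \Sigma^{1/2}U^\top U\Sigma^{1/2} = \Sigma = V_0^\top V_0$ as $U,V$ are isometries, so the imbalance is zero. For \eqref{eq:initialization_accuracy}, since $A,B$ are isometries, $\|AU_0V_0^\top B^\top - X^*\|_F = \|A(M - M^*)B^\top\|_F = \|M - M^*\|_F \leq \sigma_r^*/(2c) \leq \sigma_r^*/c$, using $c>1$. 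This completes the argument.

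The main obstacle is matching conventions: the step size in \eqref{eq:initialization_proceudre} (written with the $1/p$ normalization) versus the one in \cite{tu2016low} (written in terms of the normalized operator $\mathcal A$ with the $\sqrt p$ factor), and correspondingly reconciling the RIP constant $\delta$ permitted by the sample size $50\mu^2 d_1 d_2\log n$ with the smallness of $\delta$ that \cite{tu2016low} needs for a contraction below $1$. One should double-check that the absolute constant $50$ — equivalently $\delta \approx 2/5$ — is compatible with their requirement, possibly absorbing any slack into a slightly larger iteration constant than $5$; this is a routine but necessary bookkeeping step rather than a conceptual difficulty.
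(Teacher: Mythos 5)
Your proposal is correct and follows essentially the same route as the paper: invoke \cref{thm:IMC_RIP} with $\delta = 2/5$ (which is exactly what $|\Omega| \geq 50\mu^2 d_1 d_2 \log n$ permits), apply the projected-gradient convergence guarantee of \cite{tu2016low} to get $\|M_\tau - M^*\|_F \leq \sigma_r^*/c$ after $\mathcal O(\log(\sqrt r \kappa))$ iterations, and then observe that the balance condition holds exactly by the SVD-splitting construction while the accuracy condition transfers via the isometry of $A,B$. The bookkeeping you flag at the end does work out: Tu et al.'s contraction factor is $2\delta = 4/5$, and $1/\log(5/4) \approx 4.5 \leq 5$ accounts for the factor $5$ in the iteration count.
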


%The sample complexity $|\Omega|$ condition of the initialization guarantee, \cref{proposition:initialization}, is identical (up to a numerical constant) to the one in \cref{thm:GNIMC_guarantee}. In particular, it is independent of the condition number, and depends on the incoherence of $A,B$ but not on that of $X^*$. This is another result we readily obtain by the established connection between IMC and matrix sensing (\cref{thm:IMC_RIP}), as derived in the SM.

We conclude this subsection with a guarantee for \GNIMC in the noisy setting. Suppose we observe $Y = \mathcal P_\Omega(X^* + \mathcal E)$ where $\mathcal E$ is arbitrary additive error.
%The error can be due to inaccurate measurements of $X^*$, or due to the fact that the underlying matrix is only approximately low rank. % also that \eqref{eq:sideInformation} holds only approximately
To cope with the error, we slightly modify \cref{alg:GNIMC}, and add the following balancing step at the start of each iteration: calculate the SVD $\bar U \Sigma \bar V^\top$ of the current estimate $U_t V_t^\top$, and update
\begin{align}\label{eq:balancing_step}
U_t \leftarrow \bar U \Sigma^\frac{1}{2}, \quad
V_t \leftarrow \bar V \Sigma^\frac{1}{2},
\end{align}
so that $(U_t, V_t)$ are perfectly balanced with $U_t^\top U_t = V_t^\top V_t$.
The following result holds for the modified algorithm.
\begin{theorem}\label{thm:GNIMC_guarantee_noisy}
Let $X^*, A,B, \Omega$ be defined as in \cref{thm:GNIMC_guarantee}, and suppose the error is bounded as
\begin{align}
\epsilon \equiv \tfrac{1}{\sqrt p} \|\mathcal P_\Omega(\mathcal E)\|_F \leq \frac{\sigma_r^*}{9c} .
\end{align}
Then for any initial iterate $(U_0, V_0)$ that satisfies \eqref{eq:initialization_accuracy}, the estimates $X_t = AU_t V_t^\top B^\top$ of \cref{alg:GNIMC} with the balancing step \eqref{eq:balancing_step} satisfy
\begin{align}
\|X_t - X^*\|_F \leq \frac{\sigma_r^*}{4^{2^t-1} c} + 6\epsilon \stackrel{t\to\infty}{\longrightarrow} 6\epsilon.
\end{align}
\end{theorem}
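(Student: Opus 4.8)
The plan is to reduce the noisy analysis to the noiseless quadratic-convergence argument of Theorem \ref{thm:GNIMC_guarantee}, tracking the error term $\epsilon$ as a perturbation throughout. First I would set up the notation: by Theorem \ref{thm:IMC_RIP}, the sampling operator $\mathcal A$ of \eqref{eq:sensingOperator_IMC} satisfies an RIP with rank $\min\{d_1,d_2\}$ and a small constant $\delta$ on the event of probability $1-2n^{-2}$, which I condition on for the rest of the proof; the sample complexity \eqref{eq:GNIMC_guarantee_sampleComplexity} is exactly what makes $\delta$ as small as needed. Because of the balancing step \eqref{eq:balancing_step}, at the start of each iteration $(U_t,V_t)$ are perfectly balanced, $U_t^\top U_t = V_t^\top V_t$, so condition \eqref{eq:initialization_balance} is automatically maintained and I only need to propagate the accuracy bound. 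I would work with the matrix sensing picture: the least-squares step of \cref{alg:GNIMC} solves $\min_{\Delta U,\Delta V}\|\mathcal A(U_t\Delta V^\top + \Delta U V_t^\top + U_tV_t^\top) - b\|^2$ with $b = \mathcal A(M^*) + \xi$ and $\|\xi\| = \epsilon$ (using $\|\mathrm{Vec}_\Omega(\cdot)\|/\sqrt p = \|\mathcal P_\Omega(\cdot)\|_F/\sqrt p$), so the only difference from the noiseless case is the additive $\xi$ in the right-hand side.

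The core step is a one-iteration contraction-plus-noise estimate: I claim that there are absolute constants such that, writing $d_t \equiv \|X_t - X^*\|_F$ and assuming inductively $d_t \le \sigma_r^*/c$, one iteration gives $d_{t+1} \le \gamma' d_t^2 + C\epsilon$ for suitable $\gamma'$ (comparable to $\gamma$) and a modest constant $C$. To prove this, I would follow the GNMR-style analysis referenced from \cite{zilber2022gnmr}: express the minimal-norm least-squares solution via the pseudoinverse of the Jacobian $J_t$ (whose action is $(\Delta U,\Delta V)\mapsto \mathcal A(U_t\Delta V^\top + \Delta U V_t^\top)$), use the RIP together with the balance and accuracy bounds to lower-bound the smallest nonzero singular value of $J_t$ restricted to the relevant subspace (this is where $\sigma_r^*$ enters and where we need $d_t$ not too large so that $U_t,V_t$ remain well-conditioned), and then decompose the residual $X_{t+1}-X^*$ into (a) a quadratic term coming from the neglected bilinear part $A\Delta U\Delta V^\top B^\top$, bounded by $\gamma' d_t^2$ exactly as in Theorem \ref{thm:GNIMC_guarantee}, and (b) a linear term from $\xi$ of size $O(\epsilon/\sigma_{\min}(J_t)\cdot \sigma_1)\lesssim \epsilon$ after the SVD re-balancing. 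Finally I would unroll the recursion $d_{t+1}\le \gamma' d_t^2 + C\epsilon$: setting $\beta_t = \gamma' d_t$, it becomes $\beta_{t+1}\le \beta_t^2 + \gamma' C\epsilon$; since $\beta_0 \le \gamma'\sigma_r^*/c \le 1/2$ and $\gamma' C\epsilon$ is small (the hypothesis $\epsilon \le \sigma_r^*/(9c)$ guarantees this, with the constants $9$ and $6$ chosen to close the arithmetic), the iteration stays in the basin and converges, yielding $d_t \le \sigma_r^*/(4^{2^t-1}c) + 6\epsilon$ by a routine induction bounding the "fast-decaying" homogeneous part by a doubly-exponential term and the "noise floor" part by a geometric series summing to $\le 6\epsilon$.

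The main obstacle I anticipate is the per-iteration perturbation estimate (b): controlling how the additive error $\xi$ propagates through the minimal-norm least-squares solve and, crucially, through the subsequent SVD-based balancing step \eqref{eq:balancing_step}. The balancing step is nonlinear, so one must argue that projecting the (already nearly balanced, nearly correct) iterate onto the balanced manifold does not amplify the $O(\epsilon)$ error — this uses a perturbation bound for the SVD / polar factors together with the fact that $X_{t+1}$ is close to the rank-$r$ matrix $X^*$ whose smallest singular value is $\sigma_r^*$, so the relevant singular subspaces are stable. A secondary technical point is ensuring the inductive hypothesis $d_t \le \sigma_r^*/c$ is preserved for \emph{all} $t$, not just initially; this follows because $\gamma' d_t^2 \le d_t/2$ in the basin and $C\epsilon \le \sigma_r^*/(2c)$ by the error assumption, but it must be checked that the constants are mutually consistent. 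Everything else — the RIP invocation, the Jacobian conditioning, the decomposition of the residual — is parallel to the noiseless proof of Theorem \ref{thm:GNIMC_guarantee} and to the GNMR analysis, so the new content is precisely the $\epsilon$-bookkeeping through these two nonlinear steps.
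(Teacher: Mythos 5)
Your first paragraph is exactly the paper's entire proof: the paper establishes via \cref{thm:IMC_RIP} that the sensing operator \eqref{eq:sensingOperator_IMC} satisfies a $\min\{d_1,d_2\}$-RIP with constant $\delta \leq 1/2$, and then cites the noisy matrix-sensing guarantee for \texttt{GNMR} (Theorem~3.4 of \cite{zilber2022gnmr}) as a black box — that is the whole argument, two sentences long. Everything you write after that first paragraph (the one-step recursion $d_{t+1}\le \gamma' d_t^2 + C\epsilon$, the Jacobian conditioning, the perturbation analysis of the SVD balancing step, the unrolling into a doubly-exponential term plus a noise floor of $6\epsilon$) is a re-derivation of the internals of that cited theorem rather than anything the paper itself proves. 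Your sketch of those internals is structurally faithful to how the \GNMR analysis goes, and the obstacles you flag (propagating $\epsilon$ through the minimal-norm solve and through the nonlinear re-balancing, and checking that the constants $9$ and $6$ close the induction) are real and are precisely what the cited work resolves; but within the logic of this paper they do not need to be resolved again. What the paper's route buys is that the RIP result \cref{thm:IMC_RIP} does all the IMC-specific work and every RIP-based matrix-sensing guarantee transfers for free; what your route would buy is a self-contained proof at the cost of reproducing a substantial external analysis whose details you have only outlined, not completed.
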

In the absence of errors, $\epsilon = 0$, this result reduces to the exact recovery guarantee with quadratic rate of \cref{thm:GNIMC_guarantee}.

%%%%%%%%%%%%%%%%%%%%%%%%%%%%%%%%%%%%%%%%%%%%%%%%%
\subsection{Comparison to prior art}\label{sec:theory_comparison}

Here we describe recovery guarantees for three other algorithms. We compare them only to \cref{thm:GNIMC_guarantee}, as none of these works derived a stability to error result analogous to our \cref{thm:GNIMC_guarantee_noisy}.
%: nuclear norm minimization (\texttt{Maxide}) \cite{xu2013speedup}, alternating minimization (\texttt{AltMin}) \cite{jain2013provable} and multi-phase Procrustes flow (\texttt{MPPF}) \cite{zhang2018fast}.
A summary appears in \cref{table:theory_summary}.
In the following, let $n = \max\{n_1,n_2\}$ and $d = \max\{d_1,d_2\}$.
For works which require incoherence condition on several matrices, we use for simplicity the same incoherence coefficient $\mu$. All guarantees are w.p.~at least $1-\mathcal O(1/n)$.

\textbf{Nuclear norm minimization (Maxide)} \cite{xu2013speedup}. If (i)  both $X^*$ and $A,B$ are $\mu$-incoherent, (ii) $\|LR^\top\|_\infty \leq \mu r/(n_1n_2)$ where $L\Sigma R$ is the SVD of $X^*$, (iii) $d_1d_2 + r^2 \geq 8[1+\log_2(d/r)](d_1+d_2)r$, and (iv)
\begin{align}\label{eq:Maxide_Omega}
|\Omega| \gtrsim \mu^2 r d [1 + \log (d/r)] \log n,
\end{align}
then \texttt{Maxide} exactly recovers $X^*$.
%However, this method is computationally slow, and the authors give no guarantee on its convergence rate.

\textbf{Alternating minimization} \cite{jain2013provable}. If $A,B$ are $\mu$-incoherent and%\footnote{The original requirement is with $\log d$ rather than $\log n$. However, the respective probability of the guarantee also increases with $d$ rather than with $n$, as in the other guarantees.}
\begin{align}\label{eq:AltMin_Omega}
|\Omega| \gtrsim \kappa^2 \mu^4 r^3 d_1 d_2 \log n \log (1/\epsilon),
\end{align}
then \texttt{AltMin} recovers $X^*$ up to error $\epsilon$ in spectral norm at a linear rate with a constant contraction factor.

\textbf{Multi-phase Procrustes flow} \cite{zhang2018fast}. If both $X^*$ and $A,B$ are $\mu$-incoherent and
\begin{align}\label{eq:MPPF_Omega}
|\Omega| \gtrsim \max\{\kappa r, d\} \kappa^2 \mu^2 r^2 \log d \log n,
\end{align}
then \texttt{MPPF} recovers $X^*$ at a linear rate with a contraction factor smaller than $1 - \mathcal O(1/(r\kappa))$.\footnote{When the estimation error decreases below $\mathcal O(1/(\mu d))$, the contraction factor is improved to $1 - \mathcal O(1/\kappa)$.}
%The time complexity of \texttt{MPPF} is $\mathcal O(f(\kappa, \mu)\cdot n^{3/2} d^2 r^3 \log d \log n)$, where $f(\kappa, \mu)$ is a known unspecified function of $\kappa$ and $\mu$.
This guarantee implies a required number of iterations which may scale linearly with $\kappa$, as is indeed empirically demonstrated in \cref{fig:convergence_recoveryVsCN}(right).
%Finally, \texttt{MPPF} performs sample splitting of the observation set $\Omega$, and requires estimates of the maximal and minimal nonzero singular values of $X^*$.

Notably, in terms of the dimensions $n, d, r$, the sample complexity for \texttt{Maxide} \eqref{eq:Maxide_Omega} is order optimal up to logarithmic factors. However, their guarantee requires few additional assumptions, including incoherent $X^*$. Also, from a practical point of view, \texttt{Maxide} is computationally slow and not easily scalable to large matrices (see \cref{fig:convergence_recoveryVsCN}(left)). In contrast, \GNIMC is computationally much faster and does not require $X^*$ to be incoherent, a relaxation which can be important in practice as discussed in the introduction.
%efficient as it optimizes the factorized (non-convex) objective \eqref{eq:IMC_factorized}.
Furthermore, our sample complexity requirement \eqref{eq:GNIMC_guarantee_sampleComplexity} is the only one independent of the condition number without requiring incoherent $X^*$.
Compared to the other factorization-based methods, our sample complexity is strictly better than that of \texttt{AltMin}, and better than \texttt{MPPF} if $\min\{d_1,d_2\} \lesssim \kappa^2 r^2 \log d$. Since $\min\{d_1,d_2\} \leq r^2$ is a practical setting (see e.g.~\cite[Section~4.4]{natarajan2014inductive} and \cite[Sections~6.1-6.2]{zhang2018fast}), our complexity is often smaller than that of \texttt{MPPF} even for well-conditioned matrices.
In fact, if $\min\{d_1,d_2\} \leq 54r$, then our guarantee is the sharpest, as condition (iii) of \texttt{Maxide} is violated.
In addition, to the best of our knowledge, \GNIMC is the only method with a quadratic convergence rate guarantee.
Finally, its contraction factor is constant, and in particular independent of the rank $r$ and the condition number $\kappa$.
%of problem parameters as the rank $r$ or the condition number $\kappa$.

We conclude this subsection with a computational complexity comparison.
Among the above works, only the computational complexity of \texttt{MPPF} was analyzed, and it is given by $\mathcal O(f(\kappa, \mu) \cdot n^{3/2} d^2 r^3 \log d \log n)$ where $f(\kappa, \mu)$ is some function of $\kappa$ and $\mu$ which was left unspecified in \cite{zhang2018fast}. The dependence on the large dimension factor $n^{3/2}$ implies that \texttt{MPPF} does not exploit the available side information in terms of computation time.
Our complexity guarantee, \cref{proposition:time_complexity}, is fundamentally better. In particular, it depends on $n$ only logarithmically, and is independent of the condition number $\kappa$. This independence is demonstrated empirically in \cref{fig:convergence_recoveryVsCN}(right).

%%%%%%%%%%%%%%%%%%%%%%%%%%%%%%%%%%%%%%%%%%%%%%%%%
%%%%%%%%%%%%%%%%%%%%%%%%%%%%%%%%%%%%%%%%%%%%%%%%%
\section{Simulation results} \label{sec:experiments}

\begin{figure}[t]
\centering
	\subfloat{
		\includegraphics[width=0.48\linewidth]{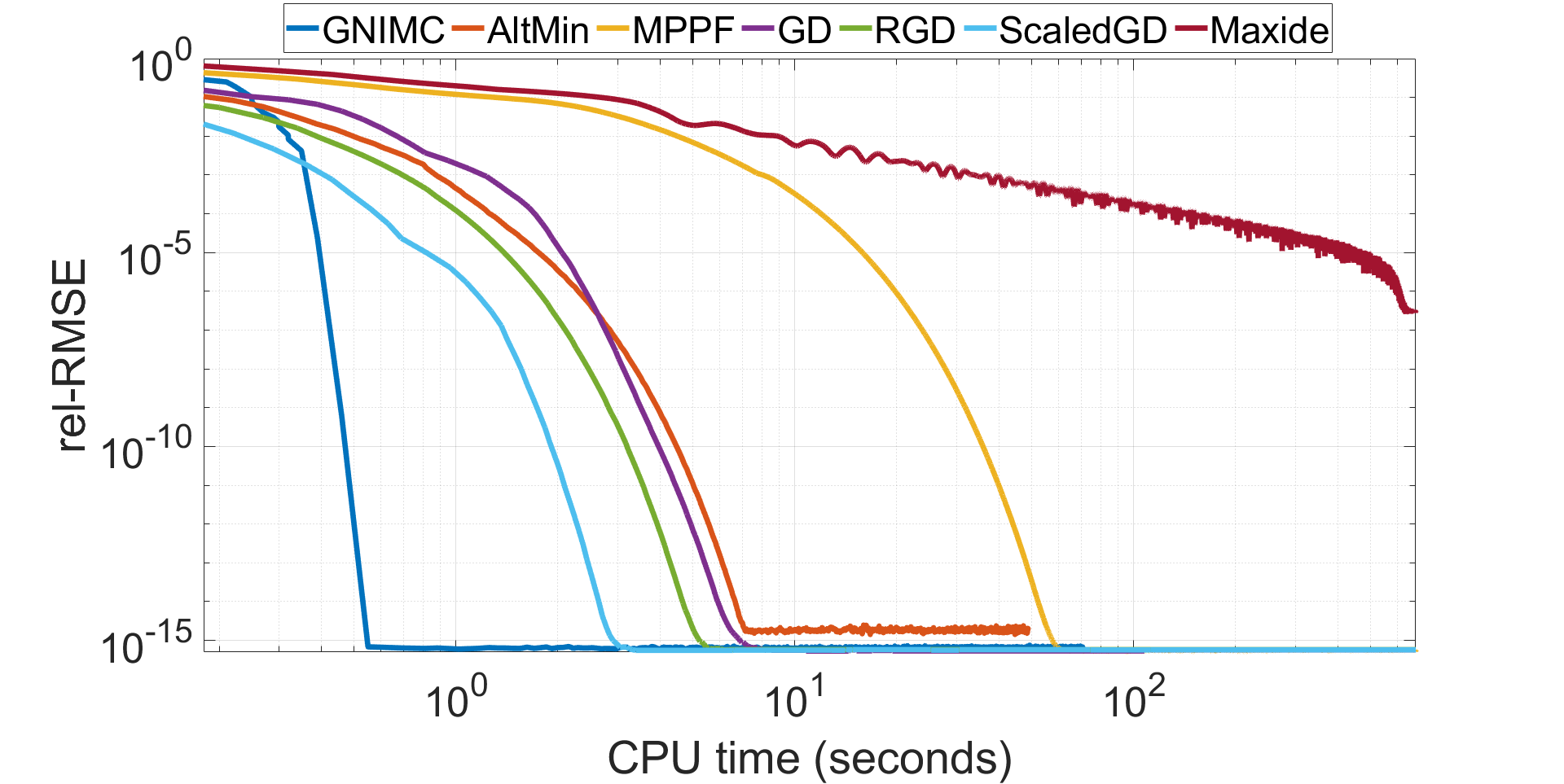}
		\label{fig:errorVsTime_n1000}
	}
	\subfloat{
%		 \raisebox{0.5\height}{
		 \includegraphics[width=0.48\linewidth]{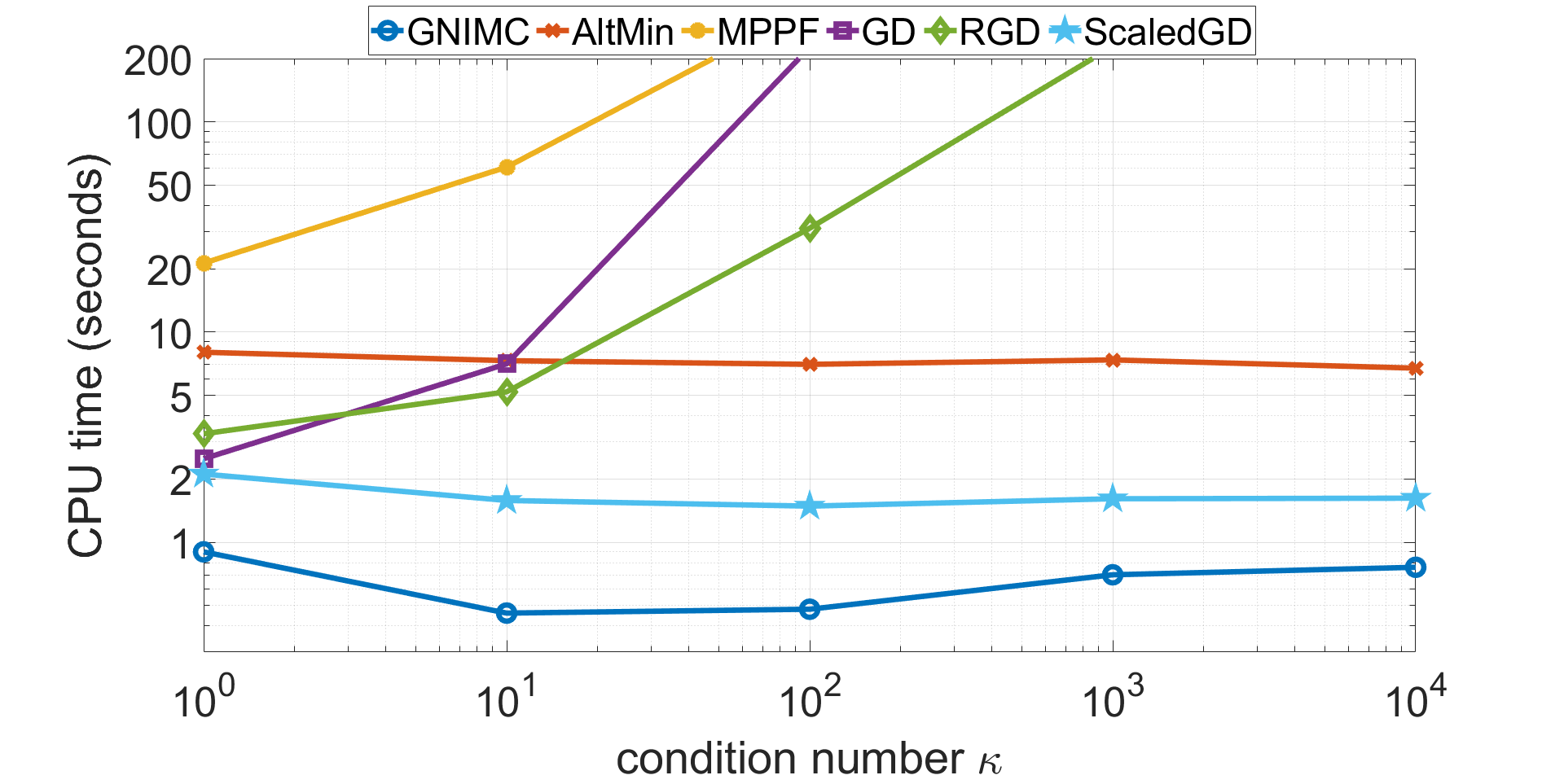}
%		 }
		\label{fig:timeVsCN_n1000}
 }
 \caption{Left panel: \texttt{rel-RMSE} \eqref{eq:relRMSE} as a function of CPU runtime for several IMC algorithms. Here $X^*$ has a condition number $\kappa=10$. Right panel: runtime till convergence as a function of $\kappa$, where each point corresponds to the median of $50$ independent realizations. In both panels, $X^* \in \mathbb R^{1000\times 1000}$, $A,B\in \mathbb R^{20\times 20}$, $r = 10$ and oversampling ratio $\rho = 1.5$.}
%\vspace{.3in}
\label{fig:convergence_recoveryVsCN}
\end{figure}

We compare the performance of \GNIMC to the following IMC algorithms, all implemented in MATLAB.\footnote{\label{fn:implementation}Code implementations of \GNIMC, \texttt{AltMin}, \texttt{GD} and \texttt{RGD} are available at \url{github.com/pizilber/GNIMC}.}
\texttt{AltMin} \cite{jain2013provable}: our implementation of alternating minimization including the QR decomposition for reduced runtime;
\texttt{Maxide} \cite{xu2013speedup}: nuclear norm minimization as implemented by the authors;%
\footnote{\url{www.lamda.nju.edu.cn/code_Maxide.ashx}}
\texttt{MPPF} \cite{zhang2018fast}: multi-phase Procrustes flow as implemented by the authors;%
\footnote{\url{github.com/xiaozhanguva/Inductive-MC}}
\texttt{GD}, \texttt{RGD}: our implementations of vanilla gradient descent (\texttt{GD}) and a variant regularized by an imbalance factor $\|U^\top U - V^\top V\|_F$ (\texttt{RGD});
and \texttt{ScaledGD} \cite{tong2021accelerating}: a preconditioned variant of gradient descent.%
\footnote{\url{github.com/Titan-Tong/ScaledGD}. We adapted the algorithm, originally designed for matrix completion, to the IMC problem. In addition, we implemented computations with sparse matrices to enhance its performance.}
Details on initialization, early stopping criteria and a tuning scheme for the hyperparameters of \texttt{Maxide}, \texttt{MPPF}, \texttt{RGD} and \texttt{ScaledGD} appear in \cref{sec:additional_experimental_details}. \GNIMC and \texttt{AltMin} require no tuning. 

In each simulation we construct $U \in \mathbb R^{d_1\times r}$, $V \in \mathbb R^{d_2\times r}$, $A \in \mathbb R^{n_1\times d_1}$, $B \in \mathbb R^{n_2\times d_2}$ with entries i.i.d.~from the standard normal distribution, and orthonormalize their columns. We then set $X^* = AUDV^\top B^\top$ where $D\in \mathbb R^{r\times r}$ is diagonal with entries linearly interpolated between $1$ and $\kappa$.
A similar scheme was used in \cite{zhang2018fast}, with a key difference that we explicitly control the condition number of $X^*$ to study how it affects the performance of the various methods.
Next, we sample $\Omega$ of a given size $|\Omega|$ from the uniform distribution over $[n_1]\times [n_2]$.
%(without replacement).
Since $A$ and $B$ are known, the $n_1\times n_2$ matrix $X^*$ has only $(d_1+d_2-r)r$ degrees of freedom. Denote the oversampling ratio by $\rho = \frac{|\Omega|}{(d_1+d_2-r)r}$. As $\rho$ is closer to the information limit value of $1$, the more challenging the problem becomes.
Notably, our simulations cover a broad range of settings, including much fewer observed entries and higher condition numbers than previous studies \cite{xu2013speedup,zhang2018fast}.

We measure the quality of an estimate $\hat X$ by its relative RMSE,
\begin{align}\label{eq:relRMSE}
\texttt{rel-RMSE} = \frac{\|X^* - \hat X\|_F}{\|X^*\|_F}.
\end{align}
First, we explore the convergence rate of the various algorithms, by comparing their relative RMSE as a function of runtime, in the setting $n_1 = n_2 = 1000$, $d_1 = d_2 = 20$, $r = \kappa = 10$ and $\rho = 1.5$ (sampling rate $p = 0.045\%$).
Representative results of a single instance of the simulation, illustrating the behavior of the algorithms near convergence, are depicted in \cref{fig:convergence_recoveryVsCN}(left).
As shown in the figure, \GNIMC converges much faster than the competing algorithms due to its quadratic convergence rate.

Next, we examine how the runtime of each algorithm is affected by the number of observations and by the condition number.
The runtime is defined as the CPU time required for the algorithm to (i) converge, namely satisfy one of the stopping criteria (detailed in \cref{sec:additional_experimental_details}), and (ii) achieve $\texttt{rel-RMSE} \leq 10^{-4}$. If the runtime exceeds $20$ minutes without convergence, the run is stopped.

Figures~\ref{fig:convergence_recoveryVsCN}(right) and \ref{fig:timeVsOversampling}(left) show the median recovery time on a log scale as a function of the condition number and of the oversampling ratio, respectively, in the same setting as above. \Cref{fig:timeVsOversampling}(right) corresponds to a larger matrix with $n_1 = 20000$, $n_2 = 1000$, $d_1 = 100$, $d_2 = 50$, $r = 5$ and $\kappa = 10$.
%In all these experiments, the superiority of \GNIMC is clearly demonstrated, both at low and high condition numbers, as well as small and large ratios between the side information dimension $d$ and the underlying rank $r$.
%Note that in these figures, the total runtime on the y-axis is on a logarithmic scale.
Evidently, under a broad range of conditions, \GNIMC is faster than the competing methods, in some cases by an order of magnitude.
In general, the advantage of \GNIMC with respect to the competing methods is more significant at low oversampling ratios.

Remarkably, the runtime of \GNIMC, \texttt{AltMin} and \texttt{ScaledGD} shows almost no sensitivity to the condition number, as illustrated in \cref{fig:convergence_recoveryVsCN}(right). For \GNIMC, this empirical observation is in agreement with \cref{proposition:time_complexity}, which states that the computational complexity of \GNIMC does not depend on the condition number. In contrast, the runtime of the non-preconditioned gradient descent methods increases approximately linearly with the condition number.

Additional simulation results, including demonstration of the stability of \GNIMC to noise, appear in \cref{sec:additional_experimental_results}.

\begin{figure}[t]
\centering
	\subfloat{
		\includegraphics[width=0.48\linewidth]{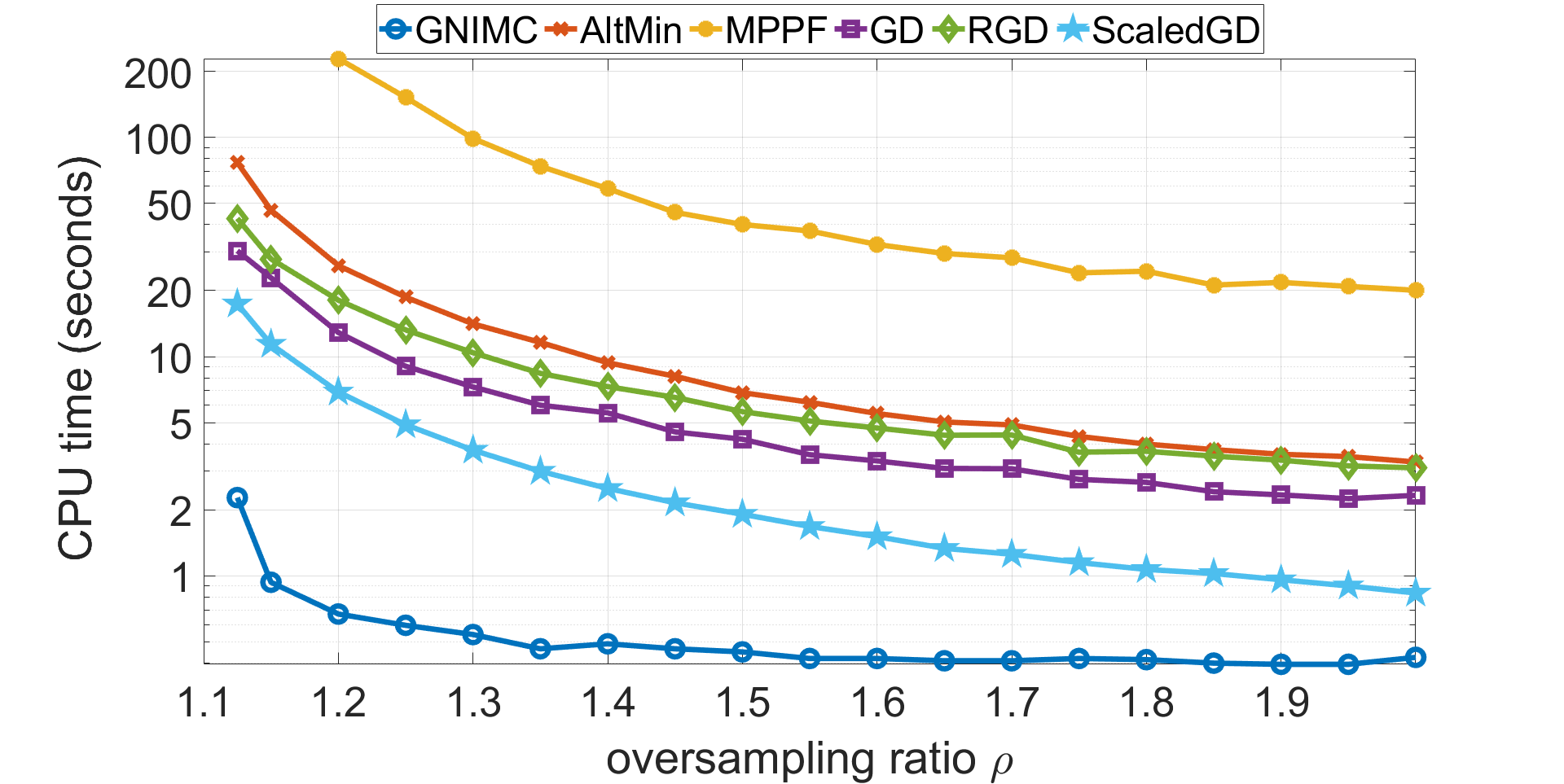}
		\label{fig:timeVsOversampling_n1000}
		}
	\subfloat{
		\includegraphics[width=0.48\linewidth]{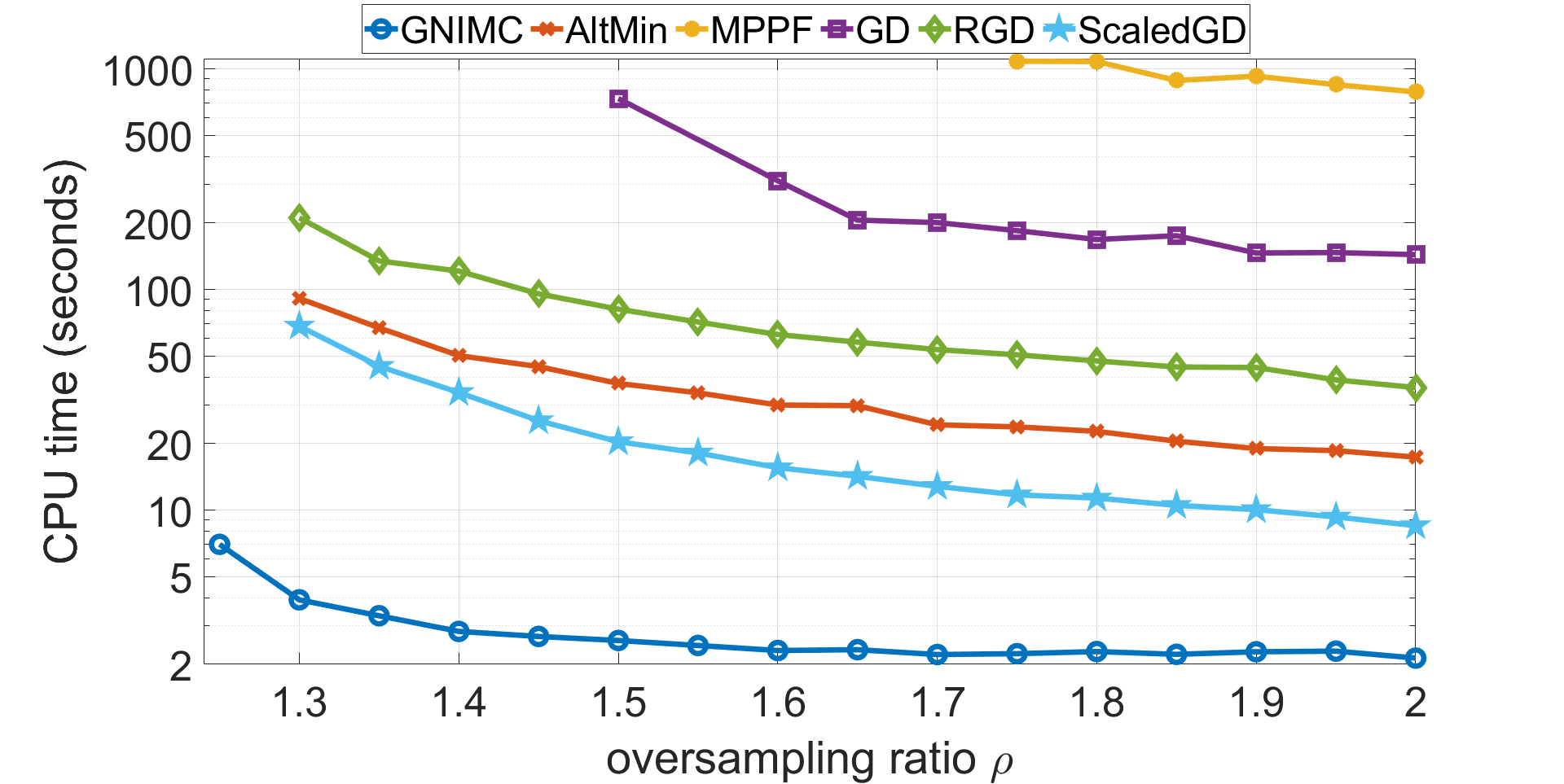}
		\label{fig:timeVsOversampling_n10000}
	}
%\vspace{.3in}
\caption{CPU runtime till convergence as a function of the oversampling ratio for several IMC algorithms. Left panel: $n_1 = n_2 = 1000$, $d_1 = d_2 = 20$ and $r = 10$. Right panel: $n_1 = 20000$, $n_2 = 1000$, $d_1 = 100$, $d_2 = 50$ and $r = 5$. In both panels $\kappa = 10$. Each point corresponds to the median of $50$ independent realizations.}
\label{fig:timeVsOversampling}
\end{figure}

%%%%%%%%%%%%%%%%%%%%%%%%%%%%%%%%%%%%%%%%%%%%%%%%%
\subsection{Demonstration of the rank estimation scheme}\label{sec:experiments_rankEstimate}
In this subsection we demonstrate the accuracy of our proposed rank estimation scheme \eqref{eq:rankEstimate_scheme}. \Cref{fig:rankEstimate} compares the estimated singular gaps $\hat g_i$ with the true ones $g_i$ for a matrix of approximate rank $r=5$ and only $p = 0.1\%$ observed entries. We tested two values of $D$: $D=0$ and $D=(\sqrt{d_1d_2}/|\Omega|)^{1/2}$. The qualitative behavior depicted in the figure did not change in 50 independent realizations of the simulation. In particular, the estimated rank $\hat r = \max_i \hat g_i$ was always $5$ for both values of $D$.

The figure also demonstrates the trade-off in the choice of the value of $D$: for larger $D$, $\hat g_i$ is a more accurate estimate of $g_i$, but it also distorts the exact singular gaps $\sigma_i^*/\sigma^*_{i+1}$, especially at their tail (large values of $i$). Hence, in general, nonzero $D$ is suitable in case the rank of $X^*$ is expected to be relatively low compared to $d_1,d_2$.

\begin{figure}[t]
\centering
	\subfloat{
		\includegraphics[width=0.48\linewidth]{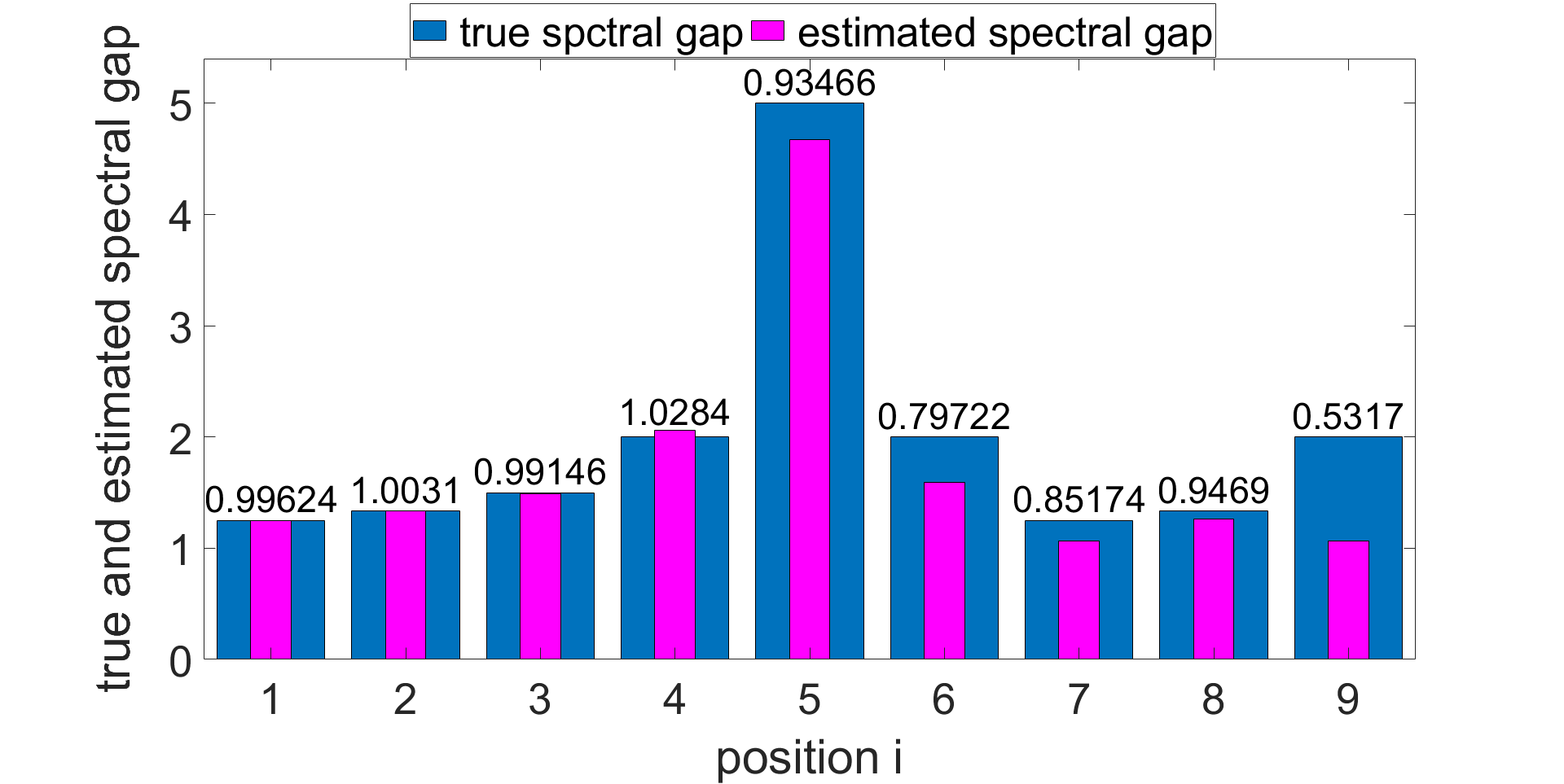}
		\label{fig:rankEstimate_withD}
	}
	\subfloat{
%		 \raisebox{0.5\height}{
		 \includegraphics[width=0.48\linewidth]{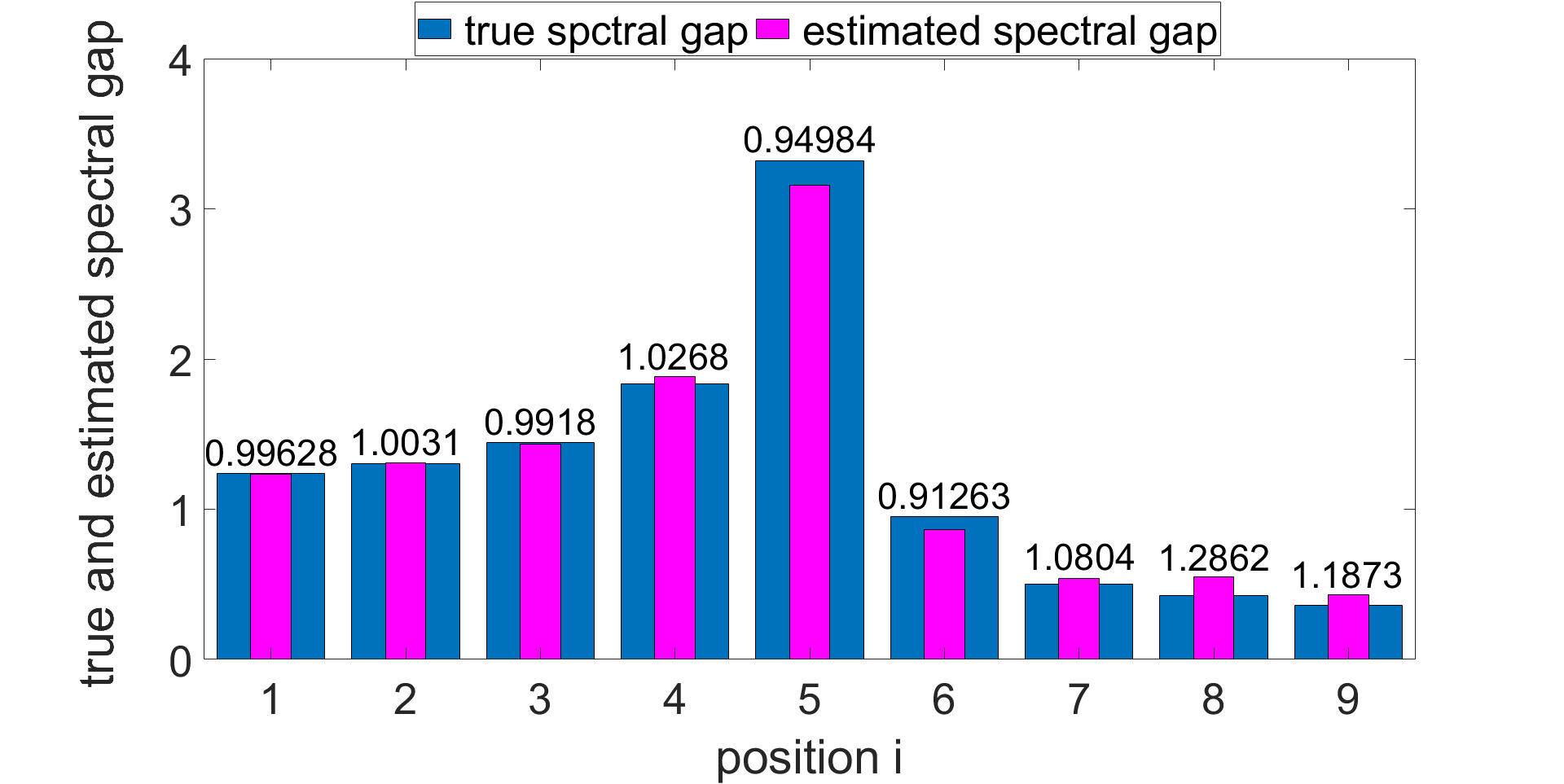}
%		 }
		\label{fig:rankEstimate_noD}
 }
 \caption{The estimated spectral gaps $\hat g_i$ (inner magenta) compared to the true ones $g_i$ (outer blue) as defined in \eqref{eq:rankEstimate_scheme}, for $X^*\in \mathbb R^{30000\times 10000}$ of approximate rank $r=5$ with singular values $[5, 4, 3, 2, 1, 0.2, 0.1, 0.08, 0.06, 0.03]$, side information $d_1 = 30$, $d_2 = 20$, and sampling rate $p = 0.1\%$. The numbers above the bars indicate the ratio $\hat g_i / g_i$. Left panel: $D = 0$. Right panel: $D = (\sqrt{d_1d_2}/|\Omega|)^{1/2} \approx 0.009$.}
\label{fig:rankEstimate}
\end{figure}

%%%%%%%%%%%%%%%%%%%%%%%%%%%%%%%%%%%%%%%%%%%%%%%%%
%%%%%%%%%%%%%%%%%%%%%%%%%%%%%%%%%%%%%%%%%%%%%%%%%
\section{Summary and Discussion}\label{sec:discussion}
In this work, we presented three contributions to the IMC problem: benign optimization landscape guarantee; provable rank estimation scheme; and a simple Gauss-Newton based method, \GNIMC, to solve the IMC problem. We derived recovery guarantees for \GNIMC, and showed empirically that it is faster than several competing algorithms.
A key theoretical contribution is a proof that under relatively mild conditions, IMC satisfies an RIP, similar to the matrix sensing problem.

Interestingly, in our simulations \GNIMC recovers the matrix significantly faster than first-order methods, including a very recent one due to \cite{tong2021accelerating}.
A possible explanation is that \GNIMC makes large non-local updates, yet with the same time complexity as a single local gradient descent step.
This raises the following intriguing questions: are there other non-convex problems for which non-local methods are faster than first-order ones?
In particular, can these ideas be extended to faster training of deep neural networks?

Another interesting direction is extending our method to generalized frameworks of IMC. Important examples include recovering an unknown low rank $X^*$ which lies in some known linear subspace instead of property \eqref{eq:sideInformation} \cite{jawanpuria2018unified}, and non-linear IMC \cite{zhong2019provable}.

\iffalse
\red{\sout{Another possible future research is analyzing our method under more general settings.
One direction with great practical importance is observations corrupted by outliers, or when property \eqref{eq:sideInformation} of the side information holds only approximately.}}
\red{Finally, another possible future research is adapting our method to the}
generalized framework considered by Jawanpuria and Mishra \yrcite{jawanpuria2018unified}: instead of property \eqref{eq:sideInformation}, which may be interpreted as features, the structured low-rank matrix $X^*$ is assumed to lie in some known linear subspace. Although our Gauss-Newton approximation still makes the objective linear, the side information constraint may no longer be incorporated into the objective, making the optimization more challenging.
\fi

\iffalse
% Acknowledgements should only appear in the accepted version.
\section*{Acknowledgements}

\textbf{Do not} include acknowledgements in the initial version of
the paper submitted for blind review.

If a paper is accepted, the final camera-ready version can (and
probably should) include acknowledgements. In this case, please
place such acknowledgements in an unnumbered section at the
end of the paper. Typically, this will include thanks to reviewers
who gave useful comments, to colleagues who contributed to the ideas,
and to funding agencies and corporate sponsors that provided financial
support.
\fi

% In the unusual situation where you want a paper to appear in the
% references without citing it in the main text, use \nocite
\nocite{langley00}

\bibliography{IMC}
\bibliographystyle{alpha}

%%%%%%%%%%%%%%%%%%%%%%%%%%%%%%%%%%%%%%%%%%%%%%%%%%%%%%%%%%%%%%%%%%%%%%%%%%%%%%%
%%%%%%%%%%%%%%%%%%%%%%%%%%%%%%%%%%%%%%%%%%%%%%%%%%%%%%%%%%%%%%%%%%%%%%%%%%%%%%%
% APPENDIX
%%%%%%%%%%%%%%%%%%%%%%%%%%%%%%%%%%%%%%%%%%%%%%%%%%%%%%%%%%%%%%%%%%%%%%%%%%%%%%%
%%%%%%%%%%%%%%%%%%%%%%%%%%%%%%%%%%%%%%%%%%%%%%%%%%%%%%%%%%%%%%%%%%%%%%%%%%%%%%%
\newpage
\appendix
\onecolumn

\textbf{Additional notation}.
In the following appendices, the Frobenius inner product between two matrices is denoted by $\braket{X,Y} = \Tr(Y^\top X)$, where $\Tr$ denotes the matrix trace.
The adjoint of an operator $\mathcal P$ is denoted by $\mathcal P^*$.
The spectral norm of an operator $\mathcal P$ that acts on matrices is defined as $\mathcal \|\mathcal P\| = \max_X \|\mathcal P(X)\|_F / \|X\|_F$.

\section{Proof of \cref{thm:IMC_RIP} (RIP for IMC)} \label{sec:proof_IMC_RIP}
In the following subsection we state and prove a novel RIP guarantee that is key to the connection between IMC and matrix sensing.
Then, in the next subsection, we use this result to prove \cref{thm:IMC_RIP}.
\subsection{An auxiliary lemma}
To present our RIP result in the context of IMC, recall the definition of the linear operator $\mathcal P_{AB}: \mathbb R^{n_1\times n_2}\to \mathbb R^{n_1\times n_2}$ which projects a matrix $X$ into the row and column spaces of the isometry matrices $A$ and $B$, respectively,
\begin{align}\label{eq:PAB}
\mathcal P_{AB}(X) = AA^\top X BB^\top.
\end{align}
Note that since $\mathcal P_{AB}$ is a projection operator, $\|\mathcal P_{AB}\| = 1$.

\begin{lemma}\label{lem:AB_RIP}
Let $A \in \mathbb R^{n_1\times d_1}$ and $B \in \mathbb R^{n_2\times d_2}$ be two isometry matrices such that $\|A\|_{2,\infty} \leq \sqrt{\mu d_1/n_1}$ and $\|B\|_{2,\infty} \leq \sqrt{\mu d_2/n_2}$.
Let $\delta \in [0,1)$, and assume $\Omega \subseteq [n_1]\times [n_2]$ is uniformly sampled with $|\Omega| \equiv n_1n_2p \geq (8/\delta^2) \mu^2 d_1 d_2 \log n$ where $n = \max\{n_1,n_2\}$.
Then w.p.~at least $1-2n^{-2}$,
\begin{align}\label{eq:PAB_inequality}
\|\tfrac 1p \mathcal P_{AB} \mathcal P_\Omega \mathcal P_{AB} - \mathcal P_{AB} \| \leq \delta.
\end{align}
%Consequently, w.p.~at least $1-2n^{-2}$, the sensing operator $\mathcal A$ defined in \eqref{eq:sensingOperator_IMC} satisfies the RIP \eqref{eq:RIP} with $k = \min\{d_1,d_2\}$ and the constant $\delta$.
\end{lemma}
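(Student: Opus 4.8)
The plan is to express the operator $\tfrac 1p \mathcal P_{AB}\mathcal P_\Omega \mathcal P_{AB} - \mathcal P_{AB}$ as a sum of independent, mean-zero random operators indexed by the sampled coordinates, and then apply the matrix (operator) Bernstein inequality. Concretely, since $\Omega$ is uniformly sampled, I would model each coordinate $(i,j)\in[n_1]\times[n_2]$ as being included independently with probability $p$ (the standard Bernoulli relaxation; the small discrepancy with the exactly-$|\Omega|$ model is handled by a routine conditioning argument). Writing $\mathcal P_\Omega(X) = \sum_{(i,j)} \delta_{ij}\, \langle E_{ij}, X\rangle E_{ij}$ with $E_{ij} = e_i e_j^\top$ and $\delta_{ij}\sim\text{Bernoulli}(p)$, we get
\begin{align*}
\tfrac 1p \mathcal P_{AB}\mathcal P_\Omega \mathcal P_{AB} - \mathcal P_{AB} = \sum_{(i,j)} \left(\tfrac{\delta_{ij}}{p} - 1\right) \mathcal Z_{ij}, \qquad \mathcal Z_{ij}(X) := \langle E_{ij}, \mathcal P_{AB}(X)\rangle\, \mathcal P_{AB}(E_{ij}),
\end{align*}
where I have used that $\mathcal P_{AB}$ is self-adjoint. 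Here each summand $\mathcal S_{ij} := (\delta_{ij}/p - 1)\mathcal Z_{ij}$ is a mean-zero random operator on $\mathbb R^{n_1\times n_2}$, and they are mutually independent. Note $\sum_{(i,j)}\mathcal Z_{ij} = \mathcal P_{AB}\mathcal P_{AB} = \mathcal P_{AB}$ because $\{E_{ij}\}$ is an orthonormal basis, which is why the centering is exactly $\mathcal P_{AB}$.

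Next I would bound the two quantities needed for operator Bernstein: the uniform bound $L := \max_{ij}\|\mathcal S_{ij}\|$ and the variance proxy $v := \|\sum_{ij}\mathbb E[\mathcal S_{ij}^2]\|$ (here $\|\cdot\|$ is the operator norm on operators acting on matrices, as defined in the Additional Notation). The key computation is $\|\mathcal P_{AB}(E_{ij})\|_F$: since $\mathcal P_{AB}(E_{ij}) = (AA^\top e_i)(BB^\top e_j)^\top$, its Frobenius norm is $\|AA^\top e_i\|\,\|BB^\top e_j\| = \|A^\top e_i\|\,\|B^\top e_j\| \le \sqrt{\mu d_1/n_1}\,\sqrt{\mu d_2/n_2}$ using the incoherence hypothesis. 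The rank-one operator $\mathcal Z_{ij}$ therefore has operator norm $\|\mathcal P_{AB}(E_{ij})\|_F^2 \le \mu^2 d_1 d_2/(n_1 n_2)$, so $L \le \tfrac1p \mu^2 d_1 d_2/(n_1 n_2)$. For the variance, $\mathbb E[(\delta_{ij}/p-1)^2] = (1-p)/p \le 1/p$, and $\mathcal Z_{ij}^2 = \|\mathcal P_{AB}(E_{ij})\|_F^2\,\mathcal Z_{ij}$ (again because $\mathcal Z_{ij}$ acts along the single direction $\mathcal P_{AB}(E_{ij})$ up to the weight), so $\sum_{ij}\mathbb E[\mathcal S_{ij}^2] \preceq \tfrac1p \max_{ij}\|\mathcal P_{AB}(E_{ij})\|_F^2 \sum_{ij}\mathcal Z_{ij} = \tfrac1p \max_{ij}\|\mathcal P_{AB}(E_{ij})\|_F^2\, \mathcal P_{AB}$, giving $v \le \tfrac1p \mu^2 d_1 d_2/(n_1 n_2)$ as well. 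Then operator Bernstein yields, for $\delta\in[0,1)$,
\begin{align*}
\Pr\!\left[\left\|\tfrac1p\mathcal P_{AB}\mathcal P_\Omega\mathcal P_{AB} - \mathcal P_{AB}\right\| > \delta\right] \le 2 (n_1 n_2)\exp\!\left(-\frac{\delta^2/2}{v + L\delta/3}\right) \le 2 n^2 \exp\!\left(-\frac{3\delta^2 p\, n_1 n_2}{8\mu^2 d_1 d_2}\right),
\end{align*}
and plugging in $|\Omega| = p n_1 n_2 \ge (8/\delta^2)\mu^2 d_1 d_2\log n$ makes the exponent at most $-3\log n$, so the right-hand side is at most $2n^{-1} \le 2n^{-2}$ after absorbing the polynomial prefactor (a small constant adjustment; one may equally state the bound as $2n^2\cdot e^{-3\log n}$ and note it is $\le 2n^{-1}$, or tighten the constant from $8$ slightly to land exactly at $2n^{-2}$ — the paper's constant $8$ is chosen to make this work).

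The main obstacle — or rather the main place care is needed — is the ambient dimension appearing in the Bernstein matricial prefactor. A naive application on operators acting on $\mathbb R^{n_1\times n_2}$ gives a prefactor of $n_1 n_2$ (the dimension of the space the operators act on), which is exactly what the $\log n$ in the sample complexity is sized to kill; one must be slightly careful that the effective dimension is not larger, which it is not since the operators genuinely act on the $n_1 n_2$-dimensional matrix space (and in fact all the action is confined to the $d_1 d_2$-dimensional range of $\mathcal P_{AB}$, which could even be used to sharpen the prefactor, though it is not needed here). The secondary technical point is the passage between the "exactly $|\Omega|$ entries" sampling model and the Bernoulli($p$) model, which is standard: a uniform $|\Omega|$-subset is a mixture of Bernoulli samples conditioned on size, and a monotonicity/coupling argument (as in \cite{candes2009exact,recht2010guaranteed}) transfers the tail bound with at most a benign loss absorbed into constants. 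Everything else is the routine computation of $L$ and $v$ sketched above, driven entirely by the incoherence bound $\|\mathcal P_{AB}(E_{ij})\|_F^2 \le \mu^2 d_1 d_2/(n_1 n_2)$.
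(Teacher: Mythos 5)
Your proposal follows essentially the same route as the paper: write $\tfrac 1p \mathcal P_{AB}\mathcal P_\Omega\mathcal P_{AB}-\mathcal P_{AB}$ as a sum of independent mean-zero rank-one operators, bound the uniform norm and the variance proxy via the single incoherence computation $\|\mathcal P_{AB}(e_ie_j^\top)\|_F^2\leq \mu^2 d_1 d_2/(n_1n_2)$ (including the step $\mathcal Z_{ij}^2\preccurlyeq\|\mathcal Z_{ij}\|\,\mathcal Z_{ij}$ and $\sum_{ij}\mathcal Z_{ij}=\mathcal P_{AB}$), and apply the operator Bernstein inequality. The only structural difference is the sampling model: you use the Bernoulli($p$) relaxation, whereas the paper uses sampling with replacement (Recht's $\mathcal R_{\Omega'}$) and transfers back via the ``less information'' argument of \cite{recht2011simpler}; both transfers are standard and of comparable weight, and both yield the same exponent $-3\delta^2 m/(8\mu^2 d_1 d_2)$.

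One quantitative point does not close as written: with your dimensional prefactor $n_1n_2\leq n^2$, the exponent $-3\log n$ yields $2n^{-1}$, and the inequality ``$2n^{-1}\leq 2n^{-2}$'' is false as stated — you correctly identify that the remedy is to enlarge the constant $8$ (e.g.\ to make the exponent $-4\log n$), so this is a constant-level adjustment rather than a conceptual gap. For comparison, the paper reaches $2n^{-2}$ by carrying the prefactor $n_1+n_2\leq 2n$ from its statement of the Bernstein inequality for $n_1\times n_2$ matrices directly into the operator setting; since the summands act on the $n_1n_2$-dimensional matrix space (with all action confined to the $d_1d_2$-dimensional range of $\mathcal P_{AB}$, giving an intrinsic prefactor of $2d_1d_2$), your more conservative accounting of the prefactor is the more defensible one, and either way the statement survives with a slightly larger numerical constant.
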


%The proof of \eqref{eq:PAB_inequality} follows the lines of the proof of \cite[Theorem~6]{recht2011simpler}, with adaptations to the case of inductive matrix completion.
The numerical factor $8$ in the bound on the sample complexity $|\Omega|$ of \cref{lem:AB_RIP} can be replaced by any other scalar $\beta$ strictly greater than $8/3$, resulting in a modified probability guarantee $1 - 2n^{1-3\beta/8}$. We remark that $8/3$ is strict for our proof technique, which builds upon Recht's work \cite{recht2011simpler}, but can be improved by a more careful analysis, see the discussion after Proposition~5 in \cite{recht2011simpler}.

The proof of \cref{lem:AB_RIP} uses the following matrix Bernstein inequality \cite[Theorem~1.6]{tropp2012user}.
\begin{lemma}\label{lem:Bernstein_inequality}
Consider a finite set $\{Z_k\}$ of independent, random matrices with dimensions $n_1\times n_2$.
Assume that each random matrix satisfies
\begin{align}\label{eq:Bernstein_inequality_condition}
\mathbb E[Z_k] = 0 \quad\mbox{and}\quad
\|Z_k\| \leq R \quad\mbox{almost surely}.
\end{align}
Define $\sigma^2 = \max\{\|\sum_k \mathbb E[Z_k Z_k^\top]\|, \|\sum_k \mathbb E[Z_k^\top Z_k]\|\}$.
Then, for all $t\geq 0$,
\begin{align}\label{eq:Bernstein_inequality}
\mathbb P\left[\left\|\sum_k Z_k \right\| \geq t\right] \leq (n_1+n_2) \exp\left(-\frac{t^2/2}{\sigma^2 + Rt/3}\right).
\end{align}
\end{lemma}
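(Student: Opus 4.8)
This is Tropp's matrix Bernstein inequality, and I would prove it by the matrix Laplace-transform (moment generating function) method. The overall plan rests on four pieces: a reduction from rectangular to Hermitian matrices via the Hermitian dilation, the matrix Laplace-transform tail bound, subadditivity of the matrix cumulant generating function, and a mean-zero single-summand mgf estimate; these are then assembled and a free parameter $\theta$ optimized. First I would reduce to the Hermitian case through the dilation
\[
\mathcal H(Z) = \begin{pmatrix} 0 & Z \\ Z^\top & 0 \end{pmatrix} \in \mathbb R^{(n_1+n_2)\times (n_1+n_2)} ,
\]
which is linear in $Z$ and satisfies $\|Z\| = \lambda_{\max}(\mathcal H(Z))$. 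Setting $X_k = \mathcal H(Z_k)$ yields independent, mean-zero, Hermitian summands with $\|X_k\| \leq R$ and $\|\sum_k Z_k\| = \lambda_{\max}(\sum_k X_k)$, so it suffices to bound $\mathbb P[\lambda_{\max}(\sum_k X_k) \geq t]$. The block identity $\mathcal H(Z)^2 = \operatorname{diag}(ZZ^\top, Z^\top Z)$ shows that the Hermitian variance proxy matches the stated one, $\|\sum_k \mathbb E[X_k^2]\| = \sigma^2$, and the ambient dimension is now $n_1+n_2$, exactly the prefactor appearing in \eqref{eq:Bernstein_inequality}.

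Writing $Y = \sum_k X_k$, the second piece is the matrix Laplace-transform bound: applying Markov's inequality to $\Tr e^{\theta Y}$ for $\theta > 0$ together with $e^{\theta \lambda_{\max}(Y)} \leq \Tr e^{\theta Y}$ gives $\mathbb P[\lambda_{\max}(Y) \geq t] \leq e^{-\theta t}\, \mathbb E\,\Tr\, e^{\theta Y}$. The third and deepest piece is subadditivity of the trace mgf, namely $\mathbb E\,\Tr \exp(\sum_k \theta X_k) \leq \Tr \exp(\sum_k \log \mathbb E\, e^{\theta X_k})$; this is the main obstacle, since it is a genuinely matrix-analytic fact that reduces to Lieb's concavity theorem for the map $A \mapsto \Tr \exp(H + \log A)$ and does not follow from any scalar manipulation.

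The fourth piece is elementary. For a real scalar with $|x| \leq R$ one has $e^{\theta x} \leq 1 + \theta x + f(\theta) x^2$ with $f(\theta) = (e^{\theta R} - \theta R - 1)/R^2$; since $X_k \preceq R I$, the transfer rule lifts this to $\mathbb E\, e^{\theta X_k} \preceq I + f(\theta)\,\mathbb E[X_k^2] \preceq \exp(f(\theta)\,\mathbb E[X_k^2])$, using $\mathbb E[X_k] = 0$ and $I + M \preceq e^{M}$, whence $\log \mathbb E\, e^{\theta X_k} \preceq f(\theta)\,\mathbb E[X_k^2]$. Summing these bounds, invoking monotonicity of $\Tr\exp$ under $\preceq$, and controlling the top eigenvalue via $\|\sum_k \mathbb E[X_k^2]\| = \sigma^2$ yields
\[
\mathbb E\,\Tr\, e^{\theta Y} \leq \Tr \exp\!\Big(f(\theta)\textstyle\sum_k \mathbb E[X_k^2]\Big) \leq (n_1+n_2)\exp(f(\theta)\sigma^2) .
\]
Combining with the Laplace bound gives $\mathbb P[\lambda_{\max}(Y)\geq t] \leq (n_1+n_2)\exp(-\theta t + f(\theta)\sigma^2)$ for every $\theta > 0$. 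Finally I would optimize, choosing $\theta = R^{-1}\log(1 + Rt/\sigma^2)$ and applying the scalar inequality $(1+u)\log(1+u) - u \geq \tfrac{u^2/2}{1 + u/3}$ with $u = Rt/\sigma^2$, which collapses the exponent to $-\tfrac{t^2/2}{\sigma^2 + Rt/3}$ and produces exactly \eqref{eq:Bernstein_inequality}.
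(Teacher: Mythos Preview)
Your outline is correct and is precisely Tropp's original argument. The paper itself does not prove this lemma; it simply quotes it as \cite[Theorem~1.6]{tropp2012user} and uses it as a black box in the proof of Lemma~A.1, so there is no paper proof to compare against beyond the citation.
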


\begin{proof}[Proof of \cref{lem:AB_RIP}]
The lemma assumes that $\Omega$ is uniformly sampled from the set of all collections of $m \equiv n_1n_2p$ entries of $[n_1]\times [n_2]$.
Following \cite{recht2011simpler}, in the following proof we assume instead a different probabilistic model: sampling with replacement. Let $\Omega' = \{(i_k, j_k)\}_{k=1}^m$ be a collection of $m$ elements, each i.i.d.~from the uniform distribution over $[n_1]\times [n_2]$. Define also the corresponding operator
\begin{align}\label{eq:R_Omega'}
\mathcal R_{\Omega'}(X) = \sum_{k=1}^m \braket{e_{i_k} e_{j_k}^\top, X} e_{i_k} e_{j_k}^\top.
\end{align}
In contrast to $\mathcal P_\Omega$, the operator $\mathcal R_{\Omega'}$ is in general not a projection operator, since a pair of indices $(i,j)$ may have been sampled more than once.
In the following, rather than \eqref{eq:PAB_inequality}, we prove the following modified inequality that involves $\mathcal R_{\Omega'}$ in place of $\mathcal P_\Omega$,
\begin{align}\label{eq:PAB_inequality_modified}
\tfrac 1p \|\mathcal P_{AB} \mathcal R_{\Omega'} \mathcal P_{AB} - p \mathcal P_{AB} \| \leq \delta .
\end{align}
This inequality implies the original \eqref{eq:PAB_inequality},
%This is quite intuitive,
as $\mathcal R_{\Omega'}(X)$ reveals in general less information on $X$ than $\mathcal P_\Omega(X)$ does due to possible duplicates in $\Omega'$; see the proof of Proposition~3 in \cite{recht2011simpler} for a rigorous formulation of this argument.

Since the elements of $\Omega'$ are uniformly sampled from the set $[n_1]\times [n_2]$ and $|\Omega'| = m \equiv pn_1n_2$, the expectation value of $\mathcal R_{\Omega'}$ over the random set $\Omega'$ is $p$ times the identity operator. Hence,
\begin{align}
\mathbb E[\mathcal P_{AB} \mathcal R_{\Omega'} \mathcal P_{AB}] = \mathcal P_{AB} \mathbb E[\mathcal R_{\Omega'}] \mathcal P_{AB} = p \mathcal P_{AB}^2 = p \mathcal P_{AB} ,
\end{align}
where $\mathcal P_{AB}$ is defined in \eqref{eq:PAB}.
We thus conclude that \eqref{eq:PAB_inequality_modified} is simply a concentration inequality, which we shall prove using \cref{lem:Bernstein_inequality}.

Let $X \in \mathbb R^{n_1\times n_2}$, and decompose it as $X = \sum_{i,j} \braket{X, e_ie_j^\top} e_ie_j^\top$.
For future use, we define the linear operator $\mathcal T_{ij}: \mathbb R^{n_1\times n_2}\to \mathbb R^{n_1\times n_2}$ as
\begin{align}\label{eq:Tij}
\mathcal T_{ij}(X) = \braket{X, \mathcal P_{AB}(e_ie_j^\top)} \mathcal P_{AB}(e_ie_j^\top) = \braket{\mathcal P_{AB}(X), e_ie_j^\top} \mathcal P_{AB}(e_ie_j^\top),
\end{align}
and present some related equalities.
By standard properties of the trace operator,
\begin{align}\label{eq:partial_sum_of_Tij}
\mathcal P_{AB} \mathcal R_{\Omega'} \mathcal P_{AB} = \sum_{k=1}^m \mathcal T_{i_k j_k}.
\end{align}
Hence, taking the expectation over $(i,j)$ uniformly sampled from $[n_1]\times [n_2]$ gives that
\begin{align}\label{eq:Tij_expectation}
\mathbb E[ \mathcal T_{i j} ] &= \frac{1}{m} \mathbb E \left[ \sum_{k=1}^m \mathcal T_{i_k j_k} \right] = \frac{1}{pn_1n_2} \mathbb E[ \mathcal P_{AB} \mathcal R_{\Omega'} \mathcal P_{AB} ] = \frac{1}{n_1n_2} \mathcal P_{AB}.
\end{align}
In addition, by the definition \eqref{eq:Tij} of $\mathcal T_{ij}$ and the fact that $\mathcal P_{AB}$ is a projection,
\begin{align}\label{eq:PAB_Tij}
\mathcal P_{AB} \mathcal T_{ij} = \mathcal T_{ij} \mathcal P_{AB} = \mathcal T_{ij} .
\end{align}
%since for any $X\in \mathbb R^{n_1\times n_2}$, $\mathcal P_{AB} \mathcal T_{ij}(X) = \braket{X, \mathcal P_{AB}(e_ie_j^\top)} \mathcal P^2_{AB}(e_ie_j^\top) = \braket{X, \mathcal P_{AB}(e_ie_j^\top)} \mathcal P_{AB}(e_ie_j^\top) = \mathcal T_{ij}(X)$, and similarly $\mathcal T_{ij} \mathcal P_{AB}(X) = \mathcal T_{ij}(X)$.
Finally, by inserting \eqref{eq:partial_sum_of_Tij} into inequality \eqref{eq:PAB_inequality_modified}, we obtain that our goal is to bound $\|\sum_{k=1}^m \mathcal T_{i_k j_k} - p \mathcal P_{AB}\| = \|\sum_{k=1}^m \mathcal D_{i_k j_k}\|$, where the operator $\mathcal D_{ij}: \mathbb R^{n_1\times n_2}\to \mathbb R^{n_1\times n_2}$ is given by
\begin{align*}
\mathcal D_{ij} = \mathcal T_{ij} - \frac{p}{m} \mathcal P_{AB} = \mathcal T_{ij} - \frac{1}{n_1n_2} \mathcal P_{AB} .
\end{align*}
By \eqref{eq:Tij_expectation}, $\mathbb E[\mathcal D_{ij}] = 0$.
To employ \cref{lem:Bernstein_inequality} to the set $\{\mathcal D_{i_k,j_k}\}_{k=1}^m$, we first need to (i) find a scalar $R$ such that $\|\mathcal D_{ij}\| \leq R$ almost surely, and (ii) bound $\max\{\| \sum_{k=1}^m \mathbb E[\mathcal D_{i_k j_k} \mathcal D_{i_k j_k}^*] \|,$ $\| \sum_{k=1}^m \mathbb E[\mathcal D_{i_k j_k}^* \mathcal D_{i_k j_k}] \|\} = \| \sum_{k=1}^m \mathbb E[\mathcal D_{i_k j_k}^2] \|$, where the equality follows since $\mathcal D_{ij}$ is self-adjoint w.r.t.~the Frobenius inner product.
%Then, we prove \eqref{eq:PAB_inequality_modified} by the matrix Bernstein inequality \cite[Theorem~1.6]{tropp2012user},
%\begin{align}\label{eq:Bernstein}
%\mathbb P \left[ \left\| \sum_{k=1}^m \mathcal D_{i_k j_k} \right\| > p\delta \right] &\leq (n_1+n_2) \exp\left( -\frac{p^2 \delta^2/2}{  \| \sum_{k=1}^m \mathbb E[\mathcal D_{i_k j_k}^2] \| + p\delta D /3 } \right) .
%\end{align}

We begin with bounding $\|\mathcal D_{ij}\| \equiv \max_X \|\mathcal D_{ij}(X)\|_F/\|X\|_F$.
Recall that if $X$ and $Y$ are positive semidefinite matrices, then $\|X-Y\|_2 \leq \max\{\|X\|_2,\|Y\|_2\}$.
Since any operator can be represented by a matrix, a similar result holds for operators with the spectral norm.
%(as $X-Y \preccurlyeq X$ and $Y-X \preccurlyeq Y$, where $\preccurlyeq$ indicates positive semidefinite partial ordering).
As both $\mathcal T_{ij}$ and $\mathcal P_{AB}$ are positive semidefinite and $\mathcal P_{AB}$ is a projection,
%as $\braket{\mathcal T_{ij}(X), X} = |\braket{X, \mathcal P_{AB}(e_ie_j^\top)}|^2 \geq 0$ and $\braket{\mathcal P_{AB}(X), X} = \braket{AA^\top X BB^\top, X} = |\braket{A^\top X B}|^2 \geq 0$, respectively. Hence
we have
\begin{align}
\| \mathcal D_{ij} \| \leq \max\{\|\mathcal T_{ij}\|, \frac{1}{n_1n_2} \|\mathcal P_{AB}\| \} = \max\{\|\mathcal T_{ij}\|, \frac{1}{n_1n_2} \}. \label{eq:Tij_PAB_tempBound}
\end{align}
Let us bound $\|\mathcal T_{ij}\|$. By the Cauchy-Schwarz inequality,
\begin{align*}
\|\mathcal T_{ij}(X)\| &= |\braket{X, \mathcal P_{AB}(e_ie_j^\top)}|\cdot \|\mathcal P_{AB}(e_ie_j^\top)\|_F
\leq \|\mathcal P_{AB}(e_ie_j^\top)\|_F^2 \|X\|_F.
\end{align*}
Inserting the definition of $\mathcal P_{AB}$ \eqref{eq:PAB}, the spectral norm of $\mathcal T_{ij}$ is bounded as
\begin{align}\label{eq:Tij_norm}
\|\mathcal T_{ij}\| &\leq \|\mathcal P_{AB}(e_i e_j^\top) \|_F^2 = \|AA^\top e_i e_j^\top B B^\top \|_F^2 \stackrel{(a)}{\leq} \|AA^\top e_i\|^2 \|B B^\top e_j\|^2 \nonumber \\
&\stackrel{(b)}{=} \|A^\top e_i\|^2 \|B^\top e_j\|^2 \leq \|A\|^2_{2,\infty} \|B\|^2_{2,\infty} \stackrel{(c)}{\leq} \frac{\mu^2 d_1 d_2}{n_1 n_2},
\end{align}
where (a) follows from the Cauchy-Schwarz inequality, (b) from the isometry assumption, and (c) from the assumed bound on the row norms of $A$ and $B$. Plugging \eqref{eq:Tij_norm} into \eqref{eq:Tij_PAB_tempBound} yields
\begin{align}
\| \mathcal D_{ij} \| \leq \max\{\frac{\mu^2 d_1 d_2}{n_1 n_2}, \frac{1}{n_1n_2} \} = \frac{\mu^2 d_1 d_2}{n_1 n_2} \equiv R, \label{eq:Tij_PAB_bound}
\end{align}
where the equality follows since $\mu \geq 1$ by the definition of incoherence (\cref{def:incoherence}).
Next, we bound $\|\sum_{k=1}^m \mathbb E[\mathcal D_{i_k j_k}^2]\|$.
Combining \eqref{eq:PAB_Tij}, \eqref{eq:Tij_expectation} and the fact that both $\mathcal T_{ij}^2$ and $\mathcal P_{AB}$ are positive semidefinite yields
\begin{align*}
\| \mathbb E [\mathcal D_{i j}^2 ] \| &= \| \mathbb E[ \mathcal T_{i j}^2 - \frac{2}{n_1n_2} \mathcal T_{i j} + \frac{1}{n_1^2n_2^2} \mathcal P_{AB} ] \| = \| \mathbb E[\mathcal T_{i j}^2] - \frac{1}{n_1^2n_2^2} \mathcal P_{AB} \| \\
&\leq \max\{\|\mathbb E[\mathcal T_{i j}^2]\|, \frac{1}{n_1^2n_2^2} \|\mathcal P_{AB}\|\} 
= \max\{\|\mathbb E[\mathcal T_{i j}^2]\|, \frac{1}{n_1^2n_2^2} \} .
\end{align*}
Let us bound $\|\mathbb E[\mathcal T_{i j}^2]\|$.
Since $\mathcal T_{ij}$ is positive semidefinite, we have $\mathcal T_{ij}^2 \preccurlyeq \|\mathcal T_{ij}\| \mathcal T_{ij}$. 
Thus $\mathbb E[\mathcal T_{i j}^2]  \preccurlyeq \mathbb E[\|\mathcal T_{i j}\| \mathcal T_{i j}] \preccurlyeq \frac{\mu^2 d_1 d_2}{n_1 n_2} \mathbb E[\mathcal T_{i j}]$, where the last inequality follows from the deterministic bound \eqref{eq:Tij_norm}. Together with \eqref{eq:Tij_expectation} this implies 
\begin{align*}
\| \mathbb E[ \mathcal T_{i j}^2 ] \| &\leq \frac{\mu^2 d_1 d_2}{n_1 n_2} \| \mathbb E [\mathcal T_{i j}] \| = \frac{\mu^2 d_1 d_2}{n_1^2 n_2^2} \|\mathcal P_{AB} \| = \frac{\mu^2 d_1 d_2}{n_1^2 n_2^2} .
\end{align*}
We thus obtain the bound
\begin{align*}
\| \sum_{k=1}^m \mathbb E [ \mathcal D_{i_k j_k}^2 ] \| = m\cdot \| \mathbb E [ \mathcal D_{i j}^2 ] \| \leq m \frac{\mu^2 d_1 d_2}{n_1^2 n_2^2} = \frac{p\mu^2 d_1d_2}{n_1n_2} \equiv \sigma^2.
\end{align*}
Plugging this together with the bound $\|\mathcal D_{ij}\| \leq R$ in \eqref{eq:Tij_PAB_bound} into \cref{lem:Bernstein_inequality} yields
\begin{align*}
\mathbb P\left[ \left\| \sum_{k=1}^m \mathcal D_{i_k j_k} \right\| > p\delta \right] &\leq (n_1+n_2) \exp\left( -\frac{p^2 \delta^2/2}{  \frac{p\mu^2 d_1d_2}{n_1n_2} + \frac{\mu^2 d_1 d_2}{n_1 n_2} p\delta/3 } \right) \leq 2n \exp\left( -\frac{3 \delta^2 m}{8\mu^2 d_1 d_2} \right) .
\end{align*}
Assuming that $m \geq (8/\delta^2) \mu^2 d_1d_2 \log n$ gives
\begin{align*}
\mathbb P\left[ \left\| \sum_{k=1}^m \mathcal D_{i_k j_k} \right\| > p\delta \right] &\leq 2n e^{-3\log n} = 2n^{-2} .
\end{align*}
This completes the proof of \eqref{eq:PAB_inequality_modified}, and thus of \eqref{eq:PAB_inequality}.
%, $\mathbb P[ \| \sum_{k=1}^m \mathcal D_{i_k j_k} \| > p\delta \,\mid\, \text{with replacement}] \geq \mathbb P[ \| \sum_{(i,j)\in\Omega} \mathcal D_{ij} \| > p\delta  \,\mid\, \text{w.o.~replacement} ]$.
\end{proof}
%\qed

%%%%%%%%%%%%%%%%%%%%%%%%%%
%%%%%%%%%%%%%%%%%%%%%%%%%%
\subsection{Proof of \cref{thm:IMC_RIP}}
%\begin{proof}[Proof of \cref{thm:IMC_RIP}]
%By assumption \eqref{eq:sideInformation}, $X^* = AM^*B^\top$ for some $M^*\in \mathbb R^{d_1\times d_2}$. Hence
%\begin{align}\label{eq:b_defintion}
%\text{Vec}(Y)/\sqrt p &= \text{Vec}\,\mathcal P_\Omega(X^*)/\sqrt p = \text{Vec}\,\mathcal P_\Omega(AMB^\top)/\sqrt p = \text{Vec}\,\mathcal P_\Omega(AA^\top AMBB^\top B)/\sqrt p = \mathcal A(X^*) = b .
%\text{Vec}(Y)/\sqrt p &= \text{Vec}\,\mathcal P_\Omega(AM^*B^\top)/\sqrt p = \mathcal A(M^*) = b .
%\end{align}
%Similarly, $\text{Vec}(\mathcal P_\Omega (AMB^\top))/\sqrt p = \mathcal A(M)$. Hence the constraint $\mathcal P_\Omega(AMB^\top) = Y$ of \eqref{eq:IMC_unknownRank} is equivalent to the constraint $\mathcal A(M) = b$ of \eqref{eq:MS_unknownRank}. Also, $\argmin_M \|\mathcal P_\Omega(AMB^\top) - Y\|_F^2$ of \eqref{eq:IMC} equals $\argmin \|\mathcal A(M) - b\|^2$ of \eqref{eq:MS}.
%This proves the first part of the theorem. The RIP part, Eq.~\eqref{eq:RIP}, follows directly from the second part of \cref{lem:AB_RIP}.
Let $M \in \mathbb R^{d_1\times d_2}$, and denote $X = AMB^\top$.
By definition \eqref{eq:sensingOperator_IMC} of $\mathcal A$,
\begin{align}\label{eq:X_Omega_F2}
\frac 1p \|\mathcal P_\Omega(X)\|_F^2 = \frac 1p \|\mathcal P_\Omega(AMB^\top)\|_F^2 = \|\mathcal A(M)\|^2.
\end{align}
Next, observe that $\mathcal P_{AB}(X) = AA^\top AMB^\top BB^\top = AMB^\top = X$. Hence
\begin{align*}
\|\mathcal P_{\Omega}(X)\|_F^2 &= \braket{\mathcal P_{\Omega}(X), \mathcal P_{\Omega}(X)} = \braket{X, \mathcal P_{\Omega}(X)} = \braket{X, pX} + \braket{X, \mathcal P_{\Omega}(X) - pX} \\
&= p\|X\|_F^2 + \braket{\mathcal P_{AB}(X), \mathcal P_{\Omega}\mathcal P_{AB}(X) - p\mathcal P_{AB}(X)} \\
&= p \|X\|_F^2 + \braket{X, \mathcal P_{AB}\mathcal P_{\Omega} \mathcal P_{AB}(X) - p\mathcal P_{AB}(X) } .
\end{align*}
Applying the Cauchy-Schwarz inequality and \eqref{eq:PAB_inequality} of \cref{lem:AB_RIP} yields
\begin{align*}
\left| \|\mathcal P_{\Omega}(X)\|_F^2 - p \|X\|_F^2 \right| &= |\braket{X, \mathcal P_{AB}\mathcal P_{\Omega} \mathcal P_{AB}(X) - p\mathcal P_{AB}(X) }| \\
&\leq \|X\|_F \|\mathcal P_{AB}\mathcal P_{\Omega} \mathcal P_{AB}(X) - p\mathcal P_{AB}(X)\|_F \leq p\delta \|X\|_F^2 .
\end{align*}
Hence
\begin{align}\label{eq:RIP_X_temp}
(1-\delta) \|X\|_F^2 \leq \frac 1p \|\mathcal P_\Omega(X)\|_F^2 \leq (1+\delta) \|X\|_F^2 .
\end{align}
Since $A,B$ are isometries, $\|X\|_F = \|AMB^\top\|_F = \|M\|_F$.
Plugging this together with \eqref{eq:X_Omega_F2} into \eqref{eq:RIP_X_temp} yields the RIP \eqref{eq:RIP}.
%\end{proof}
\qed

In the following remark, we extend the connection between IMC and matrix sensing (MS) to another setting of the two problems, where the goal is to find the minimal rank matrix that agrees with the observations. 

\begin{remark}\label{remark:IMC_RIP}
An alternative setting of IMC, which does not assume a known rank but does assume noise-free observations, is to find a matrix with the lowest possible rank that is consistent with the data,
\begin{align}
\tag{IMC*} \label{eq:IMC_unknownRank}
\min_M \text{ rank}(M) \quad \text{s.t. } \mathcal P_\Omega(AMB^\top) = \mathcal P_\Omega(X^*) .
\end{align}
The analogous setting of MS is
\begin{align}
\tag{MS*}\label{eq:MS_unknownRank}
\min_M \text{ rank}(M) \quad &\text{s.t. } \mathcal A(M) = \mathcal A(M^*) .
\end{align}
\end{remark}
With the sensing operator $\mathcal A$ defined in \eqref{eq:sensingOperator_IMC}, \eqref{eq:IMC_unknownRank} is in the form of \eqref{eq:MS_unknownRank}. Since  this sensing operator satisfies the RIP under certain conditions as guaranteed by \cref{thm:IMC_RIP}, the connection between IMC and MS holds in this setting as well.

%%%%%%%%%%%%%%%%%%%%%%%%%%
%%%%%%%%%%%%%%%%%%%%%%%%%%
\section{Proof of \cref{thm:rankEstimate} (Rank Estimation)} \label{sec:proof_rankEstimate}
The proof of the theorem is based on the following lemma, which employs \cref{lem:AB_RIP} to bound the difference between the singular values of $X^*$ and those of $\hat X = \mathcal P_{AB}(Y)/p$.
\begin{lemma}\label{lem:rankEstimate}
Let $X^* \in \mathbb R^{n_1\times n_2}$ be a matrix which satisfies \eqref{eq:sideInformation} with $\mu$-incoherent matrices $A,B$.
Let $\delta, \epsilon$ and $\Omega$ be defined as in \cref{thm:rankEstimate} with constant $c < 1/2$.
Then w.p.~at least $1-2n^{-2}$,
\begin{align}\label{eq:sv_diff}
|\hat \sigma_i - \sigma_i^*| \leq 2c\delta, \quad \forall i,
\end{align}
where $\sigma_i^* = \sigma_i(X^*)$.
\end{lemma}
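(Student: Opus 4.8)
\textbf{Proof plan for Lemma~\ref{lem:rankEstimate}.}
The plan is to compare the two operators $\mathcal P_{AB}$ and $\tfrac1p\mathcal P_{AB}\mathcal P_\Omega$ applied to $X^* + \mathcal E$, and to use Weyl's inequality for singular values. Recall that $\mathcal P_{AB}(X^*) = X^*$ because $X^*$ satisfies \eqref{eq:sideInformation}, so $X^* = \mathcal P_{AB}(X^*) = \mathcal P_{AB}\mathcal P_{AB}(X^*)$. Write $\hat X = \tfrac1p \mathcal P_{AB}\mathcal P_\Omega(Y) = \tfrac1p\mathcal P_{AB}\mathcal P_\Omega(X^* + \mathcal E)$ and split it as
\begin{align*}
\hat X - X^* = \Big(\tfrac1p\mathcal P_{AB}\mathcal P_\Omega\mathcal P_{AB} - \mathcal P_{AB}\Big)(X^*) + \tfrac1p\mathcal P_{AB}\mathcal P_\Omega(\mathcal E).
\end{align*}
The first term is controlled by \cref{lem:AB_RIP}: its Frobenius norm is at most $\delta_{\mathrm{RIP}}\|X^*\|_F$ where $\delta_{\mathrm{RIP}}$ is the RIP constant obtained from the sample-complexity assumption. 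The second term has Frobenius norm exactly $\epsilon$ by the definition $\epsilon = \|\mathcal P_{AB}\mathcal P_\Omega(\mathcal E)\|_F/p$ in \cref{thm:rankEstimate}.

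First I would make the RIP constant explicit. The assumption $|\Omega| \geq 8\mu^2 d_1 d_2 \log(n)\|X^*\|_F^2/(c\delta)^2$ is exactly the hypothesis of \cref{lem:AB_RIP} with RIP parameter $\delta_{\mathrm{RIP}} = c\delta/\|X^*\|_F$ (matching the form $|\Omega|\geq (8/\delta_{\mathrm{RIP}}^2)\mu^2 d_1 d_2\log n$). Hence, with probability at least $1-2n^{-2}$,
\begin{align*}
\Big\|\big(\tfrac1p\mathcal P_{AB}\mathcal P_\Omega\mathcal P_{AB} - \mathcal P_{AB}\big)(X^*)\Big\|_F \leq \delta_{\mathrm{RIP}}\,\|X^*\|_F = c\delta.
\end{align*}
Combining the two bounds via the triangle inequality gives $\|\hat X - X^*\|_F \leq c\delta + \epsilon \leq 2c\delta$, using the assumed error bound $\epsilon \leq c\delta$. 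Then Weyl's inequality for singular values, $|\sigma_i(\hat X) - \sigma_i(X^*)| \leq \|\hat X - X^*\|_2 \leq \|\hat X - X^*\|_F$, yields $|\hat\sigma_i - \sigma_i^*| \leq 2c\delta$ for all $i$, which is \eqref{eq:sv_diff}.

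I expect the main obstacle to be a bookkeeping one rather than a conceptual one: making sure the Frobenius-norm version of the RIP bound from \cref{lem:AB_RIP} is applied with the correct scaling, i.e.\ that the operator-norm inequality $\|\tfrac1p\mathcal P_{AB}\mathcal P_\Omega\mathcal P_{AB} - \mathcal P_{AB}\| \leq \delta_{\mathrm{RIP}}$ from \cref{lem:AB_RIP} is correctly converted into the per-matrix bound on $X^*$, and tracking how the quantity $\delta$ defined in \cref{thm:rankEstimate} (a gap combination $\min_i\{\sigma_{i+1}^* + D\sigma_1^*\sqrt i\}$) feeds into the sample-complexity hypothesis so that the constant $2c\delta$ comes out clean. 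One should also note that $\mathcal P_{AB}$ maps into the column/row spaces of $A,B$, so $\hat X$ and $X^*$ both have rank at most $\min\{d_1,d_2\}$, and Weyl's inequality applies on this finite-dimensional space without subtlety; the bound \eqref{eq:sv_diff} then holds for every index $i$ (with $\sigma_i^* = 0$ for $i>\mathrm{rank}(X^*)$).
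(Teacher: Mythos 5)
Your proposal is correct and follows essentially the same route as the paper: the same decomposition of $\hat X - X^*$ into the operator-deviation term on $X^*$ plus the projected noise term, the same identification of the RIP parameter $\delta_{\mathrm{RIP}} = c\delta/\|X^*\|_F$ from the sample-complexity hypothesis, and the same finish via the triangle inequality and Weyl's inequality. The only detail the paper makes explicit that you leave implicit is verifying $\delta_{\mathrm{RIP}} < 1$ (required for \cref{lem:AB_RIP} to apply), which follows from $\delta \leq \sigma_2^* + D\sigma_1^* < 2\sigma_1^* \leq 2\|X^*\|_F$ together with the assumption $c < 1/2$.
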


\begin{proof}
Since $X^*$ satisfies the side information property \eqref{eq:sideInformation}, we have $\mathcal P_{AB}(X^*) = X^*$. Hence
\begin{align*}
\hat X = \frac 1p \mathcal P_{AB} \mathcal P_\Omega(X^* + \mathcal E) = \frac 1p \mathcal P_{AB} \mathcal P_\Omega \mathcal P_{AB}(X^*) + \frac 1p \mathcal P_{AB} \mathcal P_\Omega(\mathcal E) .
\end{align*}
Using $\mathcal P_{AB}(X^*) = X^*$ again, we get
\begin{align*}
\hat X - X^* = \left( \frac 1p \mathcal P_{AB} \mathcal P_\Omega \mathcal P_{AB} - \mathcal P_{AB}\right)(X^*) + \frac 1p \mathcal P_{AB} \mathcal P_\Omega(\mathcal E).
\end{align*}
Let $\delta' = c\delta/\|X^*\|_F$. By definition, $\delta \leq \sigma_2^* + D\sigma_1^* < 2\sigma_1^*$. Hence $\delta' < 1$ for $c < 1/2$.
Invoking \cref{lem:AB_RIP} with $|\Omega| \geq 8 \mu^2 d_1 d_1 \log(n) \|X^*\|_F^2 / (c\delta)^2 = 8 \mu^2 d_1 d_1 \log(n) / {\delta'}^2$ and using the condition $\epsilon \leq c\delta$ imply
\begin{align}\label{eq:rankEstimate_matrixDiffNorm}
\|\hat X - X^* \|_F &\leq \left\|\left(\frac 1p \mathcal P_{AB} \mathcal P_\Omega \mathcal P_{AB} - \mathcal P_{AB}\right)(X^*) \right\|_F + \frac 1p \|\mathcal P_{AB} \mathcal P_\Omega(\mathcal E)\|_F \leq \delta' \|X^*\|_F + \epsilon \nonumber \\
&\leq 2c \delta.
\end{align}
Hence also $\|\hat X - X^*\|_F \leq 2c \delta$. \Cref{eq:sv_diff} of the lemma follows by Weyl's inequality.
\end{proof}

\iffalse
Next, fix some $i \in \{1, 2, ..., d\}$ where $d=\min\{d_1,d_2\}$, and denote $v(i) = d-i+1$.
By the min-max principle for singular values,
\begin{align*}
\hat \sigma_i = \min_{\mathcal U: \text{dim}(\mathcal U) = v(i)} \max_{u\in \mathcal U, \|u\|=1} \|\hat Xu\|
= \min_{\mathcal U: \text{dim}(\mathcal U) = v(i)} \max_{u\in \mathcal U, \|u\|=1} \|X^* u + (\hat X-X^*)u\| .
\end{align*}
This implies the inequalities
\begin{subequations}\label{eq:rankEstimate_sigmaTempBound}\begin{align}
\hat \sigma_i &\leq \min_{\mathcal U: \text{dim}(\mathcal U) = v(i)} \max_{u\in \mathcal U, \|u\|=1} \|X^* u\| + \max_{\|u\|=1} \|(\hat X-X^*)u\| = \sigma_i^* + \max_{\|u\|=1} \|(\hat X-X^*)u\|, \\
\hat \sigma_i &\geq \min_{\mathcal U: \text{dim}(\mathcal U) = v(i)} \max_{u\in \mathcal U, \|u\|=1} \|X^* u\| - \max_{\|u\|=1} \|(\hat X-X^*)u\| = \sigma_i^* - \max_{\|u\|=1} \|(\hat X-X^*)u\| .
\end{align}\end{subequations}
By combining the Cauchy-Schwarz inequality and \eqref{eq:rankEstimate_matrixDiffNorm} we obtain that $\|(\hat X-X^*)u\| \leq \delta \|u\|$. Plugging this bound into \eqref{eq:rankEstimate_sigmaTempBound} yields
\begin{align*}
\hat \sigma_i &\leq \sigma_i^* + 2c\delta, \\
\hat \sigma_i &\geq \sigma_i^* - 2c\delta .
\end{align*}
Combining these inequalities completes the proof.
\fi

\begin{proof}[Proof of \cref{thm:rankEstimate}]
Denote $\hat g_i = g_i(\hat X)$ and $g_i^* = g_i(X^*)$. We need to show that $\argmax_i \hat g_i = r$. Invoking \cref{lem:rankEstimate} implies
\begin{align}\label{eq:hat_gr_temp}
\hat g_r &= \frac{\hat \sigma_r}{\hat \sigma_{r+1} + D\hat \sigma_1 \sqrt{r}} \geq \frac{\sigma_r^* - 2c\delta}{\sigma_r^* + 2c\delta + D(\sigma_1^* + 2c\delta) \sqrt{r}}.
\end{align}
By the definition of $\delta$, we have that $\delta \leq (1+D)\sigma_1^* < 2\sigma_1^*$ and also $\delta \leq \sigma_r^* + D \sigma_1^* \sqrt{r}$. Plugging this into \eqref{eq:hat_gr_temp} yields
\begin{align*}
\hat g_r &\geq \frac{\sigma_r^* - 2c\delta}{\sigma_r^* + 2c\delta + D(\sigma_1^* + 4c\sigma_1^*) \sqrt{r}}
\geq \frac{\sigma_r^* - 2c(\sigma_r^* + D\sigma_1^*\sqrt{r})}{(1 + 2c)(\sigma_r^* + D \sigma_1^* \sqrt{r}) + 4cD\sigma_1^* \sqrt{r}} \\
&\geq \frac{\sigma_r^* - 2c(\sigma_r^* + D\sigma_1^* \sqrt{r})}{(1 + 6c)(\sigma_r^* + D\sigma_1^* \sqrt{r})} = \frac{1}{1+6c} g_r^* - \frac{2c}{1+6c}.
\end{align*}
Next, let $i\neq r$. Since $\delta \leq \sigma_{i+1}^* + D \sigma_1^* \sqrt{i}$, we similarly have
\begin{align*}
\hat g_i &= \frac{\hat \sigma_i}{\hat \sigma_{i+1} + D\hat \sigma_1 \sqrt{i}} \leq \frac{\sigma_i^* + 2c\delta}{\sigma_{i+1}^* - 2c\delta + D(\sigma_1^* - 2c\delta) \sqrt{i}}
\leq \frac{\sigma_i^* + 2c(\sigma^*_{i+1} + D\sigma_1^*\sqrt{i})}{(1-2c)(\sigma_{i+1}^* + D\sigma_1^*\sqrt{i}) - 4cD\sigma^*_{i+1}\sqrt{i}} \\
&\leq \frac{\sigma_i^* + 2c(\sigma_i^* + D\sigma_1^*\sqrt{i})}{(1-6c) (\sigma_{i+1}^* D\sigma_1^*\sqrt{i})}
= \frac{1}{1-6c} g_i^* + \frac{2c}{1-6c} .
\end{align*}
By assumption, $g_r^* \geq \min\{(11/10) g_i^*, 1/10\}$. We thus obtain that $\hat g_r > \hat g_i$ for a sufficiently small constant $c$. Hence $\hat r = \argmax_i \hat g_i = r$, as required.
\end{proof}

%%%%%%%%%%%%%%%%%%%%%%%%%%
%%%%%%%%%%%%%%%%%%%%%%%%%%
\section{Proof of \cref{thm:GNIMC_guarantee,thm:GNIMC_guarantee_noisy,thm:IMC_landscape}}\label{sec:proof_RIP_consequences}
Our proof of \cref{thm:IMC_landscape} follows by combining \cref{thm:IMC_RIP} with a general result due to \cite{li2020global}.
Consider the following general low rank optimization problem,
\begin{align}\label{eq:general_objective}
\min_{M\in \mathbb R^{d_1\times d_2}} f(M), \quad \text{s.t. rank}(M) \leq r.
\end{align}
By incorporating the rank constraint into the objective function, we obtain the factorized problem
\begin{align}\label{eq:general_objective_factorized}
\min_{U\in \mathbb R^{d_1\times r}, V\in \mathbb R^{d_2\times r}} g(U,V) \equiv f(UV^\top) .
\end{align}
The following result
%which is an implication of Theorem~III.1 in \cite{li2020global},
provides a sufficient condition on $f(M)$ such that $g(U,V)$ has no bad local minima. The condition is on the bilinear form of the Hessian of $f(M)$, defined as $\nabla^2 f(M)[N,N] = \sum_{i,j,k,l} \frac{\partial ^2 f(M)}{\partial M_{ij} \partial M_{kl}} N_{ij} N_{kl}$.

\begin{lemma}\label{lem:Li_theorem3.1}
Let $\alpha, \beta$ be two positive constants that satisfy $\beta/\alpha \leq 3/2$.
%Assume $f$ is $(2r,4r)$-restricted strongly convex smooth with the constants $(\alpha,\beta)$, namely
Assume that $f$ satisfies
\begin{align}\label{eq:Li_theorem3.1_condition}
\alpha \|N\|_F^2 \leq \nabla^2 f(M)[N,N] \leq \beta \|N\|_F^2
\end{align}
for all $M, N\in \mathbb R^{d_1\times d_2}$.
If $f(M)$ has a critical point $M^*$ with $\text{rank}(M^*) \leq r$,
then any critical point $(U,V)$ of $g(U,V)$ in \eqref{eq:general_objective_factorized} is either a global minimum with $UV^\top = M^*$ or a strict saddle point.
\end{lemma}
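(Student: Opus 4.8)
The plan is to follow the now-standard ``lifting'' argument for factorized low-rank problems, exactly as in \cite{li2020global}. First I would record how the first- and second-order information of $g$ relates to that of $f$. Writing $M=UV^\top$ and $W=\begin{psmallmatrix}U\\V\end{psmallmatrix}$, a direct computation gives $\nabla_U g = \nabla f(M)\,V$, $\nabla_V g = \nabla f(M)^\top U$, and, for a perturbation $\Delta=\begin{psmallmatrix}\Delta_U\\\Delta_V\end{psmallmatrix}$,
\begin{align*}
\nabla^2 g(U,V)[\Delta,\Delta] = \nabla^2 f(M)\big[\,\Delta_U V^\top + U\Delta_V^\top,\ \Delta_U V^\top + U\Delta_V^\top\,\big] + 2\braket{\nabla f(M),\,\Delta_U\Delta_V^\top}.
\end{align*}
At a critical point of $g$ we have $\nabla f(M)V=0$ and $\nabla f(M)^\top U=0$, so $\braket{\nabla f(M),\,\Delta_U V^\top + U\Delta_V^\top}=0$ for every $\Delta$. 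Moreover, condition \eqref{eq:Li_theorem3.1_condition} makes $f$ $\alpha$-strongly convex, so its critical point $M^*$ is the unique global minimizer and $\nabla f(M^*)=0$; the hypothesis is precisely that this minimizer has rank at most $r$.

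Next I would take a critical point $(U,V)$ with $M=UV^\top\neq M^*$ and exhibit a direction of strictly negative curvature. Let $\bar U\bar V^\top=M^*$ be a balanced factorization, let $Q$ be the orthogonal matrix aligning $W$ with $W^*=\begin{psmallmatrix}\bar U\\\bar V\end{psmallmatrix}$ (the polar factor of $\bar U^\top U+\bar V^\top V$), and set $\Delta_U=U-\bar U Q$, $\Delta_V=V-\bar V Q$. Expanding and using $QQ^\top=I_r$ yields the key algebraic identity
\begin{align*}
\big(\Delta_U V^\top+U\Delta_V^\top\big)-\Delta_U\Delta_V^\top \;=\; M-M^* \;=:\; E .
\end{align*}
Abbreviate $Z=\Delta_U V^\top+U\Delta_V^\top$ and $P=\Delta_U\Delta_V^\top$, so $Z=E+P$. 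The critical-point relations give $\braket{\nabla f(M),Z}=0$, hence $2\braket{\nabla f(M),P}=2\braket{\nabla f(M),Z-E}=-2\braket{\nabla f(M),E}$; and $\nabla f(M)-\nabla f(M^*)=\int_0^1\nabla^2 f(M^*+tE)[E,\cdot]\,dt$ together with \eqref{eq:Li_theorem3.1_condition} gives the sandwich $\alpha\|E\|_F^2\le \braket{\nabla f(M),E}\le\beta\|E\|_F^2$. Substituting into the Hessian formula,
\begin{align*}
\nabla^2 g(U,V)[\Delta,\Delta] \;=\; \nabla^2 f(M)[Z,Z]-2\braket{\nabla f(M),E} \;\le\; \beta\|Z\|_F^2-2\alpha\|E\|_F^2 .
\end{align*}
Since $\beta/\alpha\le 3/2$ it then remains to show $\|Z\|_F^2<\tfrac43\|E\|_F^2$, i.e.\ (using $Z=E+P$ and $\|P\|_F\le\tfrac12\|\Delta\|_F^2$) to bound the ``curvature-robbing'' cross term $\|\Delta_U\Delta_V^\top\|_F$ in terms of $\|M-M^*\|_F$ for the aligned perturbation.

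I expect this last step to be the main obstacle: controlling $\Delta_U\Delta_V^\top$ against $M-M^*$ \emph{uniformly over all critical points} — not merely near $M^*$ — is exactly where the tightness of the ratio $\beta/\alpha\le 3/2$ is consumed, and it is the technical heart of \cite{li2020global}. I would either invoke their perturbation-and-alignment estimate verbatim, or reprove it by first reducing to a balanced critical point and then bounding $\|\Delta\|_F$ through $\|WW^\top-W^*W^{*\top}\|_F$ and the geometry of the polar alignment $Q$. Once $\nabla^2 g(U,V)[\Delta,\Delta]<0$ is established whenever $E\neq 0$, the claimed dichotomy follows: a critical point with $E=0$ has $UV^\top=M^*$, and since $M^*$ is the global minimizer of $f$ and has rank at most $r$ it is a global minimum of $g$; every other critical point has a strictly negative Hessian eigenvalue, i.e.\ is a strict saddle. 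Finally, combining \cref{lem:Li_theorem3.1} with \cref{thm:IMC_RIP} — which supplies an RIP of rank $\min\{d_1,d_2\}\ge 2r$, so that $f(M)=\|\mathcal A(M)-b\|^2$ satisfies \eqref{eq:Li_theorem3.1_condition} with $\alpha=2(1-\delta)$, $\beta=2(1+\delta)$ and $\beta/\alpha\le 3/2$ once $\delta\le 1/5$ — yields \cref{thm:IMC_landscape}.
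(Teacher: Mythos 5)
Your proposal is sound and ultimately rests on the same external result as the paper: the paper gives no internal proof of this lemma at all, observing only that since condition \eqref{eq:Li_theorem3.1_condition} is assumed for \emph{all} $M,N$ — hence in particular for matrices of the restricted ranks $r_1=\min\{2r,d_1,d_2\}$ and $r_2=\min\{4r,d_1,d_2\}$ — the lemma is directly implied by Theorem~III.1 of \cite{li2020global}. Your reconstruction of the lifting/alignment argument (the Hessian identity, the vanishing of $\braket{\nabla f(M),Z}$ at critical points, the identity $Z-P=M-M^*$, and the resulting bound $\beta\|Z\|_F^2-2\alpha\|E\|_F^2$) is correct as far as it goes, and your explicit deferral of the decisive cross-term estimate to \cite{li2020global} leaves you in essentially the same position as the paper, which defers the entire argument to that reference.
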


\Cref{lem:Li_theorem3.1} is similar to Theorem~III.1 in \cite{li2020global}, with one difference: in their Theorem~III.1, it is sufficient that condition \eqref{eq:Li_theorem3.1_condition} holds only for matrices $M,N$ of rank at most $r_1, r_2$, respectively, with $r_1 = \min\{2r, d_1, d_2\}$ and $r_2 = \min\{4r, d_1, d_2\}$. In fact, since \cite{li2020global} assume $r \ll \min\{d_1,d_2\}$ throughout their work, their Theorem~III.1 is phrased with $r_1 = 2r$ and $r_2 = 4r$; however, it is straightforward to verify that in the general case, in which the rank of $M,N$ is bounded by $\min\{d_1,d_2\}$, the theorem holds with $r_1 = \min\{2r, d_1, d_2\}$ and $r_2 = \min\{4r, d_1, d_2\}$.
This condition is known as $(r_1,r_2)$-restricted strongly convex smoothness. Our condition is stronger, as it requires \eqref{eq:Li_theorem3.1_condition} to hold for all $M,N$, and thus implied by their Theorem~III.1.

\begin{proof}[Proof of \cref{thm:IMC_landscape}]
As discussed in the main text, the IMC problem can be written as a matrix sensing problem with the objective $f(M) = \|\mathcal A(M) - b\|^2$, the sensing operator $\mathcal A$ given in \eqref{eq:sensingOperator_IMC}, and $b = \mathcal A(M^*) + \text{Vec}_\Omega(\mathcal E)/\sqrt p$. Furthermore, by \cref{thm:IMC_RIP}, for the assumed $|\Omega|$, the operator $\mathcal A$ satisfies a $\min\{d_1, d_2\}$-RIP \eqref{eq:RIP} with a constant $\delta \leq 1/5$.
Note that the $\min\{d_1,d_2\}$-RIP of $\mathcal A:\mathbb R^{d_1\times d_2}\to \mathbb R^{d_1\times d_2}$ in fact means that \eqref{eq:RIP} holds for any $d_1\times d_2$ matrix, since the rank of any such matrix is bounded by $\min\{d_1,d_2\}$.

Next, for any $M,N \in \mathbb R^{d_1\times d_2}$, we have $\nabla f(M) = \mathcal A^* (\mathcal A(M) - b)$ and $\nabla^2 f(M)[N,N] = \|\mathcal A(N)\|^2$ \cite[Section~C.1]{zhu2018global}.
Plugging the last equality into the RIP \eqref{eq:RIP} of the sensing operator $\mathcal A$ yields
\begin{align*}
(1-\delta) \|N\|_F^2 \leq \nabla^2 f(M)[N,N] \leq (1+\delta) \|N\|_F^2.
\end{align*}
Let $\alpha = 1-\delta$ and $\beta = 1+\delta$.
Then $f$ satisfies \cref{eq:Li_theorem3.1_condition} with the constants $\alpha,\beta$. Further, since $\delta \leq 1/5$, we have $\beta/\alpha \leq 3/2$.
The corollary thus follows by \cref{lem:Li_theorem3.1}.
\end{proof}

Finally, the proof of \cref{thm:GNIMC_guarantee,thm:GNIMC_guarantee_noisy} is straightforward thanks to our \cref{thm:IMC_RIP}.

\begin{proof}[Proof of \cref{thm:GNIMC_guarantee,thm:GNIMC_guarantee_noisy}]
By \cref{thm:IMC_RIP}, \eqref{eq:IMC} is a special case of \eqref{eq:MS} where the sensing operator $\mathcal A$ satisfies a rank $\min\{d_1,d_2\}$-RIP with a constant $\delta \leq 1/2$. \Cref{thm:GNIMC_guarantee,thm:GNIMC_guarantee_noisy} thus follow from the MS recovery guarantees for \texttt{GNMR} \cite[Theorems~3.3-3.4]{zilber2022gnmr}.
\end{proof}

%%%%%%%%%%%%%%%%%%%%%%%%%%
%%%%%%%%%%%%%%%%%%%%%%%%%%
\section{Computational Complexity Analysis}\label{sec:time_complexity}
In \cref{sec:GNIMC} of the main text we briefly mentioned a way to use QR decompositions in order to efficiently find the minimal norm solution to the least squares problem \eqref{eq:GNIMC_LSQR}.
In the following subsection we describe the full procedure in detail.
Then, in the next subsection, we prove \cref{proposition:time_complexity} on the corresponding computational complexity.
In both subsections we use the following simple result.
\begin{lemma}\label{lem:full_rank}
Assume the conditions of \cref{proposition:time_complexity}.
Then w.p.~at least $1-2n^{-2}$, the factor matrices $U_t, V_t$ of the iterates of \GNIMC (\cref{alg:GNIMC}) have full column rank for all $t = 0, 1, ...$.
\end{lemma}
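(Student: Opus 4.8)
The plan is to show, by induction on $t$, that with high probability each $U_t$ and $V_t$ has full column rank, i.e. rank exactly $r$. The key observation is that under the conditions of \cref{proposition:time_complexity}, the sensing operator $\mathcal A$ of \eqref{eq:sensingOperator_IMC} satisfies the RIP of \cref{thm:IMC_RIP} with a small constant $\delta$ (on the single high-probability event of that theorem, which also underlies all the recovery guarantees), and moreover the recovery guarantee \cref{thm:GNIMC_guarantee} holds, so that $\|X_t - X^*\|_F \le \sigma_r^*/c$ for all $t$. I would first establish the base case: the initialization $(U_0,V_0)$ satisfies \eqref{eq:initialization_accuracy}, and I claim this forces $\mathrm{rank}(U_0V_0^\top) = r$. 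Indeed, writing $M_0 = U_0V_0^\top$ and $X_0 = AM_0B^\top$, we have $\|X_0 - X^*\|_F \le \sigma_r^*/c$ with $c>1$; since $A,B$ are isometries, $\|M_0 - M^*\|_F = \|X_0 - X^*\|_F < \sigma_r^* = \sigma_r(M^*)$, so by Weyl's inequality $\sigma_r(M_0) \ge \sigma_r(M^*) - \|M_0-M^*\|_F > 0$. Hence $M_0$ has rank at least $r$, and since it is a product of a $d_1\times r$ and an $r\times d_2$ matrix it has rank exactly $r$, which forces both $U_0$ and $V_0$ to have full column rank $r$.

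For the inductive step, the same argument applies verbatim to every iterate: \cref{thm:GNIMC_guarantee} (equivalently the quadratic-rate bound \eqref{eq:GNIMC_guarantee} together with $\gamma\|X_0-X^*\|_F \le 1/2$) gives $\|X_t - X^*\|_F \le \sigma_r^*/c < \sigma_r^*$ for every $t$, hence $\sigma_r(U_tV_t^\top) = \sigma_r(M_t) \ge \sigma_r(M^*) - \|M_t - M^*\|_F > 0$ by Weyl, so $M_t = U_tV_t^\top$ has rank exactly $r$ and consequently $U_t, V_t$ each have full column rank. All of this holds on the single event of probability at least $1-2n^{-2}$ on which \cref{thm:IMC_RIP} — and therefore \cref{thm:GNIMC_guarantee} — holds, so no additional union bound is needed. (In the noisy or balanced variant, one instead invokes \cref{thm:GNIMC_guarantee_noisy} with the assumed bound on $\epsilon$, which similarly keeps $\|X_t-X^*\|_F$ strictly below $\sigma_r^*$ for all $t$; I would remark on this briefly.)

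I do not anticipate a genuine obstacle here — the lemma is essentially a bookkeeping consequence of the recovery guarantee plus Weyl's inequality. The one point that needs a little care is making sure the distance bound one uses is \emph{uniform in $t$} and strictly below $\sigma_r^*$, not merely $\le \sigma_r^*/c$ for the initialization; this is exactly what \eqref{eq:GNIMC_guarantee} delivers, since it shows the error is non-increasing (in fact contracting quadratically) once \eqref{eq:initialization_accuracy} holds. A secondary subtlety is that we are implicitly assuming the least-squares step of \cref{alg:GNIMC} is well-defined and produces the stated iterates — but this is part of the setting inherited from \GNMR and the computational-complexity section, so I would simply state that we work on the event where \cref{thm:GNIMC_guarantee} applies.
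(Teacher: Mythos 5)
Your argument is correct and essentially identical to the paper's: both establish that $\|AU_tV_t^\top B^\top - X^*\|_F < \sigma_r^*$ for all $t$ (at $t=0$ from \eqref{eq:initialization_accuracy} with $c>1$, and for $t>0$ from the contraction in \eqref{eq:GNIMC_guarantee}), then apply Weyl's inequality and the isometry of $A,B$ to conclude $\sigma_r(U_tV_t^\top)>0$ and hence full column rank of $U_t,V_t$. The only cosmetic difference is that you apply Weyl to $M_t$ after transferring the distance bound via the isometries, whereas the paper applies it to $X_t$ first and then transfers the singular value; these are equivalent.
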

\begin{proof}
We prove that if
\begin{align}\label{eq:UtVt_accuracy}
\|AU_tV_t^\top B^\top - X^*\|_F < \sigma_r^*,
\end{align}
then $U_t$ and $V_t$ are full column rank.
The lemma follows since \eqref{eq:UtVt_accuracy} holds at $t=0$ by assumption \eqref{eq:initialization_accuracy} with $c>1$, and at any $t>0$ w.p.~at least $1-2n^{-2}$ by the contraction principle \eqref{eq:GNIMC_guarantee}.

By combining Weyl's inequality and \eqref{eq:UtVt_accuracy},
\begin{align*}
|\sigma_r(AU_t V_t^\top B^\top) - \sigma_r^*| \leq \|AU_t V_t^\top B^\top - X^*\|_2 \leq \|AU_t V_t^\top B^\top - X^*\|_F < \sigma_r^*.
\end{align*}
Since $A$ and $B$ are isometries, the above inequality implies that $|\sigma_r(U_t V_t^\top) - \sigma_r^*| < \sigma_r^*$. Hence
\begin{align*}
0 < \sigma_r(U_tV_t^\top) \leq \min\{\sigma_r(U_t) \|V_t\|_2, \sigma_r(V_t) \|U_t\|_2\},
\end{align*}
which implies that both $\sigma_r(U_t)$ and $\sigma_r(V_t)$ are strictly positive, namely $U_t, V_t$ have full column rank.
\end{proof}

\subsection{A computationally efficient way to find the minimal norm solution to \eqref{eq:GNIMC_LSQR}}\label{sec:efficient_procedure}
At iteration $t$ of \GNIMC (\cref{alg:GNIMC}), our goal is to efficiently calculate the solution $(\Delta U_{t+1}, \Delta V_{t+1})$ to the rank deficient least squares problem \eqref{eq:GNIMC_LSQR} whose norm $\|\Delta U_{t+1}\|_F^2 + \|\Delta V_{t+1}\|_F^2$ is minimal. The least squares problem \eqref{eq:GNIMC_LSQR} at iteration $t$ reads
\begin{align}\label{eq:GNIMC_LSQR_t}
\argmin_{\Delta U, \Delta V} \|\mathcal P_\Omega[A( U_tV_t^\top + U_t \Delta V^\top + \Delta U V_t^\top) B^\top] - Y \|_F^2 .
\end{align}
Denote the condition number of $X^*$ by $\kappa$. If $U_t, V_t$ are approximately balanced and their product $U_t V_t^\top$ is close to $X^*$, their condition number scales as $\sqrt\kappa$. Hence, the condition number of the least squares problem (namely, the condition number of the operator defined in \eqref{eq:LA} below) scales as $\sqrt \kappa$. As a result, directly solving \eqref{eq:GNIMC_LSQR_t} leads to a factor of $\sqrt\kappa$ in the computational complexity.
In the following, we describe a procedure that gives the same solution to \eqref{eq:GNIMC_LSQR_t} but eliminates the dependency in $\sqrt\kappa$, as proven in the next subsection.
The procedure consists of two phases. First, we efficiently compute a feasible solution to \eqref{eq:GNIMC_LSQR_t}, not necessarily the minimal norm one.
Second, we describe how, given a solution to \eqref{eq:GNIMC_LSQR_t}, we can efficiently compute the one with minimal norm, $(\Delta U_{t+1}, \Delta V_{t+1})$.
\cref{alg:efficient_procedure} provides a sketch of this procedure.\footnote{We remark that while the second phase works for any given feasible solution, in practice the feasible solution we find is also a minimal norm solution but of a different least squares problem. Since it also works well in practice, we did not need to employ the second phase in our simulations.}

By \cref{lem:full_rank}, the factor matrices of the current iterate $U_t, V_t$ are full column rank.
Let $Q_U R_U$ and $Q_V R_V$ be the QR decompositions of $U_t$ and $V_t$, respectively, such that $Q_U \in \mathbb R^{d_1\times r}$ and $Q_V \in \mathbb R^{d_2\times r}$ are isometries, and $R_U, R_V \in \mathbb R^{r\times r}$ are invertible.
Instead of \eqref{eq:GNIMC_LSQR_t}, we solve the following modified least squares problem,
\begin{align}\label{eq:GNIMC_modified_LSQR_t}
(\Delta U', \Delta V') &= \argmin_{\Delta U, \Delta V} \|\mathcal P_\Omega(AU_tV_t^\top B^\top + AQ_U \Delta V^\top B^\top + A\Delta U Q_V^\top B^\top) - Y \|_F^2.
\end{align}
Here, $(\Delta U', \Delta V')$ is any feasible solution to \eqref{eq:GNIMC_modified_LSQR_t}, not necessarily the minimal norm one.
Next, let
\begin{align}
\Delta U'' = \Delta U' (R_V^{-1})^\top
\quad \mbox{and} \quad
\Delta V'' = \Delta V' (R_U^{-1})^\top.
\end{align}
It is easy to verify that $(\Delta U'', \Delta V'')$ is a feasible solution to the original least squares problem \eqref{eq:GNIMC_LSQR_t}.
This concludes the first part of the procedure, which can be viewed as preconditioning: as we show below, \eqref{eq:GNIMC_modified_LSQR_t} has a lower condition number than \eqref{eq:GNIMC_LSQR_t}, and it hence faster to solve by iterative methods. The reason for the better conditioning is that $Q_U, Q_V$ both have condition number one rather than $\sqrt \kappa$.
The detailed computational complexity analysis is deferred to the next subsection.

Next, we describe how to transform a feasible solution, such as $(\Delta U'', \Delta V'')$, into the minimal norm one $(\Delta U_{t+1}, \Delta V_{t+1})$.
To this end, we first express the least squares operator in terms of the sensing operator $\mathcal A$ defined in \eqref{eq:sensingOperator_IMC}.
%Recall that the side information matrices $A, B$ are isometries.
In the matrix sensing formulation, the least squares problem \eqref{eq:GNIMC_LSQR_t} reads
\begin{align*}
&\min_{(\Delta U, \Delta V)} \| \mathcal P_\Omega [A(U_t V_t^\top + U_t \Delta V^\top + \Delta U V_t^\top)B^\top] - Y ] \|_F \\
&= \min_{(\Delta U, \Delta V)} \| \text{Vec}_\Omega [A (U_t V_t + U_t \Delta V^\top + \Delta U V_t^\top)B^\top]/ \sqrt p - \text{Vec}_\Omega(Y)/\sqrt p \| \\
&= \min_{(\Delta U, \Delta V)} \|\mathcal A(U_t V_t^\top + U_t \Delta V^\top + \Delta U V_t^\top) - b\| \\
&= \min_{(\Delta U, \Delta V)} \|\mathcal A(U_t \Delta V^\top + \Delta U V_t^\top) - b_t\|,
\end{align*}
where $b = \text{Vec}_\Omega(Y)/\sqrt p$ and $b_t = b - \mathcal A(U_tV_t^\top)$.
The least squares operator $\mathcal L_{(U_t,V_t)}: \mathbb R^{(d_1+d_2)\times r}\to \mathbb R^{m}$ is thus
\begin{align}\label{eq:LA}
\mathcal L_{(U_t,V_t)} \begin{pmatrix} U \\ V \end{pmatrix} &= \mathcal A(U_t V^\top + U V_t^\top).
\end{align}
Let $\mathcal K = \ker \mathcal L_{(U_t,V_t)}$.
By combining our RIP guarantee (\cref{thm:IMC_RIP}) with the second part of Lemma~4.4 in \cite{zilber2022gnmr} which holds due to our \cref{lem:full_rank},
\begin{align}\label{eq:kernel}
\mathcal K = \left\{\begin{psmallmatrix} U_t R \\ -V_t R^\top\end{psmallmatrix} \,\mid\, R \in \mathbb R^{r\times r} \right\} = \left\{\begin{psmallmatrix} Q_U R \\ -Q_V R^\top\end{psmallmatrix} \,\mid\, R \in \mathbb R^{r\times r} \right\} .
\end{align}
Also, $\text{dim}\{\mathcal K\} = r^2$ as $Q_U, Q_V$ are isometries.
By definition of the minimal norm solution $\begin{psmallmatrix} \Delta U_{t+1} \\ \Delta V_{t+1} \end{psmallmatrix}$, any other solution is of the form $\begin{psmallmatrix} \Delta U'' \\ \Delta V''\end{psmallmatrix} = \begin{psmallmatrix} \Delta U_{t+1} \\ \Delta V_{t+1} \end{psmallmatrix} + \begin{psmallmatrix} K_U \\K_V \end{psmallmatrix}$ where $\begin{psmallmatrix} \Delta U_{t+1} \\ \Delta V_{t+1} \end{psmallmatrix} \perp \mathcal K$ and $\begin{psmallmatrix} K_U \\ K_V \end{psmallmatrix} \in \mathcal K$.
Hence, all we need to do is to subtract from $\begin{psmallmatrix} \Delta U'' \\ \Delta V''\end{psmallmatrix}$ its component in $\mathcal K$.
%In the following, we describe a scheme to do so.
%
Denote the columns of $Q_U, Q_V$ by $u_i, v_i$ for $i\in [r]$, respectively, and let
\begin{align}\label{eq:kernel_basis_elements}
K^{(ij)} = \frac{1}{\sqrt 2} \begin{pmatrix} u_i e_j^\top \\ -v_j e_i^\top \end{pmatrix}, \quad \forall (i,j)\in [r]\times [r].
\end{align}
Then the following set of $r^2$ matrices form an orthonormal basis for the kernel $\mathcal K$ of \eqref{eq:kernel} under the Frobenius inner product $\braket{C,D} = \Tr(C^\top D)$:
\begin{align}\label{eq:kernel_basis}
\mathcal K_B = \left\{K^{(ij)} \,\mid\, (i,j)\in [r]\times [r] \right\} .
\end{align}
Let $\mathcal I$ be the identity operator.
By calculating the projector $\mathcal P_\mathcal K$ onto the span of $\mathcal K_B$, we obtain the minimal norm solution $\begin{psmallmatrix} \Delta U_{t+1} \\ \Delta V_{t+1} \end{psmallmatrix} = (\mathcal I - \mathcal P_\mathcal K) \begin{psmallmatrix} \Delta U'' \\ \Delta V'' \end{psmallmatrix}$.

\iffalse
Denote the vectorization of the matrices $K^{(ij)}$ by $k^{(ij)} = \text{Vec } \left(K^{(ij)}\right) \in \mathbb R^{(d_1+d_2)r}$. Then, by the equivalence between the Frobenius inner product of matrices and the standard inner product of their vectorizations, $\braket{C,D} = \text{Vec }(C)^\top\, \text{Vec }(D)$, the vectors $k^{(ij)}$ form an orthonormal set.
Hence, rather than directly converting the feasible solution $\begin{psmallmatrix} \Delta U'' \\ \Delta V'' \end{psmallmatrix}$ into the minimal norm one $\begin{psmallmatrix} \Delta U_1 \\ \Delta V_1 \end{psmallmatrix}$, it is more convenient to work with the corresponding $\mathbb R^{(d_1+d_2)r}$ vectorizations $\begin{psmallmatrix} \Delta u'' \\ \Delta v'' \end{psmallmatrix}$ and $\begin{psmallmatrix} \Delta u_1 \\ \Delta v_1 \end{psmallmatrix}$, respectively.

Let $K \in \mathbb R^{(d_1+d_2)r\times r^2}$ be a matrix whose columns are the $k^{(ij)}$ vectors. Then its range is the vectorized kernel $\mathcal K$. Since the $k^{(ij)}$ are orthonormal, $K$ is isometry, and the orthogonal projector into its range is $KK^\top$. We thus conclude that $\begin{psmallmatrix} \Delta u_1 \\ \Delta v_1 \end{psmallmatrix} = (I_{(d_1+d_2)r} - KK^\top) \begin{psmallmatrix} \Delta u'' \\ \Delta v'' \end{psmallmatrix}$. Unvectorizing (or matricizing) $\begin{psmallmatrix} \Delta u_1 \\ \Delta v_1 \end{psmallmatrix}$ yields the minimal norm solution $\begin{psmallmatrix} \Delta U_1 \\ \Delta V_1 \end{psmallmatrix}$.
\fi

The procedure described in this subsection is sketch in \cref{alg:efficient_procedure}.

\begin{algorithm}[t]
\caption{Efficient procedure to compute the minimal norm solution to \eqref{eq:GNIMC_LSQR}} \label{alg:efficient_procedure}
\SetKwInOut{Return}{return}
\SetKwInOut{Input}{input}
\SetKwInOut{Output}{output}
\SetKwComment{Comment}{$\triangleright$\ }{}
\Input{
sampling operator $\mathcal P_\Omega$, observed matrix $Y$, side information matrices $(A,B)$, current iterate $(U_t, V_t)$
}
\Output{the minimal norm solution to \eqref{eq:GNIMC_LSQR}}
\Comment{Phase I: compute a feasible solution to \eqref{eq:GNIMC_LSQR}}
compute $Q_U R_U$ and $Q_V R_V$, the QR decompositions of $U_t$ and $V_t$, respectively \label{alg:efficient_procedure_QR} \\
compute $(\Delta U', \Delta V')$, any feasible solution to
$ \argmin_{(\Delta U, \Delta V)} \| \mathcal P_\Omega[A(U_tV_t^\top + Q_U \Delta V^\top + \Delta U Q_V^\top)B^\top] - Y \|_F^2 $
\label{alg:efficient_procedure_lsqr} \\
set $\Delta U'' = \Delta U' (R_V^{-1})^\top$, $\Delta V'' = \Delta V' (R_U^{-1})^\top$ \label{alg:efficient_procedure_feasible} \\
\Comment{Phase II: compute the minimal norm solution to \eqref{eq:GNIMC_LSQR}}
let $\mathcal P_\mathcal K: \mathbb R^{(d_1+d_2)\times r} \to R^{(d_1+d_2)\times r}$ be the projector onto $\mathcal K$, using its orthonormal basis given in \eqref{eq:kernel_basis} \label{alg:efficient_procedure_orthognoalProjector} \\
set $\begin{psmallmatrix} \Delta U_{t+1} \\ \Delta V_{t+1} \end{psmallmatrix} = (\mathcal I - \mathcal P_\mathcal K) \begin{psmallmatrix} \Delta U'' \\ \Delta V'' \end{psmallmatrix}$ \label{alg:efficient_procedure_convert} \\
\Return{$(\Delta U_{t+1}, \Delta V_{t+1})$}
\end{algorithm}

\subsection{Proof of \cref{proposition:time_complexity}}
For the analysis of the computational complexity of \GNIMC with the minimal norm solution computed via \cref{alg:efficient_procedure}, we first prove the following auxiliary lemma.
Recall that the condition number of an operator $\mathcal P: \mathbb R^{(d_1+d_2)\times r}\to \mathbb R^{m}$ is defined as $\max_Z\{\|\mathcal P(Z)\|/\|Z\|_F\}/\min_Z\{\|\mathcal P(Z)\|/\|Z\|_F\}$.
\begin{lemma}\label{lem:LA_cond}
Let $\Omega, A, B$ be defined as in \cref{proposition:time_complexity}.
Let $\mathcal L_{(Q_U, Q_V)}$ be the least squares operator of step~\ref{alg:efficient_procedure_lsqr} in \cref{alg:efficient_procedure},
\begin{align}
\mathcal L_{(Q_U,Q_V)} \begin{pmatrix} U \\ V \end{pmatrix} &= \mathcal A(Q_U V^\top + U Q_V^\top).
\end{align}
Denote its condition number by $\kappa_L$. Then
\begin{align}\label{eq:LA_cond}
\kappa_L \leq \sqrt 6.
\end{align}
\end{lemma}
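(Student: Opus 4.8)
The plan is to combine our RIP for $\mathcal{A}$ (\cref{thm:IMC_RIP}) with an elementary analysis of the bilinear map $(U,V)\mapsto Q_U V^\top + U Q_V^\top$ restricted to the orthogonal complement of the kernel $\mathcal K$ of $\mathcal L_{(Q_U,Q_V)}$. First, since the assumptions of \cref{proposition:time_complexity} give $|\Omega|\geq 32\mu^2 d_1 d_2\log n$, \cref{thm:IMC_RIP} applies with $\delta = 1/2$, so $\mathcal A$ satisfies a $\min\{d_1,d_2\}$-RIP with constant $\delta\leq 1/2$, which in fact holds for \emph{all} $d_1\times d_2$ matrices since $\min\{d_1,d_2\}$ is the maximal possible rank. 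Consequently, for every $Z = \begin{psmallmatrix} U\\ V\end{psmallmatrix}$,
\[
\tfrac12 \|Q_U V^\top + U Q_V^\top\|_F^2 \;\leq\; \|\mathcal L_{(Q_U,Q_V)}(Z)\|^2 \;\leq\; \tfrac32\|Q_U V^\top + U Q_V^\top\|_F^2 .
\]
It therefore remains to sandwich $\|Q_U V^\top + U Q_V^\top\|_F^2$ between constant multiples of $\|Z\|_F^2 = \|U\|_F^2+\|V\|_F^2$. Note that $\mathcal L_{(Q_U,Q_V)}$ has the nontrivial kernel $\mathcal K$, so the condition number in the statement is understood as the ratio of the largest to the smallest \emph{nonzero} singular value, i.e.\ the extrema defining $\kappa_L$ are taken over $Z\perp\mathcal K$; the sandwich bound is only needed (and only true) there.

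Second, I would decompose $U = Q_U A_1 + U_\perp$ and $V = Q_V B_1 + V_\perp$, with $A_1 = Q_U^\top U$, $B_1 = Q_V^\top V$, and $U_\perp,V_\perp$ having columns orthogonal to $\mathrm{col}(Q_U),\mathrm{col}(Q_V)$ respectively. Substituting and using $Q_U^\top Q_U = I_r$, $Q_V^\top Q_V = I_r$ yields $Q_U V^\top + U Q_V^\top = Q_U(A_1+B_1^\top)Q_V^\top + Q_U V_\perp^\top + U_\perp Q_V^\top$, and a short trace computation shows these three summands are mutually orthogonal in the Frobenius inner product (each cross term contains a factor $Q_V^\top V_\perp = 0$ or $Q_U^\top U_\perp = 0$). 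Hence
\[
\|Q_U V^\top + U Q_V^\top\|_F^2 = \|A_1+B_1^\top\|_F^2 + \|U_\perp\|_F^2 + \|V_\perp\|_F^2, \qquad \|Z\|_F^2 = \|A_1\|_F^2 + \|B_1\|_F^2 + \|U_\perp\|_F^2 + \|V_\perp\|_F^2 .
\]

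Third, I would use the explicit kernel description $\mathcal K = \{(Q_U R, -Q_V R^\top) : R\in\mathbb R^{r\times r}\}$ from \eqref{eq:kernel} to identify $\mathcal K^\perp$: since $\langle Z,(Q_U R,-Q_V R^\top)\rangle = \langle A_1 - B_1^\top, R\rangle$ for all $R$, we have $Z\perp\mathcal K \iff A_1 = B_1^\top$. On $\mathcal K^\perp$ this gives $\|A_1+B_1^\top\|_F^2 = 4\|A_1\|_F^2 = 2\|A_1\|_F^2 + 2\|B_1\|_F^2$, so $\|Z\|_F^2 \leq \|Q_U V^\top + U Q_V^\top\|_F^2 \leq 2\|Z\|_F^2$. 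Combining with the RIP bound above, $\tfrac12\|Z\|_F^2 \leq \|\mathcal L_{(Q_U,Q_V)}(Z)\|^2 \leq 3\|Z\|_F^2$ for all $Z\perp\mathcal K$, so the smallest nonzero singular value of $\mathcal L_{(Q_U,Q_V)}$ is at least $1/\sqrt2$ and the largest is at most $\sqrt3$, whence $\kappa_L \leq \sqrt3/(1/\sqrt2) = \sqrt6$.

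The only genuinely delicate point is the bookkeeping around the kernel: one must invoke \eqref{eq:kernel} (which itself rests on \cref{lem:full_rank} together with Lemma~4.4 of \cite{zilber2022gnmr}), and be careful that the condition number is measured over $\mathcal K^\perp$, since $\|Q_U V^\top + U Q_V^\top\|_F^2$ vanishes on $\mathcal K$ and the lower bound $\geq \|Z\|_F^2$ is exactly what the constraint $A_1 = B_1^\top$ recovers. Everything else — the orthogonal decomposition, the trace manipulations, and the plugging-in of $\delta\leq 1/2$ — is routine.
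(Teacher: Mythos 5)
Your proof is correct and reaches the same constants as the paper, but the route to the lower bound is genuinely different. The paper's proof, like yours, first uses \cref{thm:IMC_RIP} to reduce everything to bounding the singular values of the bilinear map $(U,V)\mapsto Q_UV^\top+UQ_V^\top$; for the upper bound it uses the triangle inequality plus $ab\leq(a^2+b^2)/2$ to get $\sigma_1(\mathcal L_{(Q_U,Q_V)})\leq\sqrt{2(1+\delta)}$, and for the lower bound it simply cites the proof of Lemma~4.2 in \cite{zilber2022gnmr} to assert $\sigma_{\min}(\mathcal L_{(Q_U,Q_V)})\geq\sqrt{1-\delta}\min\{\sigma_r(Q_U),\sigma_r(Q_V)\}=\sqrt{1-\delta}$, concluding $\kappa_L\leq\sqrt{2(1+\delta)/(1-\delta)}\leq\sqrt6$. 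You instead make the argument self-contained: the orthogonal decomposition $U=Q_UA_1+U_\perp$, $V=Q_VB_1+V_\perp$ gives the exact identity $\|Q_UV^\top+UQ_V^\top\|_F^2=\|A_1+B_1^\top\|_F^2+\|U_\perp\|_F^2+\|V_\perp\|_F^2$, and the characterization $Z\perp\mathcal K\iff A_1=B_1^\top$ then yields the two-sided bound $\|Z\|_F^2\leq\|Q_UV^\top+UQ_V^\top\|_F^2\leq2\|Z\|_F^2$ on $\mathcal K^\perp$ in one stroke. What your approach buys is independence from the external lemma (indeed your decomposition also reproves the kernel identity \eqref{eq:kernel} directly, since vanishing of the three orthogonal summands forces $U_\perp=V_\perp=0$ and $A_1=-B_1^\top$) and a structurally sharper upper bound, though on $\mathcal K^\perp$ it collapses to the same factor $2$; what the paper's approach buys is brevity. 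You are also right, and more explicit than the paper, that $\kappa_L$ must be read as the ratio of the largest to the smallest \emph{nonzero} singular value, since $\mathcal L_{(Q_U,Q_V)}$ has the nontrivial kernel $\mathcal K$.
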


We remark that the bound in \eqref{eq:LA_cond} can be slightly improved (up to $\kappa_L \leq \sqrt 2$) at the cost of increasing $|\Omega|$.

\begin{proof}[Proof of \cref{lem:LA_cond}]
By combining assumption \eqref{eq:GNIMC_guarantee_sampleComplexity} and \cref{thm:IMC_RIP}, the sensing operator $\mathcal A$ satisfies a $\min\{d_1, d_2\}$-RIP with a constant $\delta \leq 1/2$. Hence, as in the proof of Lemma~4.2 in \cite{zilber2022gnmr}, the minimal nonzero singular value of $\mathcal L_{(Q_U,Q_V)}$, $\sigma_\text{min}(\mathcal L_{(Q_U,Q_V)})$, is bounded from below by $\sqrt{1-\delta} \min\{\sigma_r(Q_U), \sigma_r(Q_V)\}$ .
Next, by \cref{lem:full_rank}, $U_t$ and $V_t$ are of full column rank. Hence, $Q_U$ and $Q_V$ are isometries, and in particular $\sigma_r(Q_U) = \sigma_r(Q_V) = 1$. We thus obtain $\sigma_\text{min}(\mathcal L_{(Q_U,Q_V)}) \geq \sqrt{1-\delta}$.

We similarly bound from above the maximal singular value, $\sigma_1(\mathcal L_{(Q_U,Q_V)})$.
Let $U \in \mathbb R^{d_1\times r}$, $V \in \mathbb R^{d_2\times r}$. Then
\begin{align*}
\left\|\mathcal L_{(Q_U,Q_V)} \begin{pmatrix} U \\ V \end{pmatrix} \right\|^2 &\stackrel{(a)}{\leq} (1+\delta) \|UQ_V^\top + Q_UV^\top\|_F^2 \stackrel{(b)}\leq (1+\delta) (\|UQ_V^\top\|_F + \|Q_UV^\top\|_F)^2 \\
&\stackrel{(c)}= (1+\delta) (\|U\|_F^2 + 2 \|U\|_F\|V\|_F + \|V\|_F^2) \stackrel{(d)}{\leq} 2(1+\delta) (\|U\|_F^2 + \|V\|_F^2),
\end{align*}
where (a) follows by the RIP of $\mathcal A$, (b) by the triangle inequality, (c) by the fact that $Q_U, Q_V$ are isometries, and (d) by $ab \leq (a^2+b^2)/2$.
Hence, the maximal singular value is bounded from above by $\sqrt{2(1+\delta)}$.
The condition number of $\mathcal L_{(Q_U,Q_V)}$ is thus bounded as
\begin{align*}
\kappa_L \leq \sqrt\frac{2(1+\delta)}{1-\delta} \leq \sqrt{6}
\end{align*}
where the second inequality follows since $\delta \leq 1/2$.
\end{proof}

We are now ready to prove \cref{proposition:time_complexity}.
%The best rank-$r$ approximation of a matrix is its truncated SVD. The calculation of truncated SVD costs $\mathcal O(d_1d_2r)$ \cite[Section~5.1]{halko2011finding}. The total complexity of the initialization is thus $\mathcal O(d_1d_2r \log(\kappa r))$.
\begin{proof}[Proof of \cref{proposition:time_complexity}]
According to our quadratic convergence guarantee, the number of \GNIMC iterations till recovery with a fixed accuracy is constant. Thus, up to a multiplicative constant, the complexity of \GNIMC is the same as the complexity of a single iteration, which we shall now analyze according to its sketch in \cref{alg:efficient_procedure}.

The complexity of step~\ref{alg:efficient_procedure_QR}, which consists of QR factorizations of $d\times r$ matrices, is $\mathcal O(dr^2)$ \cite[Section~5.2.9]{golub2003separable}.
Step~\ref{alg:efficient_procedure_lsqr} is separately analyzed below.
Step~\ref{alg:efficient_procedure_feasible} is dominated by the calculation of the matrix product, which costs $\mathcal O(dr^2)$.
Steps \ref{alg:efficient_procedure_orthognoalProjector}-\ref{alg:efficient_procedure_convert} are dominated by the calculation of the projection of a feasible solution $\begin{psmallmatrix} \Delta U'' \\ \Delta V'' \end{psmallmatrix}$ onto the kernel $\mathcal K$ given in \eqref{eq:kernel}. To this end, we first construct a matrix $K \in \mathbb R^{(d_1+d_2)r\times r^2}$ whose columns are the vectorization of the elements of the orthonormal basis $\mathcal K_B$ given in \eqref{eq:kernel_basis}. Then, to obtain the required projection, we calculate the product $KK^\top z$ where $z \equiv \text{Vec }\begin{psmallmatrix} \Delta U'' \\ \Delta V'' \end{psmallmatrix} \in \mathbb R^{(d_1+d_2)r}$ is the vectorization of the feasible solution in hand.
By first calculating $K^\top z$ and then $K(K^\top z)$ we obtain the complexity of $\mathcal O(dr^3)$.

Finally, we analyze the complexity of step~\ref{alg:efficient_procedure_lsqr}. \GNIMC solves the least squares problem using the standard LSQR algorithm \cite{paige1982lsqr}, which applies the conjugate gradient (CG) method to the normal equations. Each inner iteration of CG is dominated by the calculation of $AQ_U\Delta V^\top B^\top + A\Delta U Q_V^\top B^\top$ at the entries of $\Omega$ \cite[Section~7.7]{paige1982lsqr}.
To obtain a single entry of $AQ_U\Delta V^\top B^\top$, we calculate a single row of $AQ_U$, a single column of $\Delta V^\top B^\top$, and then take the product. Since $Q_U \in \mathbb R^{d_1\times r}$ and $\Delta V^\top \in \mathbb R^{r\times d_2}$, this sums up to $\mathcal O(d_1r + d_2 r + r^2) \sim \mathcal O(dr)$ operations. Similarly, calculating a single entry of $A\Delta U Q_V^\top B^\top$ takes $\mathcal O(dr)$ operations. The complexity of a single iteration of CG is thus $\mathcal O(dr|\Omega|)$.
%Similar to the calculation of $AM_tB^\top = AL_t R_t^\top B^\top$ from the initialization phase, this sums up to $\mathcal O(dr|\Omega|)$.

Next, we analyze the required number of CG iterations.
Let $\kappa_L$ be the condition number of the least squares operator $\mathcal L_{(Q_U, Q_V)}$ as defined in \cref{lem:LA_cond}.
The residual error of CG decays at least linearly with a contraction factor $\frac{\kappa_L-1}{\kappa_L+1}$ \cite[Section~4]{hayami2018convergence}.
By \cref{lem:LA_cond}, $\kappa_L$ is bounded by a constant, and hence the required number of CG iterations is also a constant.
We thus conclude that the total complexity of step~\ref{alg:efficient_procedure_lsqr} is $\mathcal O(dr|\Omega|)$.

%Putting everything together, the overall complexity of \GNIMC is $\mathcal O(d_1 d_2 r \log \kappa + dr^3 + dr|\Omega|)$, where we used the fact that $d_1 d_2 r \log r \leq dr^3$.
Putting everything together, the complexity of \GNIMC is $\mathcal O(dr^3 + dr|\Omega|)$.
One of the conditions of the proposition is the lower bound $|\Omega| \geq 32 \mu^2 d_1d_2 \log n$.
W.l.o.g., we may assume $|\Omega| = 32 \mu^2 d_1d_2 \log n$ (if $|\Omega|$ is larger, we can ignore some of the observed entries). We thus obtain that the complexity of \GNIMC is $\mathcal O(\mu^2 d d_1 d_2 r \log n)$.

\end{proof}

%We remark that if $dr^3 \gg \sqrt \kappa \mu^2 d_1d_2 \log n$, it is better to use the original formulation of \GNIMC, without the procedure described in the previous subsection. The corresponding computational complexity is $\mathcal O(\sqrt \kappa \mu^2 d d_1 d_2 r \log n)$.

\subsection{Comparison to gradient descent}
In this subsection we show that the per-iteration cost of \GNIMC, as analyzed above, is of the same order as that of gradient descent. Denote $E_\Omega = \mathcal P_\Omega(AUV^\top B^\top) - Y$, and let $f(U,V) = \|E_\Omega\|_F^2$ be the objective of the factorized matrix completion problem \eqref{eq:IMC_factorized}. Its gradient is
\begin{align*}
\nabla_U f(U,V) &= 2A^\top E_\Omega B V, \quad
\nabla_V f(U,V) = 2B^\top E_\Omega^\top A U .
\end{align*}
As explained in the analysis of step 2 above, calculating $E_\Omega$ costs $\mathcal O(dr|\Omega|)$. Since $A^\top E_\Omega$ and $B^\top E_\Omega^\top$ have at most $r|\Omega|$ nonzero entries, this is also the cost of calculating $(A^\top E_\Omega) B$ and $(B^\top E_\Omega^\top) A$. Finally, calculating $(A^\top E_\Omega B) V$ and $(B^\top E_\Omega^\top A) U$ is $\mathcal O(d_1d_2 r)$. The per-iteration complexity of gradient descent is thus $\mathcal O(dr|\Omega| + d_1d_2r)$. Under assumption \eqref{eq:GNIMC_guarantee_sampleComplexity} on $|\Omega|$, this coincides with the per-iteration complexity of \GNIMC.

We remark that empirically, the overall complexity (namely, from initialization to convergence) of gradient descent seems so be much larger than that of \GNIMC, see \cref{sec:experiments}. This observation is in agreement with the theoretical analysis of gradient descent in \cite{zhang2018fast}, see their Theorem~5.5.

\section{Proof of \cref{proposition:initialization} (Initialization Guarantee)} \label{sec:proof_initialization}
%\begin{proof}[Proof of \cref{proposition:initialization}]
By its construction, $\begin{psmallmatrix} U_0 \\ V_0 \end{psmallmatrix}$ is perfectly balanced, $U_0^\top U_0 = V_0^\top V_0$, and thus satisfies \eqref{eq:initialization_balance}. Hence, we only need to prove \eqref{eq:initialization_accuracy}.
Let $\mathcal A$ be the sensing operator that corresponds to \eqref{eq:IMC} as defined in \eqref{eq:sensingOperator_IMC}.
By \cref{thm:IMC_RIP}, w.p.~at least $1-2n^{-2}$, the operator $\mathcal A$ satisfies a $\min\{d_1,d_2\}$-RIP with a constant $\delta = 2/5$.
Hence, according to Lemmas~5.1-5.2 in \cite{tu2016low}, or more explicitly Eq.~(5.26) in their extended arXiv version, after $\tau \geq \log (c\sqrt r\kappa) / \log(1/(2\delta)) \geq 5\log (c\sqrt r\kappa)$ iterations of \eqref{eq:initialization_proceudre} we have $\|M_\tau - M^*\|_F \leq \sigma_r^*/c$. Since $A,B$ are isometries, $\|AM_\tau B^\top - X^*\|_F = \|M_\tau - M^*\|_F \leq \sigma_r^*/c$.
Hence $\begin{psmallmatrix} U_0 \\ V_0 \end{psmallmatrix}$ satisfies \eqref{eq:initialization_accuracy} for any $\tau \geq 5\log (c\sqrt r\kappa)$.
%\end{proof}
\qed

%%%%%%%%%%%%%%%%%%%%%%%%%%
%%%%%%%%%%%%%%%%%%%%%%%%%%
\section{Comparison to \cite{ghassemi2018global}}\label{sec:comparisonToGhassemi2018}
\cite{ghassemi2018global} derived results analogous to our \cref{thm:IMC_RIP,thm:IMC_landscape}. However, there are three main differences between the claims.
First, the sample complexity for the RIP result in \cite{ghassemi2018global} is
\begin{align}\label{eq:Ghassemi_sampleComplexity}
\mathcal O(\mu^2 r \max\{d_1,d_2\} \max\{d_1d_2, \log^2 n\} \log(1/\delta) /\delta^2),
\end{align}
compared to our $\mathcal O(\mu^2 d_1d_2 \log(n) / \delta^2)$.
We remark that in their notation, their claimed sample complexity is $\mathcal O(\mu^2 \max\{d_1,d_2\} \bar r^2 r)$ where $\bar r = \max\{r, \log n\}$. However, there seems to be an error in their analysis. Their assumption is that $A/\sqrt{n_1}$ and $B/\sqrt{n_2}$ are isometries, and their corresponding incoherence assumption is $\|A\|_{2,\infty}^2 \leq \mu \bar r$ and $\|B\|_{2,\infty}^2 \leq \mu \bar r$ with a constant $\mu$ \cite[Assumption~1]{ghassemi2018global}. But since $A \in \mathbb R^{n_1\times d_1}$ and $B\in \mathbb R^{n_2\times d_2}$, either the assumption should be $\|A\|_{2,\infty}^2 \leq \mu d_1$ and $\|B\|_{2,\infty}^2 \leq \mu d_2$, or the parameter $\mu$ is not a constant but rather scales with $\max\{d_1,d_2\}/\bar r$. In any case, in our notation their sample complexity is as given in \eqref{eq:Ghassemi_sampleComplexity}.

Second, the RIP result in \cite{ghassemi2018global} is for rank-$\min\{2r, d_1, d_2\}$ matrices, compared to our stronger rank-$\min\{d_1,d_2\}$ RIP. We remark that while their RIP result is formulated as $2r$-RIP, this implicitly assumes $r \ll \min\{d_1,d_2\}$ (see also the discussion in \cref{sec:proof_RIP_consequences}). In the general case, their guarantee is $\min\{2r, d_1, d_2\}$-RIP.

Third, \cite{ghassemi2018global} prove benign optimization landscape for the problem
\begin{align}\label{eq:Ghassemi2018_regularization}
\min_{U,V} \| \mathcal P_\Omega(AUV^\top B^\top) - Y \| + \frac 14 \|UU^\top - VV^\top \|_F^2,
\end{align}
which is an imbalance regularized version of \eqref{eq:IMC_factorized}.
Furthermore, it seems that their result cannot be readily extended to the vanilla IMC problem, as the regularization in \eqref{eq:Ghassemi2018_regularization} is essential in their proof.

%%%%%%%%%%%%%%%%%%%%%%%%%%
%%%%%%%%%%%%%%%%%%%%%%%%%%
\section{Additional Simulation Details}\label{sec:additional_experimental_details}
All algorithms are initialized with the same procedure, which is the spectral initialization, except for \texttt{Maxide} which is not factorization based and is by default initialized with the zero matrix.

\texttt{Maxide} gets as input a regularization parameter, and \texttt{MPPF}, \texttt{GD}, \texttt{RGD} and \texttt{ScaledGD} get a step size parameter.
For each simulation setting, we tuned the optimal parameter out of $10$ logarithmically-scaled values. The permitted values for \texttt{Maxide} were $10^{-5}, ..., 10^{-14}$, for \texttt{MPPF}, \texttt{GD} and \texttt{RGD} were $10^{-2} / \kappa, ..., 10^{\frac 12} / \kappa$ where $\kappa$ is the condition number of $X^*$, and for \texttt{ScaledGD} were $10^{-2}, ..., 10^{\frac 12}$. In all simulations, we verified that the selected value is an interior point of the permitted set, so that it is close to optimal. We remark that the regularization coefficient of \texttt{MPPF} and \texttt{RGD} can also be tuned, but we observed it has a very little effect. %; indeed, the performance of \texttt{GD} and \texttt{RGD} in all simulations are almost identical.
For \GNIMC, in all simulations we identically set the maximal number of iterations for the inner least-squares solver to $10$ if the observed error is low, $\frac{\|\mathcal P_\Omega (X_t) - Y\|_F}{\|Y\|_F} \leq 10^{-4}$, and $1000$ otherwise. This scheme exhibits slightly better performance than setting a constant value of maximal inner iterations (but only marginally). While tuning this value for each simulation independently, as we did for the hyperparameters of the above algorithms, may enhance performance, we preferred to demonstrate the performance of a tuning-free version of \GNIMC.

\iffalse
\GNIMC and \texttt{AltMin} solve an inner least-squares problem at each iteration, with the maximal number of inner iterations set to $1000$. \red{While \texttt{AltMin} does not gain from reducing this number, we remark that \GNIMC in general does. For example, empirically a better scheme is to set this number to $5$ or $10$ as long as $\texttt{rel-RMSE} \leq \epsilon$ for some $\epsilon \in (10^{-4}, 10^{-3})$, and then setting it back to $1000$. However, in the simulations we did not employ such a scheme in order to show that \GNIMC outperforms the other methods without parameter tuning.}
\fi

\iffalse
\red{\texttt{AltMin} in general needs less than $30$ iterations to solve the inner problem up to a certain tolerance (which increases with \texttt{rel-RMSE}), and further limiting this value does not enhance performance. \GNIMC, on the other hand, often requires $\sim 100$ inner iterations at its first outer iterations, if the oversampling ratio is small. While setting the maximal number of inner iterations to $1000$ yields good performance, in the simulations we set this value to $5$ as long as $\texttt{rel-RMSE} \leq 10^{-4}$ to slightly improve runtimes.}
\fi

We used the following two stopping criteria for all methods: (i) small relative observed RMSE, $\frac{\|\mathcal P_\Omega (X_t) - Y\|_F}{\|Y\|_F} \leq \epsilon$, or (ii) small relative estimate change $\frac{\|\mathcal P_\Omega (X_t - X_{t-1})\|_F}{\|\mathcal P_\Omega(X_t)\|_F} \leq \epsilon$.
%Since the observed RMSE is guaranteed to agree with the unobserved RMSE only at sufficiently small values, we set $\epsilon = 10^{-14}$.
In our simulations, we set $\epsilon = 10^{-14}$.
For a fair comparison, we disabled all the other early stopping criteria defined in the algorithms.

%%%%%%%%%%%%%%%%%%%%%%%%%%%%%%%%%%%%%%%%%%%%%%%%%%%%%%%%%%%%%%%%%%%%%%%%%%%%%%%
%%%%%%%%%%%%%%%%%%%%%%%%%%%%%%%%%%%%%%%%%%%%%%%%%%%%%%%%%%%%%%%%%%%%%%%%%%%%%%%
\section{Additional Simulation results}\label{sec:additional_experimental_results}
In subsection \ref{sec:additional_experiments_noise} we demonstrate the stability of \GNIMC to Gaussian noise. In subsection \ref{sec:additional_experiments_conditionNumber} we demonstrate the insensitivity of several algorithms to the condition number of the underlying matrix in terms of the number of observations required for recovery.

%%%%%%%%%%%%%%%%%%%%%%%%%%%%%%%%%%
\subsection{Stability of \GNIMC to noise}\label{sec:additional_experiments_noise}

\begin{figure}[t]
\centering
\includegraphics[width=0.48\linewidth]{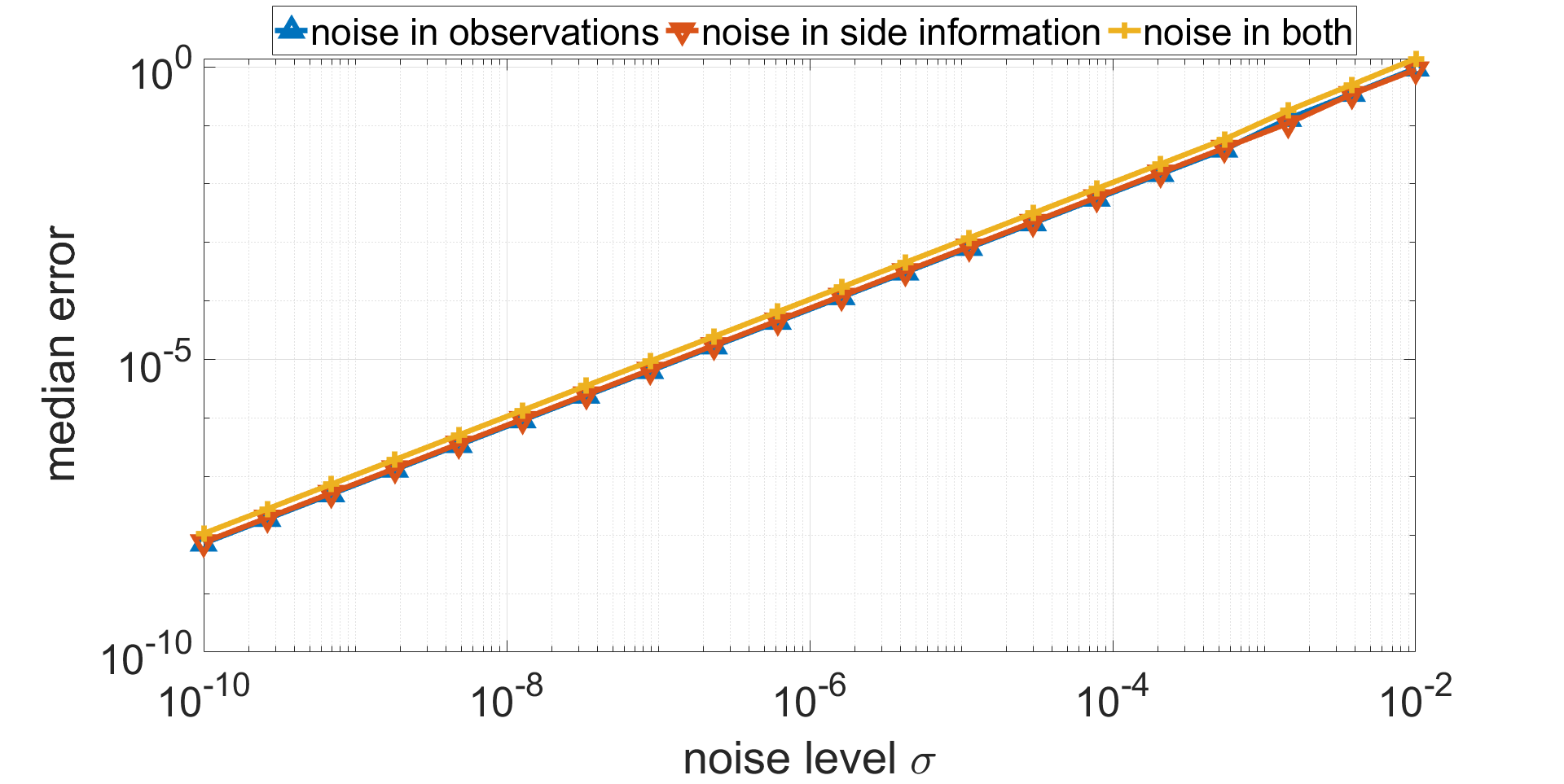}
 \caption{Stability of \GNIMC to additive Gaussian noise, with the same settings as in \cref{fig:convergence_recoveryVsCN}(left).}
\label{fig:noise}
\end{figure}

\Cref{fig:noise} demonstrates the stability of \GNIMC to noise.
In this simulation, either the observed entries $Y$, the side information matrices $A,B$, or both, are corrupted by additive Gaussian noise of zero mean and standard deviation $\sigma$.
As seen in the figure, the error of \GNIMC scales linearly with the noise level $\sigma$.

%%%%%%%%%%%%%%%%%%%%%%%%%%%%%%%%%%
\subsection{Insensitivity of several algorithms to the condition number}\label{sec:additional_experiments_conditionNumber}
In \cref{fig:convergence_recoveryVsCN}(right) we addressed the sensitivity (or insensitivity) of several IMC algorithms to the condition number of $X^*$ in terms of their runtime.
In this subsection, we explore another aspect of sensitivity to the condition number: rather than runtime, we study how the condition number affects the number of observations required for a successful recovery given no time constraints.

\renewcommand{\arraystretch}{1.25}
%\begin{table*}[!htb]
\begin{table}[t]
\caption{Lowest oversampling ratio $\rho$ from which the median of \texttt{rel-RMSE} \eqref{eq:relRMSE} is lower than $10^{-4}$, as a function of the condition number $\kappa$, in the setting $n_1 = n_2 = 1000$, $d_1 = d_2 = 20$, $r = 10$. Each entry shows the median of $50$ independent realizations.}
\label{table:oversamplingVsCN}
\begin{center}
\begin{tabularx}{0.45\textwidth}{c X X X X X}
 \hline
 \diagbox[height=1.5\line]{Alg.}{$\kappa$} & \,\,$1$ & $10$ & $10^2$ & $10^3$ & $10^4$ \\
 \hline
 \texttt{GNIMC} & $1.2$ & $1.1$ & $1.1$ & $1.1$ & $1.1$ \\
 \texttt{AltMin} & $1.1$ & $1.1$ & $1.1$ & $1.1$ & $1.1$ \\
 \texttt{GD} & $1.1$ & $1.1$ & $1.1$ & $1.1$ & $1.1$ \\
 \texttt{RGD} & $1.2$ & $1.2$ & $1.1$ & $1.1$ & $1.1$ \\
  \texttt{ScaledGD} & $1.2$ & $1.1$ & $1.2$ & $1.2$ & $1.1$ \\
 \texttt{Maxide} & $1.4$ & $1.4$ & $1.4$ & $1.4$ & $1.4$ \\
 \hline
\end{tabularx}
\end{center}
\end{table}

In our simulations, we observed the following interesting phenomenon:
For all algorithms, the number of observations $|\Omega|$ required for recovery is independent of the condition number $\kappa$.
We demonstrate this in \cref{table:oversamplingVsCN}, which compares the minimal oversampling ratio, out of the values $\rho = 1.1, 1.2, ...$, required by several algorithms to reach relative RMSE of $10^{-4}$.
Since in this experiment our goal is to explore fundamental recovery abilities rather than speedy performance, the algorithms are given essentially unlimited runtime (in practice, the time limit was set to one CPU hour, and $3$ hours for \texttt{GD} and \texttt{RGD} in the case of $\kappa=10^4$).
The table shows that the minimal oversampling ratio does not increase with $\kappa$; in fact, it sometimes slightly decreases when $\kappa$ is small. We did not include \texttt{MPPF} in \cref{table:oversamplingVsCN} due to its long runtime; however, a limited set of simulations suggests that the same conclusion also holds for it.

Beyond illustrating the abilities of the algorithms, this result demonstrates a basic property of the IMC problem: insensitivity to the condition number.
This result corresponds well with our RIP guarantee for IMC, \cref{thm:IMC_RIP}, as the RIP holds for all matrices of certain ranks regardless of their condition number.
%Another interesting phenomenon, demonstrated by all our figures, is the nearly identical performance of \texttt{GD} and \texttt{RGD}, implying that balanced factors is not essential in IMC.
%These results correspond well with our RIP guarantee in \cref{thm:IMC_RIP}.
%Since imbalance can be viewed as a form of ill-conditioning (small changes in imbalanced factors $U_t, V_t$ may lead to huge changes in the corresponding estimate $U_tV_t^\top$) similar to high condition number of $X^*$, the emerging picture is that problems with RIP are insensitive to various forms of ill-conditioning.
%This is closely related to the insensitivity to $\kappa$ and our RIP result, as also discussed in the next section.

%To the best of our knowledge, in standard matrix completion, almost all available recovery guarantees for efficient methods that optimize a factorized objective require the number of observations $|\Omega|$ to increase at least quadratically with $\kappa$. One exception is \cite{hardt2014fast}, who derived a recovery guarantee for a modified alternating minimization algorithm with only logarithmic dependence on the condition number, but with polynomial dependence on the rank. IMC, on the other hand, enjoys an RIP according to our \cref{thm:IMC_RIP}. 

\end{document}